\renewcommand{\@thesubfigure}{\hskip\subfiglabelskip}
\renewcommand{\baselinestretch} {1.5}
\makeatletter \setcounter{page}{1}
\def\singlespace{\def\baselinestretch{1}\@normalsize}
\newlength\savewidth
\newtheorem{theorem}{Theorem}
\newtheorem{assumption}{Assumption}
\newtheorem{lemma}{Lemma}
\newtheorem{remark}{Remark}
\newtheorem{corollary}{Corollary}
\newtheorem{example}{Example}
\newtheorem{definition}{Definition}
\newcommand\barbelow[1]{\stackunder[1.2pt]{$#1$}{\rule{.8ex}{.075ex}}}
\DeclareMathOperator*{\argmin}{argmin}
\newcommand{\bg}{\begin{eqnarray}}
\newcommand{\ed}{\end{eqnarray}}
\newcommand{\bgn}{\begin{eqnarray*}}
\newcommand{\edn}{\end{eqnarray*}}
\renewcommand{\@thesubfigure}{\hskip\subfiglabelskip}
\def\singlespace{\def\baselinestretch{1}\@normalsize}
\date{\today}
\title{On damage of interpolation to adversarial robustness in regression}
\author{Jingfu Peng}
\author{Yuhong Yang}
\affil{Yau Mathematical Sciences Center, Tsinghua University}
\begin{document}
\begin{sloppypar}

\maketitle

\begin{abstract}

Deep neural networks (DNNs) typically involve a large number of parameters and are trained to achieve zero or near-zero training error. Despite such interpolation, they often exhibit strong generalization performance on unseen data, a phenomenon that has motivated extensive theoretical investigations. Comforting results show that interpolation indeed may not affect the minimax rate of convergence under the squared error loss. In the mean time, DNNs are well known to be highly vulnerable to adversarial perturbations in future inputs. A natural question then arises: Can interpolation also escape from suboptimal performance under a future $X$-attack? In this paper, we investigate the adversarial robustness of interpolating estimators in a framework of nonparametric regression. A finding is that interpolating estimators must be suboptimal even under a subtle future $X$-attack, and achieving perfect fitting can substantially damage their robustness. An interesting phenomenon in the high interpolation regime, which we term the curse of simple size, is also revealed and discussed. Numerical experiments support our theoretical findings.

\end{abstract}
\textbf{KEY WORDS: Adversarial robustness, minimax risk, interpolation, over-parameterization, benign 
overfitting.}

\bigskip
\baselineskip=18pt

\section{Introduction}\label{sec:intro}

Deep neural networks (DNNs) have demonstrated remarkable performance across a wide range of tasks. Despite typically involving a large number of parameters and being trained to achieve zero or near-zero training error \citep[see, e.g.,][]{Gupta2015, Zhang2016understanding, Ma2018, Chaudhari_2019, Du2019, Allen-Zhu2019, Kawaguchi2019}, DNNs often exhibit low generalization error \citep{neyshabur2014search, Zhang2016understanding, Belkin2018To, Belkin2019Reconciling}. This phenomenon appears to contradict the classical statistical perspective that regards interpolation as a proxy for overfitting and, consequently, a source of poor generalization to unseen data. 

Recent years have witnessed a surge in research on the generalization properties of interpolating estimators. In the contexts of linear regression and kernel regression, the phenomena of \emph{double descent} in risk curves and \emph{benign overfitting} of interpolating estimators have been studied \citep[see, e.g.,][]{Belkin2020Two, Bartlett2020Benign, Liang2020Just, Chinot2020robustness, Muthukumar2020, Hastie2022Surprises, Mei2022, Tsigler2023, Lecue2024geometrical, Cheng2024ridge}. In nonparametric regression, a line of research examines whether interpolating estimators can attain consistency and minimax optimal rates in the regular $L_2$-risk \citep{DEVROYE1998209, Belkin2018Overfitting, Belkin2019, Xing2022Benefit, Haas2023, Arnould2023, Chhor2024Benign, Mucke2025}. These findings suggest that interpolation is compatible with minimax optimality under standard risk; in fact, certain kernel ridgeless regression estimators \citep{Haas2023} and local nonparametric rules \citep{Belkin2018Overfitting, Belkin2019, Chhor2024Benign} can simultaneously interpolate the training data while achieving minimax optimality and adaptivity. 

However, while DNNs exhibit impressive performance under normal examples, their lack of robustness to adversarial examples has become a pressing concern. It is well known that subtle, visually imperceptible perturbations to future input data can mislead state-of-the-art deep learning systems into making incorrect predictions \citep{Biggio2013, Szegedy2014, Goodfellow2015, Su2019}. It has also been observed that this lack of robustness is a ubiquitous property among different network architectures and appears in various application domains, including traffic sign recognition \citep{Eykholt2018}, clinical trials \citep{Papangelou2019}, and large language models \citep{zou2023universal}. Given that many of these applications are safety-critical and can directly impact human well-being, safeguarding deep learning models against future adversarial $X$-attacks has become a pressing challenge. Although various \emph{adversarial training} methods have been proposed to enhance robustness \citep[see, e.g.,][]{Goodfellow2015, Madry2018Towards, Wong2018, Raghunathan2018Certified, Cohen2019Certified, Zhang2019, Wang2019Language}, the theoretical mechanisms underlying the fragility of DNNs to future $X$-attacks remain largely elusive. 

The interpolation phenomenon, together with the vulnerability of DNNs under future $X$-attacks, motivates the following fundamental questions: Is interpolation one of the potential sources of the lack of robustness in DNNs? Can a method that interpolates or nearly interpolates the training data still achieve theoretically provable robustness against future $X$-attacks? 

The potential links between interpolation and adversarial robustness has been noted in a limited body of literature. In linear regression and classification tasks, \citet{Donhauser2021} derived an asymptotic expression for the adversarial risk of the \emph{ridgeless} estimator (i.e., the minimum-$\ell_2$-norm interpolator) and showed that ridgeless regression has higher adversarial risk than its regularized counterparts. In a similar setting, \citet{Hao2024surprising} showed that in the benign overfitting regimes where the standard risk of the ridgeless estimator is consistent, its adversarial risk can diverge to infinity. The above results are highly dependent on the specific data distribution and linear model structures. In addition, by focusing on specific interpolators, their theoretical results do not rule out the possibility that other interpolating estimators may achieve superior robust generalization.

\subsection{Contributions}

In this paper, we investigate the impact of interpolation to adversarial robustness in a nonparametric regression setting, where the true regression function belongs to a H\"{o}lder-smooth class. We evaluate the minimax adversarial $L_2$-risk over the class of interpolating estimators, which encompasses all estimators that fit the training data to certain degree. The minimax results obtained in this paper are algorithm-independent, which characterize the optimal adversarial robustness that any interpolator can achieve. This advances the prior work that focuses on the risk consistency of some specific interpolators. 

By comparing the minimax adversarial rate of interpolators with the optimal adversarial rate, we obtain several important conclusions. First, when the interpolation degree is low, the adversarial risk of interpolators matches (in order) the optimal adversarial rate. Second, when the interpolation degree is high (including the exact fitting case), the adversarial risk of interpolators becomes significantly larger than the optimal rates and may fail to converge even under much smaller adversarial perturbations. This demonstrates that interpolation is provably suboptimal and can severely damage the adversarial robustness. Third, we observe an interesting phenomenon referred to as the \emph{curse of sample size}, where increasing the amount of training data deteriorates the adversarial robustness of interpolators.  Simulation results and real data example further support these findings. 

Our minimax risk theory also provides insights into the lack of adversarial robustness in DNNs. We show that any interpolating method, including over-parameterized DNNs that nearly achieve zero training error, cannot attain the optimal robustness. This limitation reflects an inherent deficiency of interpolators and suggests that interpolation may be a key factor underlying the lack of robustness commonly observed in deep learning. To the best of our knowledge, this peculiar fact has not yet been deeply investigated in the rapidly growing literature related to interpolation.

\subsection{Other related work}\label{subsec:related}

\emph{Minimax risk theory in nonparametric regression.} Minimax risk is one of the most widely studied theoretical quantities for benchmarking fundamental statistical hardness and establishing the optimality of procedures \citep[see, e.g.,][]{ibragimov1982bounds, Stone1982, Le_Cambook, birge1986estimating, yu1997assouad, Yang1999Information, Tsybakov2009}. In nonparametric regression, the minimax optimality of neural network estimators has been established \citep[see, e.g.,][]{Schmidt-Hieber2020, kohler2021rate}. Various interpolating estimators have also been shown to attain the minimax optimal rates \citep[see, e.g.,][]{Belkin2018Overfitting, Belkin2019, Haas2023, Chhor2024Benign}. The above works focus on standard risk without adversarial attacks. In the setting where future inputs are subject to adversarial perturbations, the optimal minimax rates have been established in \cite{Peng2024, Peng2025aos}.

\emph{Robust interpolation problem.} \citet{Bubeck2011, Bubeck2023} studied the problem of interpolating training data using two-layer neural networks while keeping the Lipschitz constant of the network bounded. Under certain conditions on the input distribution, they established lower bounds on the Lipschitz constant of any interpolating neural network, demonstrating that over-parameterization is necessary to achieve bounded Lipschitz constants. The similar problem has been further explored by \citet{Wu2023law} and \citet{Das2025direct}, who extended the analysis to more general input distributions and loss functions, respectively. These works focus on lower bounding the Lipschitz constant as a proxy for robustness of specific estimators, whereas our paper establishes both minimax lower and upper bounds on the adversarial risk for all interpolating estimators. For classification, adversarial robustness of interpolating classifiers has been studied in \cite{Belkin2018Overfitting} and \cite{Sanyal2021}.

\emph{Adversarial training of over-parameterized linear model.} In linear regression setting, \cite{Javanmard2020} and \cite{Hassani2024} derived asymptotic risk formulas for adversarially trained estimators by minimizing the empirical adversarial loss over the class of linear functions and two-layer neural networks, respectively. Their results show that in the standard setting, the global minimum of the asymptotic risk occurs in the over-parameterized regime. However, as future $X$-attacks strengthen, over-parameterized models fail to attain the minimum of adversarial risk, a phenomenon they refer to as the \emph{curse of over-parameterization on robustness}. The results in \cite{Javanmard2020, Hassani2024} focus on specific regression estimators with Gaussian assumptions on regressor and random error, and the adversarially trained estimator considered there does not exhibit the interpolation property that is central to understanding the robustness of interpolating DNNs in our work.

\subsection{Outline}

The rest of the paper is organized as follows. We formally introduce the setup of nonparametric regression and 
define interpolating estimators in Section~\ref{sec:setup}. The main results of this paper, which characterize the minimax adversarial risk of interpolating estimators, are presented in Section~\ref{sec:main}. In Section~\ref{sec:simu}, we conduct numerical experiments to verify our theoretical findings. Finally, we discuss 
notable future research directions in Section~\ref{sec:disc}. The proofs of the theoretical results are deferred to the Appendix.

\section{Problem setup}\label{sec:setup}

\subsection{Basic settings}\label{subsec:basic}

Let $(X, Y)$ be a random pair in $\mathcal{Z} \triangleq \mathcal{X} \times \mathcal{Y}$, where $\mathcal{X} \triangleq [0,1]^d$ is the input space, and $\mathcal{Y} \subseteq \mathbb{R}$ is the output space. Denote by $\mathbb{P}_{XY}$ the joint distribution of $(X, Y)$. In this paper, we model the data distribution $\mathbb{P}_{XY}$ in the following nonparametric regression framework
\begin{equation}\label{eq:model}
  Y = f^*(X) + \xi,
\end{equation}
where $X$ follows an unknown marginal distribution $\mathbb{P}_X$ on $\mathcal{X}$, $f^*(x) \triangleq \mathbb{E}(Y|X=x)$ is the regression function, and the random error $\xi$ is $N(0, \sigma^2)$ or sub-Gaussian conditioned on $X$.

Throughout this paper, we impose the following assumption on $\mathbb{P}_X$ for simplicity.
\begin{assumption}\label{ass:bounded_density}
  The marginal distribution $\mathbb{P}_X$ is absolutely continuous with respect to the Lebesgue measure $\lambda$ on $\mathcal{X}$, with a density function satisfying
  \begin{equation*}
    \barbelow{\mu} \leq \frac{d\mathbb{P}_X}{d \lambda}(x) \leq \bar{\mu}
  \end{equation*}
  for all $x \in \mathcal{X}$, where $\barbelow{\mu}$ and $\bar{\mu}$ are two positive constants.
\end{assumption}
This assumption ensures that the input observations are evenly distributed over the compact support, a common assumption in many nonparametric estimation problems \citep[see, e.g.,][]{Stone1982, Audibert2007, Chhor2024Benign}. 

Suppose we are given a sample $Z_n \triangleq \{ (X_i, Y_i),i=1\ldots,n \}$, where $(X_i, Y_i)$ are i.i.d. draws from the distribution $\mathbb{P}_{XY}$ defined by the regression model (\ref{eq:model}), and the random errors are $\xi_i = Y_i - f^*(X_i), i=1,\ldots,n$. Given any $x \in \mathcal{X}$, the objective is to construct an estimator $\hat{f}(x) = \hat{f}(x;Z_n)$ for $f^*(x)$, where $\hat{f}(x;Z_n)$ is a measurable function. In the standard nonparametric framework, the performance of $\hat{f}$ is usually assessed by the $L_2$-risk
\begin{equation*}
  R(\hat{f}, f^*) \triangleq \mathbb{E}\big[\hat{f}(X) - f^*(X)\big]^2,
\end{equation*}
where the expectation $\mathbb{E}$ is taken with respect to both the training sample $Z_n$ and a new input $X$.

Consider the following H\"{o}lder class. Let $\beta >0$, $L >0$, and $\lfloor \beta \rfloor$ denote the largest positive integer strictly smaller than $\beta$.
\begin{definition}
  The class of $(\beta,L)$-H\"{o}lder smooth functions is defined as
\begin{equation}\label{eq:holder}
\begin{split}
   \mathcal{H}(\beta, L) \triangleq & \left\{ f: \mathcal{X} \to \mathbb{R}: \max _{0 \leq|s| 
   \leq\lfloor\beta\rfloor} \sup _{x \in\mathcal{X}}\left|D^s f(x)\right|+\max _{|s|=\lfloor\beta\rfloor} \sup 
   _{x, z \in\mathcal{X}} \frac{\left|D^s f(x)-D^s f(z)\right|}{\|x-z\|^{\beta-\lfloor\beta\rfloor}} \leq L 
   \right\},
\end{split}
\end{equation}
where we used multi-index notation $D^s\triangleq \frac{\partial^{s_1+\cdots + s_d}}{\partial^{s_1} \cdots \partial^{s_d}}$ with $s=(s_1, \ldots, s_d) \in \mathbb{N}^d$ and $|s|\triangleq \sum_{i=1}^{d}|s_i|$. The norm $\|\cdot \|$ denotes the Euclidean norm.
\end{definition}
Our analysis will focus on regression function estimation under the condition that the true regression function $f^*$ belongs to the H\"{o}lder class $\mathcal{H}(\beta, L)$ for some $0<\beta, L < \infty$. Beyond the above definitions, we also introduce some notations that will be used in this paper.

For $1 \leq p \leq \infty$, we define $B_p(x, r) \triangleq \{z : \|z - x\|_p \leq r\}$ as the $\ell_p$-ball centered at $x$ with radius $r$, where $\|\cdot \|_p$ denotes the $\ell_p$ norm. For any positive sequences $a_n$ and $b_n$, we write $a_n = O(b_n)$, $b_n = \Omega(a_n)$, or $a_n \lesssim b_n$ if there exist constants $C > 0$ and $N > 0$ such that for all $n \geq N$, $a_n \leq C b_n$. If both $a_n = O(b_n)$ and $b_n = O(a_n)$ hold, we denote this as $a_n \asymp b_n$ or $a_n = \Theta(b_n)$. We write $a_n = o(b_n)$ and $a_n \sim b_n$ for $\lim_{n \to \infty}a_n/b_n=0$ and $\lim_{n \to \infty}a_n/b_n=1$, respectively. Define $x_+ \triangleq \max(x, 0)$. In addition, let $\phi$ denote the density function of the standard univariate Gaussian distribution, $\Phi$ the corresponding cumulative distribution function, and $\tilde{\Phi} \triangleq 1 - \Phi$ its right-tail complement.

\subsection{Adversarial risks}\label{subsec:adv_risk}

In this paper, we study the regression estimation problem under future $X$-attacks. We measure the robustness of an estimator using the following adversarial $L_2$-risk:
\begin{equation}\label{eq:adv_risk}
  R_r(\hat{f}, f^*) \triangleq \mathbb{E}\sup_{X' \in \mathcal{X} \cap B_p(X,r)}\big[\hat{f}(X') - f^*(X)\big]^2,
\end{equation}
where $1 \leq p \leq \infty$, and $r$ is the radius of the perturbation ball $B_p(X,r)$ satisfying $r = O(1)$. The measure \eqref{eq:adv_risk} has been widely adopted in the adversarial learning literature. Specifically, it has been used to evaluate various adversarially trained estimators \citep{Xing2021, Scetbon2023, Attias2023Real-Valued, Peng2024, Peng2025aos}, to investigate the effect of interpolation on adversarial robustness \citep{Donhauser2021, Hao2024surprising}, and it is also closely related to the objective of adversarial regularization \citep{Zhang2019}.

Note that when $r = 0$, the adversarial risk (\ref{eq:adv_risk}) reduces to the standard $L_2$-risk, i.e., $R_0(\hat{f}, f^*) = R(\hat{f}, f^*)$, which corresponds to the setting without an adversary. The minimax rates of convergence over Hölder and similar classes have been established in various settings \citep[see, e.g.,][]{Stone1982, Yang1999Information, Gyorfi2002, Tsybakov2009}. In the nonparametric regression setting we consider, it is well-known that 
\begin{equation}\label{eq:minimax_1}
  \inf_{\hat{f} }\sup_{f^* \in \mathcal{H}(\beta, L)} R(\hat{f}, f^*) \asymp n^{-\frac{2\beta}{2\beta+d}},
\end{equation}
where the infimum is taken over all measurable estimators. In the general setting where $r \geq 0$, \cite{Peng2025aos} established the minimax rate under the adversarial $L_2$-risk:
\begin{equation}\label{eq:minimax_2}
  \inf_{\hat{f} }\sup_{f^* \in \mathcal{H}(\beta, L)} R_r(\hat{f}, f^*) \asymp r^{2(1 \wedge \beta)} + n^{-\frac{2\beta}{2\beta+d}},
\end{equation}
which precisely characterizes the impact of the adversarial $\ell_p$-ball attack. Note that the order of the adversarial minimax risk (\ref{eq:minimax_2}) can be much worse than (\ref{eq:minimax_1}) as $r$ becomes larger.

\subsection{Interpolating estimator}\label{subsec:interpolation}

The minimax results in (\ref{eq:minimax_1})–(\ref{eq:minimax_2}) are established for all estimators. Motivated by the observation that over-parameterized DNNs typically fit the training data exactly, one naturally wonders if interpolation affects the rate of convergence when subject to the future $X$-attack. Given the data $(X_i, Y_i)_{i=1}^n$, we define
\begin{equation}\label{eq:qinghe1}
  \mathcal{I}(\delta) \triangleq \left\{ f:\max_{1 \leq i \leq n}|f(X_i) - Y_i| \leq \delta\text{\;and $f$ 
  is measurable w.r.t. $x$ and $Z_n$}\right\}
\end{equation}
to be the class of $\delta$-interpolation functions, where $\delta = \delta_n \geq 0$ measures the degree of interpolation.

When $\delta = 0$, the set $\mathcal{I}(0)$ includes the completely interpolating functions discussed in \cite{Belkin2018Overfitting, Belkin2019, Xing2022Benefit, Chhor2024Benign}. Over this class, it has been shown in \cite{Chhor2024Benign} that for all $0<\beta, L < \infty$,
\begin{equation}\label{eq:minimax_3}
  \inf_{\hat{f} \in \mathcal{I}(0)}\sup_{f^* \in \mathcal{H}(\beta, L)} R(\hat{f}, f^*) \asymp 
  n^{-\frac{2\beta}{2\beta+d}},
\end{equation}
which demonstrates that the optimal minimax rate of interpolating estimators under the standard squared $L_2$-risk remains the same as that without the interpolation constraint. This shows that interpolation while suggesting overfitting actually does not damage the classical minimax optimality. In fact, the optimal rate in (\ref{eq:minimax_3}) can be achieved through local averaging regression with a singular kernel \citep{Belkin2018Overfitting, Belkin2019, Chhor2024Benign}. In a related RKHS regression setting, \cite{Haas2023} proved that the optimal rate in the Sobolev class can be attained by a kernel ridgeless estimator with a spiky-smooth kernel function. 

However, for the adversarial risk $R_r(\hat{f}, f^*)$ of the estimator $\hat{f}$ in $\mathcal{I}(\delta)$ with $\delta> 0$ and $r>0$, the following essential questions remain to be understood:
\begin{description}
  \item[Q1.] What is the order of the minimax adversarial risk of $\delta$-interpolating estimators, i.e., 
      $\inf_{\hat{f} \in \mathcal{I}(\delta)}\sup_{f^* \in \mathcal{H}(\beta, L)} R_r(\hat{f}, f^*)$?
  \item[Q2.] How do the attack magnitude $r$, smoothness level $\beta$, and interpolation degree $\delta$ 
      interplay to affect the adversarial minimax rate?
\end{description}
Answers to these questions may have important implications for the adversarial robustness of interpolating 
estimators and could offer insights into the robustness of DNNs.

\section{Main results}\label{sec:main}

\subsection{Minimax rates}

In this subsection, we first derive a lower bound on the minimax adversarial rate for $\delta$-interpolating estimation. Since the minimax lower bound focuses on the negative side of the problem (that is, the limit attainable by any $\delta$-interpolator), we work under the Gaussian assumption of the random errors. 

\begin{theorem}[Lower bound]\label{theo:lower}
   Suppose Assumption~\ref{ass:bounded_density} holds and $\xi_i$ are i.i.d. $N(0,\sigma^2)$. We have the following minimax lower bound:
  \begin{equation}\label{eq:lower_bound}
    \inf_{\hat{f}\in \mathcal{I}(\delta)}\sup_{f^* \in \mathcal{H}(\beta, L)}R_{r}(\hat{f}, f^*) \gtrsim r^{2(1 
    \wedge \beta)} + n^{-\frac{2\beta}{2\beta+d}} +\int_{\mathcal{X}}\mathbb{E}\max_{i \in \mathcal{S}_p(x,r)} 
    (|\xi_{i}| - \delta)_+^2dx,
  \end{equation}
  where $\mathcal{S}_p(x,r) \triangleq \{ i: X_i \in \mathcal{X} \cap B_p(x,r) \}$ denotes the index set of points within the adversarial ball $B_p(x,r)$.

\end{theorem}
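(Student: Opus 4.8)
The plan is to produce the three summands in \eqref{eq:lower_bound} from two essentially independent arguments and then combine them, using that the maximum of finitely many nonnegative lower bounds is at least half their sum. The first two summands, $r^{2(1\wedge\beta)} + n^{-\frac{2\beta}{2\beta+d}}$, should come for free: every $\delta$-interpolator is in particular a measurable estimator, so $\mathcal{I}(\delta)$ is a subclass of all measurable estimators and $\inf_{\hat f\in\mathcal{I}(\delta)}\sup_{f^*\in\mathcal{H}(\beta,L)}R_r(\hat f,f^*)\ge\inf_{\hat f}\sup_{f^*\in\mathcal{H}(\beta,L)}R_r(\hat f,f^*)$, which is of order $r^{2(1\wedge\beta)}+n^{-\frac{2\beta}{2\beta+d}}$ by the unconstrained adversarial minimax rate \eqref{eq:minimax_2}. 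No extra work is needed here.

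The heart of the proof is the third, interpolation-specific summand. Since that term does not involve $f^*$ at all, I would not invoke any Assouad-- or Fano-type prior over $\mathcal{H}(\beta,L)$; a single fixed regression function should suffice, and I would take $f^*\equiv 0\in\mathcal{H}(\beta,L)$. Then $Y_i=\xi_i$, and the defining constraint of $\mathcal{I}(\delta)$, namely $|\hat f(X_i)-Y_i|\le\delta$, forces the pointwise bound $|\hat f(X_i)|=|\hat f(X_i)-f^*(X_i)|\ge(|\xi_i|-\delta)_+$ at every design point. The key observation is that for a fresh input $X$ and any $i\in\mathcal{S}_p(X,r)$ the training point $X_i$ itself lies in $\mathcal{X}\cap B_p(X,r)$, i.e.\ it is a legitimate adversarial perturbation of $X$; so the adversary may simply set $X'=X_i$, which gives $\sup_{X'\in\mathcal{X}\cap B_p(X,r)}\hat f(X')^2\ge\max_{i\in\mathcal{S}_p(X,r)}(|\xi_i|-\delta)_+^2$ (reading the maximum over an empty index set as $0$). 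Taking expectation over the independent pair $(X,Z_n)$, using Fubini and then replacing $\mathbb{P}_X$ by Lebesgue measure through the lower density bound $d\mathbb{P}_X/d\lambda\ge\barbelow{\mu}$ of Assumption~\ref{ass:bounded_density}, yields $R_r(\hat f,0)\gtrsim\int_{\mathcal{X}}\mathbb{E}\max_{i\in\mathcal{S}_p(x,r)}(|\xi_i|-\delta)_+^2\,dx$ for every $\hat f\in\mathcal{I}(\delta)$; passing to $\sup_{f^*}$ and then $\inf_{\hat f\in\mathcal{I}(\delta)}$ leaves the right-hand side unchanged.

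Combining the two bounds and absorbing the factor $1/2$ into the $\gtrsim$ finishes the argument. The main (and really the only) obstacle is conceptual and sits in the second step: one must recognize that the robustness loss of an interpolator is governed by the noise realized on the training sample, not by the unknown signal, so that the worst-case perturbation is just "snap the test point onto a nearby overfitted design point $X_i$", after which the elementary inequality $|\hat f(X_i)|\ge(|\xi_i|-\delta)_+$ does all the work. Everything else — Fubini, the empty-index convention, the density bound, and merging the two lower bounds — is routine bookkeeping. I would also note in passing that Gaussianity of $\xi_i$ plays no role in this argument beyond whatever it contributes to the cited rate \eqref{eq:minimax_2}; it is retained for uniformity with the explicit evaluation of $\mathbb{E}\max_{i\in\mathcal{S}_p(x,r)}(|\xi_i|-\delta)_+^2$ carried out later.
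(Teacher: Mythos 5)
Your proposal is correct and follows essentially the same route as the paper's proof: the first two terms come from the inclusion $\mathcal{I}(\delta)\subseteq\{\text{all measurable estimators}\}$ together with the unconstrained adversarial minimax rate, and the third term comes from restricting to $f^*\equiv 0$, using the interpolation constraint to force $|\hat f(X_i)|\ge(|\xi_i|-\delta)_+$, letting the adversary set $X'=X_i$ for $i\in\mathcal{S}_p(X,r)$, and finishing with the density lower bound and Fubini. No gaps.
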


The result in Theorem~\ref{theo:lower} has several implications, both practical and theoretical. First, the minimax lower bound (\ref{eq:lower_bound}) serves as a measure of the fundamental limits of interpolating estimators, illustrating the potential damage of interpolation to adversarial robustness. Comparing it with the optimal adversarial rate (\ref{eq:minimax_2}), we observe that interpolation introduces an additional term, $\int_{\mathcal{X}}\mathbb{E}\max_{i \in \mathcal{S}_p(x,r)} (|\xi_{i}| - \delta)_+^2dx$, which depends on the interpolation degree $\delta$ and the attack magnitude $r$. Useful bounds for this term will be established in Section~\ref{subsec:minimax_rate}. Second, (\ref{eq:lower_bound}) provides a benchmark for the best possible robust performance of any interpolating estimator under future $X$-attacks. In particular, if an interpolating estimator attains this benchmark, it can be regarded as the optimal robust interpolating rule.

We next show that the minimax lower bound in Theorem~\ref{theo:lower} is attainable under more general sub-Gaussian assumption on the random errors. Our approach constructs a set of $\delta$-interpolating rules, built upon a regression estimator with a controllable \emph{localized sup-norm risk}. The procedure consists of the following two steps: 
\begin{description}

  \item[Step 1:] Construct a measurable regression estimator $\hat{f}$ such that
    \begin{equation}\label{eq:local_risk}
    \mathbb{E}\sup_{x' \in \mathcal{X} \cap B_p(x,r)} \big[ \hat{f}(x') - f^*(x') \big]^2 \lesssim r^{2\beta} + 
    n^{-\frac{2\beta}{2\beta+d}}
  \end{equation}
  holds for any $x \in \mathcal{X}$, $ 0 \leq r \lesssim 1$, and $f^* \in \mathcal{H}(\beta, L)$. 
  
  \item[Step 2:] Let $0 \leq \tau \lesssim 1$. For any $x \in \mathcal{X}$, recall the index set $\mathcal{S}_p(x,\tau) \triangleq \{ i: X_i \in \mathcal{X} \cap B_p(x,\tau) \}$. Based on $\hat{f}$, consider the interpolating estimators in $\mathcal{I}(\delta)$ satisfying
  \begin{equation}\label{eq:lpe_interpolated}
  \hat{f}_{\delta,\tau}(x) = \left\{\begin{array}{ll}
\hat{f}(x) &\quad x \in \mathcal{X} \setminus \cup_{i = 1}^n B_p(X_i, \tau), \\
\hat{g}(x) & \quad x \in  \mathcal{X} \cap \left\{ \cup_{i = 1}^n B_p(X_i, \tau) \setminus 
\{X_i\}_{i=1}^n \right\}, \\
\hat{f}(X_i)+\psi_{\delta}\big(Y_i - \hat{f}(X_i)\big) &\quad x = 
X_i,\; i =1,\ldots,n, 
\end{array}\right.
\end{equation}
where $\psi_{\delta}(t) \triangleq \mathrm{sgn}(t)(|t|-\delta)_+$, and $\hat{g}(x)$ is any measurable function (not necessarily continuous or differentiable) satisfying  
\begin{equation}\label{eq:lianjie}
  \big| \hat{g}(x) - \hat{f}(x) \big| \leq \max_{i \in \mathcal{S}_p(x,\tau)} \big[ | Y_i - \hat{f}(X_i) | - 
  \delta \big]_+. 
\end{equation}
\end{description}

The condition on the localized sup-norm risk in (\ref{eq:local_risk}) is quite mild and can be achieved by the piecewise local polynomial estimator constructed in \cite{Peng2025aos} under sub-Gaussian assumption. It is also conjectured that other local nonparametric estimators \citep[see, e.g.,][]{Stone1982, BERTIN2004225, GAIFFAS2007782, Tsybakov2009} and series least squares estimators \citep[see, e.g.,][]{CHEN2015447, BELLONI2015345} may also achieve \eqref{eq:local_risk} provided their regularization parameters are chosen properly.

The interpolators defined in (\ref{eq:lpe_interpolated}) involve two key parameters $\delta$ and $\tau$, which govern the interpolation degree and the estimator's behavior within the local neighborhoods of $X_i$, respectively. Specifically, at the training data points $X_i$, $\hat{f}_{\delta,\tau}(X_i)$ can be viewed as a soft-thresholding-type rule applied to the observed data points, which shrinks $Y_i$ towards the base estimator value $\hat{f}(X_i)$ by $\delta$. Within the local neighborhood $B_p(X_i, \tau)$, the condition (\ref{eq:lianjie}) ensures that the difference between the interpolator and the base estimator is bounded by $\max_{i \in \mathcal{S}_p(x,\tau)} \big[ | Y_i - \hat{f}(X_i) | - \delta \big]_+$. Outside all such neighborhoods of $X_i$, $\hat{f}_{\delta,\tau}(x)$ coincides with the base estimator $\hat{f}(x)$. It is easily verified that $\hat{f}_{\delta,\tau}$ is measurable and satisfies the interpolation restriction in (\ref{eq:qinghe1}) with equality. Theorem~\ref{theo:upper} below provides a general upper bound on the adversarial risk of $\hat{f}_{\delta,\tau}$. 

\begin{theorem}[Upper bound]\label{theo:upper}
Suppose Assumption~\ref{ass:bounded_density} holds, and $\xi_i$ are independent sub-Gaussian random variables. 
Then, the adversarial risk of $\hat{f}_{\delta,\tau}$ satisfies
\begin{equation}\label{eq:yuandan3}
  \sup_{f^* \in \mathcal{H}(\beta, L)}R_{r}(\hat{f}_{\delta,\tau}, f^*) \lesssim (r+\tau)^{2(1\wedge \beta)} + 
  n^{-\frac{2\beta}{2\beta+d}} +\int_{\mathcal{X}}\mathbb{E}\max_{i \in \mathcal{S}_p(x,r + \tau)} (|\xi_{i}| - 
  \delta)_+^2dx.
\end{equation}
\end{theorem}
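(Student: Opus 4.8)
The plan is to bound $\sup_{X'\in\mathcal{X}\cap B_p(X,r)}\big[\hat{f}_{\delta,\tau}(X')-f^*(X)\big]^2$ pointwise in the test input $X$ and the sample $Z_n$, and then take expectations. For every $X'\in\mathcal{X}\cap B_p(X,r)$ I would first split $\big[\hat{f}_{\delta,\tau}(X')-f^*(X)\big]^2\le 2\big[\hat{f}_{\delta,\tau}(X')-f^*(X')\big]^2+2\big[f^*(X')-f^*(X)\big]^2$. The second term is pure approximation error: since $\|X'-X\|_2\le d^{1/2}\|X'-X\|_p\le d^{1/2}r$ and $f^*\in\mathcal{H}(\beta,L)$ (using boundedness of $f^*$ when $\beta\le1$ and of its gradient when $\beta>1$), one gets $|f^*(X')-f^*(X)|\lesssim r^{1\wedge\beta}$, hence $\sup_{X'\in\mathcal{X}\cap B_p(X,r)}\big[f^*(X')-f^*(X)\big]^2\lesssim r^{2(1\wedge\beta)}\le(r+\tau)^{2(1\wedge\beta)}$. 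It then remains to control $\mathbb{E}\sup_{X'\in\mathcal{X}\cap B_p(X,r)}\big[\hat{f}_{\delta,\tau}(X')-f^*(X')\big]^2$.

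The central step is to partition $\mathcal{X}\cap B_p(X,r)$ according to the three cases in the definition \eqref{eq:lpe_interpolated}: the region $A_1$ on which $\hat{f}_{\delta,\tau}=\hat{f}$, the region $A_2\subseteq\cup_i B_p(X_i,\tau)$ on which $\hat{f}_{\delta,\tau}=\hat{g}$, and the finite set $A_3=\{X_i\}_{i=1}^n\cap B_p(X,r)$ on which $\hat{f}_{\delta,\tau}(X_i)=\hat{f}(X_i)+\psi_{\delta}(Y_i-\hat{f}(X_i))$. Since a supremum over a union is at most the sum of the suprema, it suffices to bound and add the three contributions. On $A_1$, $\sup_{X'\in A_1}\big[\hat{f}(X')-f^*(X')\big]^2\le\sup_{x'\in\mathcal{X}\cap B_p(X,r+\tau)}\big[\hat{f}(x')-f^*(x')\big]^2$, whose expectation is $\lesssim(r+\tau)^{2\beta}+n^{-2\beta/(2\beta+d)}$ by the localized sup-norm bound \eqref{eq:local_risk} applied at the fixed center $X$ with radius $r+\tau\lesssim1$. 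The real work is on $A_2$ and $A_3$, where the noise term must be extracted. Set $e_i\triangleq f^*(X_i)-\hat{f}(X_i)$, so that $Y_i-\hat{f}(X_i)=\xi_i+e_i$. On $A_3$ I would use that $\psi_{\delta}$ is $1$-Lipschitz with $\psi_{\delta}(0)=0$, which gives $|\hat{f}_{\delta,\tau}(X_i)-f^*(X_i)|=|\psi_{\delta}(\xi_i+e_i)-e_i|\le|\psi_{\delta}(\xi_i+e_i)-\psi_{\delta}(\xi_i)|+|\psi_{\delta}(\xi_i)|+|e_i|\le 2|e_i|+(|\xi_i|-\delta)_+$. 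On $A_2$ I would use condition \eqref{eq:lianjie} together with the elementary inequality $(|a+b|-\delta)_+\le(|a|-\delta)_++|b|$ (with $a=\xi_i$, $b=e_i$), giving $|\hat{g}(X')-\hat{f}(X')|\le\max_{i\in\mathcal{S}_p(X',\tau)}\big[(|\xi_i|-\delta)_++|e_i|\big]$. In both cases every index $i$ that appears satisfies $X_i\in B_p(X,r+\tau)$, since $\|X_i-X\|_p\le\|X_i-X'\|_p+\|X'-X\|_p\le\tau+r$; hence $\mathcal{S}_p(X',\tau)\subseteq\mathcal{S}_p(X,r+\tau)$, so every $(|\xi_i|-\delta)_+^2$ term is dominated by $\max_{i\in\mathcal{S}_p(X,r+\tau)}(|\xi_i|-\delta)_+^2$, and every $e_i^2$ term—together with $\big[\hat{f}(X')-f^*(X')\big]^2$ arising on $A_2$—is dominated by $\sup_{x'\in\mathcal{X}\cap B_p(X,r+\tau)}\big[\hat{f}(x')-f^*(x')\big]^2$.

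Combining the three regions yields, for each fixed $X$, $\sup_{X'\in\mathcal{X}\cap B_p(X,r)}\big[\hat{f}_{\delta,\tau}(X')-f^*(X')\big]^2\lesssim\sup_{x'\in\mathcal{X}\cap B_p(X,r+\tau)}\big[\hat{f}(x')-f^*(x')\big]^2+\max_{i\in\mathcal{S}_p(X,r+\tau)}(|\xi_i|-\delta)_+^2$. Taking expectation over $X$ and $Z_n$ and using Tonelli's theorem: \eqref{eq:local_risk} bounds the first term by $\lesssim(r+\tau)^{2\beta}+n^{-2\beta/(2\beta+d)}\lesssim(r+\tau)^{2(1\wedge\beta)}+n^{-2\beta/(2\beta+d)}$ (the last step since $r+\tau\lesssim1$), while Assumption~\ref{ass:bounded_density} lets us replace $\mathbb{E}_X$ by $\bar{\mu}\int_{\mathcal{X}}(\cdot)\,dx$, bounding the second term by $\lesssim\int_{\mathcal{X}}\mathbb{E}\max_{i\in\mathcal{S}_p(x,r+\tau)}(|\xi_i|-\delta)_+^2\,dx$. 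Adding back the approximation term $(r+\tau)^{2(1\wedge\beta)}$ from the first paragraph and observing that all constants are uniform over $f^*\in\mathcal{H}(\beta,L)$, taking the supremum over $f^*$ gives \eqref{eq:yuandan3}.

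The step I expect to be the main obstacle is the bookkeeping inside the domain split: isolating exactly the $(|\xi_i|-\delta)_+^2$ structure at the data points and in their $\tau$-neighborhoods—this hinges on the soft-threshold identities ($1$-Lipschitzness of $\psi_{\delta}$ and $\psi_{\delta}(0)=0$) and on the defining property \eqref{eq:lianjie}—and verifying that every index set $\mathcal{S}_p(X',\tau)$ with $X'\in B_p(X,r)$ nests inside $\mathcal{S}_p(X,r+\tau)$, so that all the maxima collapse into the single noise term appearing in \eqref{eq:yuandan3}. Once the partition and these two observations are in place, the remaining estimates are routine applications of \eqref{eq:local_risk}, Assumption~\ref{ass:bounded_density}, and Tonelli's theorem.
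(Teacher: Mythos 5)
Your proposal is correct and follows essentially the same route as the paper: both reduce the problem to the localized sup-norm bound \eqref{eq:local_risk} at radius $r+\tau$ plus the noise term $\max_{i\in\mathcal{S}_p(x,r+\tau)}(|\xi_i|-\delta)_+^2$, via the same index-set inclusion $\mathcal{S}_p(X',\tau)\subseteq\mathcal{S}_p(X,r+\tau)$ and the soft-threshold inequalities. The only cosmetic difference is that the paper first compares $\hat{f}_{\delta,\tau}$ to the base estimator $\hat{f}$ and then $\hat{f}$ to $f^*(X)$, whereas you compare to $f^*(X')$ and then partition the domain; the resulting bounds are identical.
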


Comparing the upper bound in (\ref{theo:upper}) with the minimax lower bound in Theorem~\ref{theo:lower}, we observe that for $\hat{f}_{\delta,\tau}$ to achieve minimax optimality, it suffices for the neighborhood size $\tau$ to shrink to zero at a certain fast rate. To illustrate this requirement, we proceed to examine two examples. 

\begin{example}[Simple $\delta$-interpolator]\label{exp:simple}
  A simple $\delta$-interpolator corresponds to setting $\tau = 0$. It then follows immediately from 
  (\ref{theo:upper}) that the upper bound for the adversarial risk of $\hat{f}_{\delta,0}$ matches the minimax 
  lower bound given in (\ref{eq:lower_bound}). 
\end{example} 

The construction of the simple $\delta$-interpolator $\hat{f}_{\delta,0}$ can be seen as modifying an optimal estimator $\hat{f}$ forcefully at the training data points to fit the $\delta$-interpolation constraint. The special case $\hat{f}_{0,0}$ was also used in \cite{Belkin2019} to illustrate benign overfitting without future $X$-attacks. Note that under the standard $L_2$-loss, it is not surprising that modifying the values of an estimator on a set of measure zero does not change its standard risk. However, this is no longer the case when the adversarial loss of $\hat{f}_{0,0}$ is considered, as such modifications induce an unavoidable cost $\int_{\mathcal{X}}\mathbb{E}\max_{i \in \mathcal{S}_p(x,r )} |\xi_{i}|^2dx$ relative to the optimal adversarial risk in (\ref{eq:minimax_2}). 

\begin{remark}
  It is worth mentioning that $\hat{f}_{\delta,0}$ can serve as a theoretical tool for constructing a matching upper bound to (\ref{eq:lower_bound}), thereby determining the minimax rate that is key to understand the statistical limits of interpolation. However, this does not necessarily imply that this simple interpolator is intended for practical use in machine learning, given its potential lack of continuity and differentiability.
\end{remark}

\begin{example}[Shrinking-neighborhood $\delta$-interpolator]\label{exp:shrink}
  Suppose $\tau = \tau_n$ with 
  $$
  0 < \tau_n \lesssim n^{-\frac{\beta}{(2\beta+d)(1 \wedge \beta)} \vee \frac{4\beta+d}{d(2\beta+d)}}.
  $$ 
  As proved in Section~\ref{sec:proof_exmaple} of the Appendix, $(r+\tau_n)^{2(1\wedge \beta)} + 
  n^{-\frac{2\beta}{2\beta+d}} \lesssim r^{2(1\wedge \beta)} + n^{-\frac{2\beta}{2\beta+d}}$ and 
  $\int_{\mathcal{X}}\mathbb{E}\max_{i \in \mathcal{S}_p(x,r + \tau_n)} (|\xi_{i}| - \delta)_+^2dx \lesssim 
  n^{-\frac{2\beta}{2\beta+d}}+ \int_{\mathcal{X}}\mathbb{E}\max_{i \in \mathcal{S}_p(x,r)} (|\xi_{i}| - 
  \delta)_+^2dx$. Consequently, the upper bound in (\ref{eq:yuandan3}) matches the minimax lower bound in 
  (\ref{eq:lower_bound}), and $\hat{f}_{\delta,\tau_n}$ is an optimal $\delta$-interpolator under adversarial 
  $X$-attacks.  
\end{example} 

Given $\tau >0$, one can readily construct a continuous (or even smooth) function $\hat{g}$ (e.g., via linear or spline interpolation) such that (\ref{eq:lianjie}) holds. Consequently, Example~\ref{exp:shrink} demonstrates that there exists a broad class of $\delta$-interpolators that can fit the data continuously or even differentiably while attaining the minimax lower bound in Theorem~\ref{theo:lower}. 

\begin{remark}
When $\delta = 0$ and $r = 0$, the results in Example~\ref{exp:shrink} also imply that the exact interpolating 
estimator $\hat{f}_{0,\tau_n}$ attains the optimal rate (\ref{eq:minimax_1}) in the standard setting. Requiring 
$\tau_n$ to shrink to zero fast enough is also adopted in other benign overfitting estimators, such as the 
inflated histogram estimator studied in \citet{Mucke2025}, which corresponds to the construction 
$\hat{f}_{0,\tau}$ in (\ref{eq:lpe_interpolated}) with $\hat{f}$ being a histogram estimator and $\hat{g} = \hat{f}$. The optimal interpolating estimators proposed in \citet{Belkin2018Overfitting, Belkin2019, Haas2023, Chhor2024Benign} indeed share the similar spirit as Example~\ref{exp:shrink}. The singular kernel functions in these works force the interpolator to the optimal estimator $\hat{f}$ once $x$ leaves sufficiently small neighborhoods of the data points. 
\end{remark}

Theorems~\ref{theo:lower}--\ref{theo:upper} together with the specific interpolators in Examples~\ref{exp:simple}--\ref{exp:shrink} establish the minimax rate of convergence for the $\delta$-interpolating estimators under future $X$-attacks:
\begin{equation}\label{eq:minimax_rate}
  \inf_{\hat{f}\in \mathcal{I}(\delta)}\sup_{f^* \in \mathcal{H}(\beta, L)}R_{r}(\hat{f}, f^*) \asymp r^{2(1 \wedge \beta)} + n^{-\frac{2\beta}{2\beta+d}} +\int_{\mathcal{X}}\mathbb{E}\max_{i \in \mathcal{S}_p(x,r)} (|\xi_{i}| - \delta)_+^2dx.
\end{equation}
The minimax rate (\ref{eq:minimax_rate}) addresses Question Q1 posed in Section~\ref{subsec:interpolation}, showing that interpolation entails an additional cost under adversarial risk, namely the term $\int_{\mathcal{X}}\mathbb{E}\max_{i \in \mathcal{S}_p(x,r)} (|\xi_{i}| - \delta)_+^2dx$.

\subsection{Phase transitions}\label{subsec:minimax_rate}

To address Q2, it remains to analyze the order of the integral term $\int_{\mathcal{X}}\mathbb{E}\max_{i \in \mathcal{S}_p(x,r)} (|\xi_{i}| - \delta)_+^2dx$ to determine the rate in (\ref{eq:minimax_rate}). From a technical perspective, it suffices to establish matching upper and lower bounds for $\mathbb{E}\max_{i \in \mathcal{S}_p(x,r)} (|\xi_{i}| - \delta)_+^2$ uniformly over all $x \in \mathcal{X}$. The interplay between $r$ and $\delta$ makes the precise characterization of the order of this term quite involved (see Sections~\ref{sec:proof_3}--\ref{sec:proof_5} of the Appendix for detailed proofs) and leads to a nuanced phase transition phenomenon in the optimal rates. In what follows, we present the minimax rate (\ref{eq:minimax_rate}) under three different regimes for $\delta$.

\begin{theorem}[Low interpolation regime]\label{theo:low}

Suppose Assumption~\ref{ass:bounded_density} holds. Assume the attack magnitude satisfies $0 \leq nr^d \leq cn^{\gamma}$ for some constant $c>0$ and $0<\gamma \leq 1$. Then there exists a constant $C$ depending only on the sub-Gaussian parameter of $\xi_i$ such that if $\delta \geq \sqrt{C(\frac{2\beta}{2\beta+d}+\gamma) \log n}$, we have $\int_{\mathcal{X}}\mathbb{E}\max_{i \in \mathcal{S}_p(x,r)} (|\xi_{i}| - \delta)_+^2dx = O(n^{-\frac{2\beta}{2\beta+d}})$ under the sub-Gaussian assumption. And the minimax rate becomes
      \begin{equation}\label{eq:low}
       \inf_{\hat{f}\in \mathcal{I}(\delta)}\sup_{f^* \in \mathcal{H}(\beta, L)}R_{r}(\hat{f}, f^*) \asymp r^{2(1 \wedge \beta)} + n^{-\frac{2\beta}{2\beta+d}}.
      \end{equation}
In particular, if $\delta \geq \sqrt{C(\frac{4\beta+d}{2\beta+d}) \log n}$, then (\ref{eq:low}) holds for any $0 \leq r \lesssim 1$. 
\end{theorem}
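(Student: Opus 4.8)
The plan is to reduce the whole statement to a single uniform-in-$x$ estimate on the integrand $\mathbb{E}\max_{i \in \mathcal{S}_p(x,r)} (|\xi_i| - \delta)_+^2$. Indeed, the minimax rate (\ref{eq:minimax_rate}) has already been pinned down by Theorems~\ref{theo:lower}--\ref{theo:upper} and Examples~\ref{exp:simple}--\ref{exp:shrink}, so it suffices to prove $\int_{\mathcal{X}}\mathbb{E}\max_{i \in \mathcal{S}_p(x,r)} (|\xi_i| - \delta)_+^2\,dx = O(n^{-\frac{2\beta}{2\beta+d}})$: the third term of (\ref{eq:minimax_rate}) is then absorbed into the second, and since it is nonnegative it cannot push the rate below $r^{2(1\wedge\beta)} + n^{-\frac{2\beta}{2\beta+d}}$, so (\ref{eq:minimax_rate}) collapses to (\ref{eq:low}). (The lower-bound direction $\gtrsim r^{2(1\wedge\beta)} + n^{-\frac{2\beta}{2\beta+d}}$ is unconditional from Theorem~\ref{theo:lower}, so only the new upper estimate is needed.)

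To bound the integrand, dominate the maximum by the sum, $\max_{i \in \mathcal{S}_p(x,r)} (|\xi_i| - \delta)_+^2 \leq \sum_{i=1}^{n} \mathbf{1}\{X_i \in \mathcal{X} \cap B_p(x,r)\}\,(|\xi_i| - \delta)_+^2$, take expectations, and condition on $X_i$ inside each summand. This splits the bound into (i) a uniform control of $\mathbb{E}\big[(|\xi_i| - \delta)_+^2 \mid X_i = x\big]$ and (ii) a bound on $\sum_{i=1}^{n}\mathbb{P}(X_i \in \mathcal{X} \cap B_p(x,r))$. For (i), the layer-cake identity $\mathbb{E}\big[(|\xi_i| - \delta)_+^2 \mid X_i = x\big] = \int_0^\infty \mathbb{P}(|\xi_i| > \delta + \sqrt{u} \mid X_i = x)\,du$, the conditional sub-Gaussian tail, and the elementary inequality $(\delta+v)^2 \geq \delta^2 + 2\delta v$ give $\mathbb{E}\big[(|\xi_i| - \delta)_+^2 \mid X_i = x\big] \lesssim \delta^{-2}\exp(-\delta^2/(2\sigma^2))$ uniformly in $x$. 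For (ii), Assumption~\ref{ass:bounded_density} yields $\mathbb{P}(X_i \in \mathcal{X} \cap B_p(x,r)) \leq \bar{\mu}\,\lambda(B_p(x,r)) \lesssim r^d$, so the sum is $\lesssim n r^d$. Multiplying and integrating over $x \in \mathcal{X}$ (of unit Lebesgue measure) gives $\int_{\mathcal{X}}\mathbb{E}\max_{i \in \mathcal{S}_p(x,r)} (|\xi_i| - \delta)_+^2\,dx \lesssim \delta^{-2}\exp(-\delta^2/(2\sigma^2))\,n r^d$.

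It remains to feed in the two hypotheses. With $nr^d \leq c n^{\gamma}$ and $C$ chosen to depend only on the sub-Gaussian parameter of $\xi_i$ (concretely $C = 2\sigma^2$ under the normalization $\mathbb{P}(|\xi_i| > t \mid X_i) \leq 2\exp(-t^2/(2\sigma^2))$), the assumption $\delta \geq \sqrt{C(\tfrac{2\beta}{2\beta+d}+\gamma)\log n}$ forces $\exp(-\delta^2/(2\sigma^2)) \leq n^{-(\frac{2\beta}{2\beta+d}+\gamma)}$; hence the last display is $\lesssim \delta^{-2}\, n^{\gamma}\, n^{-(\frac{2\beta}{2\beta+d}+\gamma)} = \delta^{-2}\, n^{-\frac{2\beta}{2\beta+d}} = O(n^{-\frac{2\beta}{2\beta+d}})$, since $\delta^2 \gtrsim \log n \to \infty$. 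This gives the stated bound on the integral term and therefore (\ref{eq:low}). The final assertion is the case $\gamma = 1$: whenever $0 \leq r \lesssim 1$ one has $nr^d = O(n) = O(n^{1})$, so $\gamma = 1$ is admissible, and $\tfrac{2\beta}{2\beta+d} + 1 = \tfrac{4\beta+d}{2\beta+d}$ reproduces the threshold $\delta \geq \sqrt{C\,\tfrac{4\beta+d}{2\beta+d}\,\log n}$.

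The argument is largely mechanical once the reduction is in place; the only genuinely delicate point is the calibration above — the factor $\exp(-\delta^2/(2\sigma^2))$ must simultaneously defeat the local sample-size growth $nr^d \leq c n^{\gamma}$ \emph{and} leave a spare factor $n^{-\frac{2\beta}{2\beta+d}}$ to match the noiseless minimax rate, which is exactly why the threshold for $\delta$ is $\sqrt{C(\tfrac{2\beta}{2\beta+d}+\gamma)\log n}$ rather than merely $\sqrt{2\sigma^2\gamma\log n}$. One should also note that replacing $\max$ by $\sum$ costs nothing in order here: in the regime $\delta \gtrsim \sqrt{\log n}$ the event $\{\max_i (|\xi_i|-\delta)_+ > t\}$ is dominated by its one-term contribution, so the union bound implicit in $\max \leq \sum$ is already rate-tight and no sharper maximal inequality is needed.
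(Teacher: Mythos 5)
Your proposal is correct, but it proves the key estimate by a genuinely different route than the paper. You dominate the maximum by the sum, $\max_{i \in \mathcal{S}_p(x,r)} (|\xi_i|-\delta)_+^2 \le \sum_{i=1}^n \mathbf{1}\{X_i \in B_p(x,r)\}(|\xi_i|-\delta)_+^2$, and then combine a per-term tail integral, $\mathbb{E}[(|\xi_i|-\delta)_+^2 \mid X_i] \lesssim \delta^{-2}\exp(-\delta^2/(2\sigma^2))$, with the crude count $\sum_i \mathbb{P}(X_i \in B_p(x,r)) \lesssim nr^d \le c n^{\gamma}$. The paper instead keeps the maximum intact: it introduces the enlarged radius $\bar r = (cn^{\gamma-1})^{1/d}$, proves via a covering argument and the multiplicative Chernoff bound that the local count $\bar n_x$ is uniformly $O(n^{\gamma})$ on a high-probability event $\mathcal{E}_1$, bounds $\mathbb{E}\max_{1\le i\le c_4 n^{\gamma}}(|\xi_i|-\delta)_+^2$ by splitting the tail integral at $t_0 = (\sqrt{2\nu^2(\tfrac{2\beta}{2\beta+d}+\gamma)\log n}-\delta)_+^2$, and handles $\mathcal{E}_1^c$ separately using $\mathbb{E}\max_{i\le n}|\xi_i|^2 \lesssim \log n$. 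Your argument is shorter and avoids the uniform-in-$x$ concentration of the local counts entirely, because replacing $\max$ by $\sum$ in expectation is exactly a union bound, which — as you correctly observe — is rate-tight in the regime $\delta \gtrsim \sqrt{\log n}$ where at most one term exceeds its threshold with non-negligible probability; the extra factor $\delta^{-2}$ you pick up even gives a marginally smaller bound than the paper's. What the paper's heavier machinery buys is reusability: the same event-splitting and local-count control recur in the lower-bound arguments of Theorems~\ref{theo:Moderate}--\ref{theo:high}, where the genuine max structure (not just an upper bound on it) must be tracked, so the authors set it up once. The reduction of the theorem to the single integral estimate, the identification $C = 2\sigma^2$ (the paper's $2\nu^2$) under the stated sub-Gaussian normalization, and the $\gamma = 1$ specialization for the final claim all match the paper and are handled correctly.
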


\begin{figure}[!t]
  \centering
  \includegraphics[width=1\linewidth]{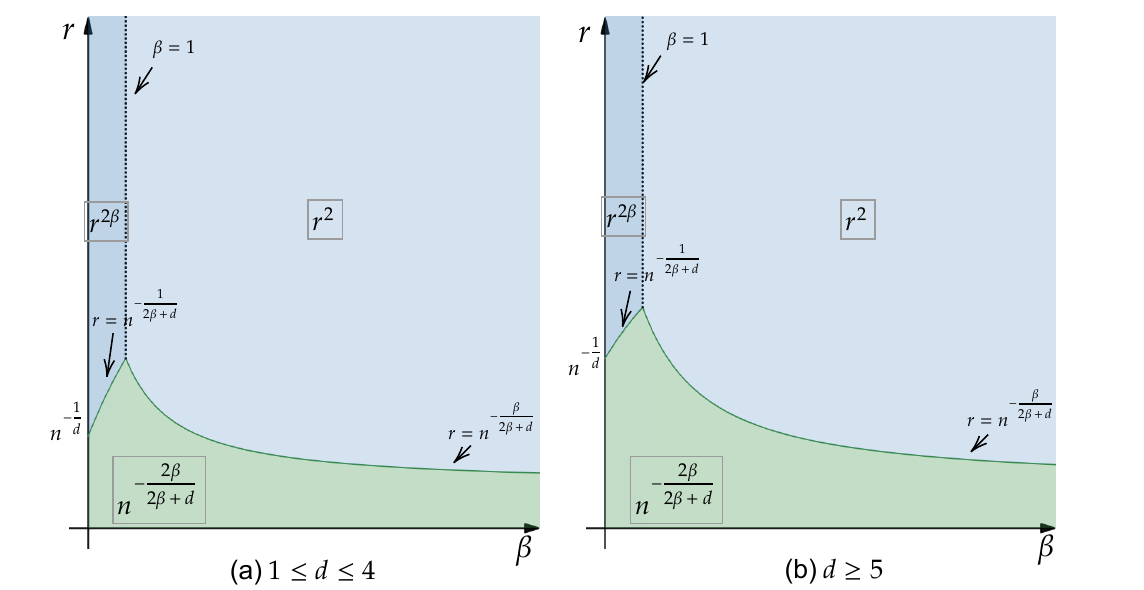}
  \caption{Minimax rate in the low interpolation regime: Panel (a) illustrates the case where $1 \leq d \leq 4$, while panel (b) corresponds to $d \geq 5$. The quantity in the grey rectangle denotes the dominant rates in the respective parameter spaces.}
  \label{fig:low}
\end{figure}

Figure~\ref{fig:low} presents a phase diagram of the minimax rate in Theorem~\ref{theo:low}. It shows that in the low interpolation regime where $\delta$ exceeds a constant multiple of $(\log n)^{1/2}$, the order of (\ref{eq:low}) coincides with that of (\ref{eq:minimax_2}). This indicates that mild interpolation does not compromise adversarial robustness in the minimax sense. In addition, Figure~\ref{fig:low} illustrates that when the attack magnitude is below $n^{-\frac{(1\wedge \beta)}{2\beta+d}}$, the minimax adversarial rate remains identical to the standard minimax rate (\ref{eq:minimax_1}) in the absence of adversarial perturbations. Once $r$ exceeds this critical order, the rate is governed by $r^{2(1 \wedge \beta)}$, which can still converge if $r \to 0$.

Theorem~\ref{theo:low} provides the sufficient condition on $\delta$ under which the interpolating estimator achieves the optimal adversarial rate in (\ref{eq:minimax_2}). Next, we show that requiring $\delta$ to be larger than $(\log n)^{1/2}$ (in order) is almost necessary to achieve optimal adversarial robustness unless $r$ is extremely small. 

\begin{theorem}[Moderate interpolation regime]\label{theo:Moderate}

Suppose Assumption~\ref{ass:bounded_density} holds and $\xi_i$ are i.i.d. $N(0,\sigma^2)$. Consider the regime in which $C_1 \leq \delta \leq C_2(\log n)^{t}$ for constants $C_1, C_2>0$ and $0< t < 1/2$. Then, 
\begin{equation*}
  \inf_{\hat{f}\in \mathcal{I}(\delta)}\sup_{f^* \in \mathcal{H}(\beta, L)}R_{r}(\hat{f}, f^*) \gtrsim r^{2(1 
  \wedge \beta)} + n^{-\frac{2\beta}{2\beta+d}} + \rho_n \cdot \min(nr^d, 1),
\end{equation*}
where $\rho_n \triangleq \exp[ - C_2^2(\log n)^{2t}/(2\sigma^2) ]$ decays more slowly than any polynomial rate $n^{-a}$ with any fixed $a>0$. 

\end{theorem}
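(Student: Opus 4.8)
The plan is to deduce Theorem~\ref{theo:Moderate} from the general lower bound in Theorem~\ref{theo:lower}. Since the terms $r^{2(1\wedge\beta)}$ and $n^{-2\beta/(2\beta+d)}$ already appear in \eqref{eq:lower_bound}, it suffices to show that, when $C_1\le\delta\le C_2(\log n)^{t}$ with $0<t<1/2$,
\[
\int_{\mathcal{X}}\mathbb{E}\max_{i\in\mathcal{S}_p(x,r)}\bigl(|\xi_i|-\delta\bigr)_+^2\,dx \;\gtrsim\; \rho_n\cdot\min\!\bigl(nr^d,\,1\bigr).
\]
I would obtain this by combining three ingredients: a pointwise-in-$x$ reduction of the maximum to a single error term, a ``the adversarial ball contains a design point'' probability estimate that produces the factor $\min(nr^d,1)$, and a Gaussian tail estimate for $\mathbb{E}\bigl[(|\xi|-\delta)_+^2\bigr]$ that produces the factor $\rho_n$.

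For the first two ingredients, fix $x\in\mathcal{X}$ and condition on the design $X_1,\dots,X_n$; write $N_x\triangleq|\mathcal{S}_p(x,r)|$ and, on $\{N_x\ge1\}$, let $i^*\triangleq\min\mathcal{S}_p(x,r)$. Because the errors are independent of the design, $\xi_{i^*}\sim N(0,\sigma^2)$ conditionally on the design, and $\max_{i\in\mathcal{S}_p(x,r)}(|\xi_i|-\delta)_+^2\ge(|\xi_{i^*}|-\delta)_+^2\mathbf{1}\{N_x\ge1\}$; taking expectations yields $\mathbb{E}\max_{i\in\mathcal{S}_p(x,r)}(|\xi_i|-\delta)_+^2\ge\mathbb{P}(N_x\ge1)\,\mathbb{E}\bigl[(|\xi|-\delta)_+^2\bigr]$. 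Under Assumption~\ref{ass:bounded_density}, an elementary volume argument (sandwiching $B_p(x,r)\cap\mathcal{X}$ between two $\ell_\infty$-boxes intersected with $[0,1]^d$) gives $q_x\triangleq\mathbb{P}\bigl(X_1\in B_p(x,r)\cap\mathcal{X}\bigr)\asymp\min(r^d,1)$ uniformly in $x$, whence $\mathbb{P}(N_x\ge1)=1-(1-q_x)^n\ge 1-e^{-nq_x}\gtrsim\min(nq_x,1)\asymp\min(nr^d,1)$. Integrating over $x\in\mathcal{X}$ and then invoking the Gaussian tail bound $\mathbb{E}\bigl[(|\xi|-\delta)_+^2\bigr]=2\sigma^2\int_{\delta/\sigma}^{\infty}(z-\delta/\sigma)^2\phi(z)\,dz\asymp(\delta/\sigma)^{-3}e^{-\delta^2/(2\sigma^2)}$ (a Laplace-type estimate, valid once $\delta/\sigma$ is large, and decreasing in $\delta$, so its minimum over $\delta\le C_2(\log n)^t$ is of order $(\log n)^{-3t}\rho_n$) gives the displayed inequality up to a harmless $\mathrm{poly}(\log n)$ factor; since $(\log n)^{2t}=o(\log n)$, the quantity $\rho_n$ indeed decays more slowly than any $n^{-a}$, and combining with Theorem~\ref{theo:lower} completes the proof.

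The main obstacle is pinning down the exponent $C_2^2(\log n)^{2t}/(2\sigma^2)$ rather than a worse constant: crude surrogates such as $(|\xi|-\delta)_+^2\ge\delta^2\mathbf{1}\{|\xi|\ge2\delta\}$ or $\ge\sigma^2\mathbf{1}\{|\xi|\ge\delta+\sigma\}$ lose a multiplicative constant (a factor $4$, respectively an additive $O(\delta)$) in the exponent, so one must analyze $\mathbb{E}\bigl[(|\xi|-\delta)_+^2\bigr]$ right at the threshold $\delta$ and uniformly over the full range $C_1\le\delta\le C_2(\log n)^{t}$, carrying the polynomial-in-$\log n$ prefactor through and checking it is absorbed into the stated rate. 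A secondary technical point is making the $\min(nr^d,1)$ scaling of $\mathbb{P}(N_x\ge1)$ uniform over all $x\in\mathcal{X}$, including near the boundary, and over the entire admissible range $0\le r=O(1)$; for $r$ bounded away from $0$ the ``nonempty ball'' probability tends to $1$ and the factor reduces to $1$, consistent with $\min(nr^d,1)=1$.
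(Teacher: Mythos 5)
Your proposal is correct and follows essentially the same route as the paper: reduce to lower-bounding the integral term from Theorem~\ref{theo:lower} pointwise in $x$, factor it as (probability that $B_p(x,r)$ contains a design point) times $\mathbb{E}(|\xi|-\delta)_+^2$, and estimate each factor. Your unified bound $\mathbb{P}(N_x\ge 1)\gtrsim \min(nr^d,1)$ via the design-measurable index $i^*$ is a mild streamlining of the paper's case split ($\mathbb{P}(n_x=1)\gtrsim nr^d$ when $nr^d\to0$; $\mathbb{P}(n_x\ge1)\gtrsim1$ when $nr^d\gtrsim1$; the non-vanishing-$r$ case handled separately by the $r^{2(1\wedge\beta)}$ term), and your conditioning argument for pulling out $\mathbb{E}(|\xi|-\delta)_+^2$ is valid. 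One substantive remark on the Gaussian moment: your Laplace estimate $\mathbb{E}(|\xi|-\delta)_+^2\asymp(\delta/\sigma)^{-3}e^{-\delta^2/(2\sigma^2)}$ is the correct asymptotic. The paper instead writes the closed form as $2(\delta+\sigma)^2\tilde{\Phi}(\delta/\sigma)-2\sigma\delta\phi(\delta/\sigma)$ and applies the Mills bound $\tilde{\Phi}(\lambda)>\frac{\lambda}{\lambda^2+1}\phi(\lambda)$ to get $\gtrsim\phi(\delta/\sigma)$ with no prefactor; but the correct closed form is $2(\delta^2+\sigma^2)\tilde{\Phi}(\delta/\sigma)-2\sigma\delta\phi(\delta/\sigma)$, for which that same Mills bound yields only the trivial bound $0$. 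So the $(\log n)^{-3t}$ prefactor you carry is genuinely present, and your handling of it is the right one: it is absorbed by an arbitrarily small enlargement of the constant in the exponent of $\rho_n$ (since $\log\log n=o((\log n)^{2t})$), leaving the theorem's conclusion intact.
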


Theorem~\ref{theo:Moderate} shows that when $r$ is not extremely small (i.e., $r \gtrsim n^{-1/d}$), the adversarial risk of $\delta$-interpolation with $\delta \lesssim (\log n)^t$ and $t<1/2$ cannot converge faster than any polynomial rate. This implies that basically the condition $\delta \gtrsim (\log n)^{1/2}$ is necessary for $\delta$-interpolation to be robust under future $X$-attacks when $r \gtrsim n^{-1/d}$.

Finally, we show that as the interpolation degree increases further (equivalently, as $\delta$ deceases further), the adversarial robustness of $\delta$-interpolating estimator exhibits an even more significant deterioration. 

\begin{theorem}[High interpolation regime]\label{theo:high}
  Suppose Assumption~\ref{ass:bounded_density} holds and $\xi_i$ are i.i.d. $N(0,\sigma^2)$. Consider the case 
  where $0 \leq \delta \leq C_3\sigma $ for some fixed positive constant $C_3>0$. In this case, when $nr^d \to 0$, we have $\int_{\mathcal{X}}\mathbb{E}\max_{i \in \mathcal{S}_p(x,r)} (|\xi_{i}| - \delta)_+^2dx \asymp nr^d$, and the minimax risk satisfies
\begin{equation*}
  \inf_{\hat{f}\in \mathcal{I}(\delta)}\sup_{f^* \in \mathcal{H}(\beta, L)}R_{r}(\hat{f}, f^*) \asymp r^{2(1 
  \wedge \beta)} + n^{-\frac{2\beta}{2\beta+d}} + nr^d.
\end{equation*}
When $nr^d  \gtrsim 1$, we have $\int_{\mathcal{X}}\mathbb{E}\max_{i \in \mathcal{S}_p(x,r)} (|\xi_{i}| - 
\delta)_+^2dx \gtrsim 1$, and
$$ \inf_{\hat{f}\in \mathcal{I}(\delta)}\sup_{f^* \in \mathcal{H}(\beta, L)}R_{r}(\hat{f}, f^*) = \Omega(1) .$$
When $nr^d > C_4 \log n$, where $C_4$ is a constant depending on $\barbelow{\mu}$, $p$, and $d$, then we have $\int_{\mathcal{X}}\mathbb{E}\max_{i \in \mathcal{S}_p(x,r)} (|\xi_{i}| - \delta)_+^2dx \,\asymp \log(nr^d)$, and the minimax rate becomes
\begin{equation*}
  \inf_{\hat{f}\in \mathcal{I}(\delta)}\sup_{f^* \in \mathcal{H}(\beta, L)}R_{r}(\hat{f}, f^*) = 
  \Theta\big[\log(nr^d)\big].
\end{equation*}
\end{theorem}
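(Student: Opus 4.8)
The plan is to prove Theorem~\ref{theo:high} by reducing everything, via the minimax rate formula (\ref{eq:minimax_rate}) already established in Theorems~\ref{theo:lower}--\ref{theo:upper}, to sharp two-sided estimates of the single quantity
$M(x,r) \triangleq \mathbb{E}\max_{i \in \mathcal{S}_p(x,r)} (|\xi_i| - \delta)_+^2$
that hold uniformly in $x \in \mathcal{X}$, followed by integration over $\mathcal{X}$. Since $0 \le \delta \le C_3\sigma$ is bounded, $(|\xi_i|-\delta)_+^2$ is a sub-Gaussian-tailed nonnegative random variable whose distribution is fixed (not varying with $n$); in particular it has a strictly positive probability $p_0 \triangleq \mathbb{P}(|\xi_i| \ge \delta + 1)$ of exceeding $1$, and also $\mathbb{E}(|\xi_i|-\delta)_+^2 \ge c_0 > 0$ for constants depending only on $\sigma, C_3$. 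The key elementary fact is that $M(x,r)$ is essentially the expected maximum of $N_{x,r} \triangleq |\mathcal{S}_p(x,r)|$ i.i.d. copies of a fixed sub-Gaussian-squared variable, where $N_{x,r}$ is $\mathrm{Binomial}$-like with mean $\asymp n r^d$ (using Assumption~\ref{ass:bounded_density} to pin $\mathbb{E} N_{x,r}$ between $\barbelow\mu\, \mathrm{vol}(B_p)\, n r^d$ and $\bar\mu\, \mathrm{vol}(B_p)\, n r^d$, after accounting for boundary truncation of $B_p(x,r)\cap\mathcal{X}$, which only costs a constant factor since $r = O(1)$). So the whole theorem is an exercise in "expected maximum of i.i.d. heavy-ish-but-sub-Gaussian variables as a function of the (random) sample count," split into three count regimes.

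The three cases are then handled as follows. \textbf{Case $nr^d \to 0$:} here $\mathbb{E}N_{x,r} \to 0$, so $\mathbb{P}(N_{x,r} \ge 1) \asymp nr^d$ and $\mathbb{P}(N_{x,r} \ge 2) = O((nr^d)^2) = o(nr^d)$; conditioning on exactly one point in the ball gives $M(x,r) = \mathbb{P}(N_{x,r}\ge 1)\cdot\Theta(1) + o(nr^d)$ because on that event the contribution is $\mathbb{E}(|\xi|-\delta)_+^2 = \Theta(1)$, while higher-count events contribute negligibly (the second moment $\mathbb{E}(|\xi|-\delta)_+^4$ is still a finite constant, so even the $N_{x,r}=k$ term contributes $O((nr^d)^k \cdot \log k)$, summable and $o(nr^d)$). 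Integrating, $\int_{\mathcal X} M(x,r)\,dx \asymp nr^d$; plugging into (\ref{eq:minimax_rate}) yields $r^{2(1\wedge\beta)} + n^{-2\beta/(2\beta+d)} + nr^d$. \textbf{Case $nr^d \gtrsim 1$ (but not much larger):} now with probability bounded below $N_{x,r} \ge 1$, so $M(x,r) \ge p'\cdot\mathbb{E}(|\xi|-\delta)_+^2 = \Omega(1)$ for a universal $p' > 0$; integration gives $\int_{\mathcal X} M \gtrsim 1$, hence the minimax risk is $\Omega(1)$ (the two-sided constant here is all that is claimed). \textbf{Case $nr^d > C_4 \log n$:} this is the curse-of-sample-size regime and the delicate one. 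For the \emph{upper} bound on $M(x,r)$: $N_{x,r} \le \bar\mu\,\mathrm{vol}(B_p)\cdot nr^d =: \bar N$ with overwhelming probability (Chernoff), and the expected max of $\le \bar N$ i.i.d. sub-Gaussian-squared variables is $O(\log \bar N) = O(\log(nr^d))$ (standard: $\mathbb{E}\max_{i\le m} \eta_i^2 \lesssim \log m$ for sub-Gaussian $\eta_i$), with the low-probability event $N_{x,r} > \bar N$ contributing only $O(n \cdot e^{-c nr^d}) = o(1)$ since $nr^d \gtrsim \log n$. For the \emph{lower} bound: with probability $\ge 1/2$ we have $N_{x,r} \ge \underbar N \triangleq \tfrac12 \barbelow\mu\,\mathrm{vol}(B_p)\cdot nr^d$, and among $\underbar N$ i.i.d. copies of the fixed variable $|\xi|$ (Gaussian here, by the theorem's hypothesis, so exact Gaussian-maximum asymptotics apply), $\mathbb{E}\max_i |\xi_i|^2 \ge (1-o(1)) 2\sigma^2 \log \underbar N \gtrsim \log(nr^d)$, and $(|\xi_i|-\delta)_+^2 \ge (|\xi_i| - C_3\sigma)_+^2 \ge |\xi_i|^2 - 2C_3\sigma|\xi_i|$, so the $\delta$-subtraction only perturbs the leading $\log$ term by lower-order amounts. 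The constant $C_4$ enters precisely to guarantee $\underbar N \ge 2$ (so "$\log \underbar N$" is meaningful and positive) and to make the Chernoff tails summable against the outer $\int_{\mathcal X}\cdots$ and the factor $n$; it depends on $\barbelow\mu$, $p$, $d$ only through $\mathrm{vol}(B_p(0,1))$ in $\ell_p$ and the density lower bound. Integrating over $x\in\mathcal X$ (the bounds being uniform in $x$) gives $\int_{\mathcal X} M(x,r)\,dx \asymp \log(nr^d)$, which dominates $n^{-2\beta/(2\beta+d)}$ and, when $r$ is also bounded as assumed, dominates $r^{2(1\wedge\beta)} = O(1)$ only up to constants — one checks $\log(nr^d) \gtrsim 1$ here so the sum is $\Theta(\log(nr^d))$.

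The main obstacle I expect is the sharp constant in the Gaussian-maximum lower bound of the third case: to get $\mathbb{E}\max_{i\le m}|\xi_i|^2 \gtrsim \log m$ with the right \emph{order} (the theorem only claims $\Theta$, not the precise leading constant) one needs a clean lower bound on $\mathbb{E}\max |\xi_i|$, which follows from a standard Sudakov-type / first-moment-of-maximum argument: $\mathbb{P}(\max_{i\le m}|\xi_i| \ge t) \ge 1 - (1 - 2\tilde\Phi(t/\sigma))^m \ge 1 - e^{-2m\tilde\Phi(t/\sigma)}$, and choosing $t = \sigma\sqrt{\log m}$ makes $2m\tilde\Phi(t/\sigma) \asymp m \cdot e^{-(\log m)/2}/\sqrt{\log m} = \sqrt{m}/\sqrt{\log m} \to \infty$, so $\max|\xi_i| \ge \sigma\sqrt{\log m}$ with probability $1-o(1)$, giving $\mathbb{E}\max|\xi_i|^2 \ge (1-o(1))\sigma^2\log m$. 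The subtler point is controlling the interaction between the \emph{random} count $N_{x,r}$ and the maximum simultaneously for the lower bound — resolved by the "with probability $\ge 1/2$, $N_{x,r}\ge\underbar N$" event being \emph{independent} of the error values $\xi_i$ (the $X_i$'s determine membership in the ball, the $\xi_i$'s are independent of the $X_i$'s), so one can condition on that event and then apply the fixed-$m$ Gaussian bound with $m = \underbar N$. A secondary technical nuisance is the boundary effect in $\mathrm{vol}(B_p(x,r)\cap\mathcal X)$ when $x$ is near $\partial\mathcal X$: this only shrinks the volume by a $d$-dependent constant factor (at least $2^{-d}$ of the full ball lies in $\mathcal X$ for any corner $x$ when $r\le 1$), which is harmless for $\asymp$-type conclusions and is absorbed into $C_4$ and the implicit constants. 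Once $M(x,r)$ is pinned in each regime, assembling the final minimax rates is immediate from (\ref{eq:minimax_rate}).
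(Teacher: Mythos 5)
Your proposal is correct and follows essentially the same route as the paper's proof: reduce everything via the established minimax formula to two-sided bounds on $\mathbb{E}\max_{i \in \mathcal{S}_p(x,r)}(|\xi_i|-\delta)_+^2$ uniform in $x$, treat the local count $n_x$ as binomial with mean $\asymp nr^d$ (Chernoff concentration, boundary-volume regularity), use the rare-single-point event for $nr^d \to 0$, the constant lower bound $\mathbb{E}(|\xi|-\delta)_+^2 \gtrsim 1$ for $nr^d \gtrsim 1$, and max-of-Gaussians tail estimates for $nr^d \gtrsim \log n$. The only cosmetic differences are that the paper bounds $\mathbb{E}(|\xi|-\delta)_+^2$ from below via Stein's identity rather than your direct tail argument, and uses Markov plus the Mills ratio where you use the survival function of the maximum; both yield the same orders.
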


\begin{figure}[!t]
  \centering
  \includegraphics[width=1\linewidth]{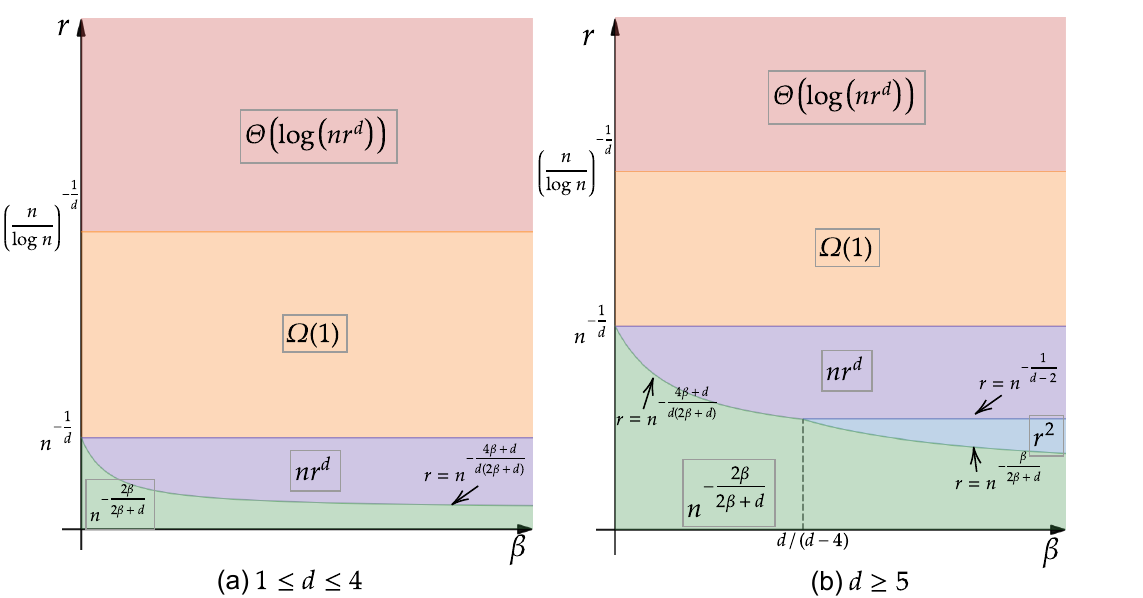}
  \caption{Minimax rate in the high interpolation regime: Panel (a) illustrates the case where $1 \leq d \leq 4$, 
  while panel (b) corresponds to $d \geq 5$. The quantity in the grey rectangle denotes the dominant rates in the 
  respective parameter spaces.}
  \label{fig:high}
\end{figure}

Figure~\ref{fig:high} displays the phase diagram of the minimax rate established in Theorem~\ref{theo:high}, where the interpolation level $\delta$ is upper bounded by a positive constant. Overall, the patterns in Figure~\ref{fig:high} demonstrate that the optimal adversarial rates in the high interpolation regime deteriorate significantly compared to those in the low interpolation regime, highlighting the vulnerability of highly interpolating estimators to adversarial perturbations in future inputs.

Specifically, the critical order of $r$ required to retain the standard minimax rate $n^{-\frac{2\beta}{2\beta+d}}$ becomes substantially lower. For instance, when $1 \leq d \leq 4$, this threshold decreases from $n^{-\frac{(1 \wedge \beta)}{2\beta + d}}$ in the low interpolation case to $n^{-\frac{4\beta + d}{d(2\beta + d)}}$ in the high interpolation regime. Second, as illustrated in Figure~\ref{fig:low}, the $r^2$ term, which dominates the minimax rate over a large part of the parameter space in the low interpolation case, only appears in Figure~\ref{fig:high} when $d \geq 5$ and for smooth function classes with $\beta > \frac{d}{d-4}$. In contrast, the slower rate $nr^d$ can dominate the overall minimax rate in Figure~\ref{fig:high}. Third, when $r \gtrsim n^{-1/d}$, the minimax adversarial rate ceases to converge, and more severely, when $r$ exceeds $(\frac{n}{\log n})^{-1/d}$, the adversarial risk of highly interpolating estimators diverges to infinity. Since $f^* \in \mathcal{H}(\beta,L)$ is a bounded function, a $\delta$-interpolating estimator with bounded $\delta$ cannot even outperform the trivial estimator $\hat{f}_0 \equiv 0$ in this case, as $\hat{f}_0$'s adversarial risk under an future $X$-attack remains bounded.

\subsection{An interesting phenomenon in adversarial learning under future $X$-attacks: curse of sample size and 
blessing of dimensionality}\label{subsec:remark}

In this subsection, we reveal and discuss several interesting yet counterintuitive effects of $n$ and $d$ on the 
minimax adversarial rates in the interpolation regimes.

Consider increasing the sample size from $n$ to $n^{\eta}$ for some $\eta>1$, while retaining the 
attack strength $r$. In the low interpolation regime, the optimal rate of convergence shifts from 
$r^{2(1\wedge\beta)}+n^{-\frac{2\beta}{2\beta+d}}$ to $r^{2(1\wedge\beta)}+n^{-\frac{2\beta \eta}{2\beta+d}}$. The latter rate is faster compared with that at the sample size $n$ if $r =o(n^{-\frac{\beta 
}{(2\beta+d)(1\wedge\beta)}})$. Otherwise, the rate stays the same, with the attack effect term 
$r^{2(1\wedge\beta)}$ dominating. The fact here that as the sample size increases the rate of 
convergence cannot be hurt is fully expected. However, this is no longer true when a highly interpolating estimator is employed. When the sample size increases from $n$ to $n^{\eta}$ for some $\eta>1$ while fixing $r = (\frac{n}{\log(n)})^{-1/d}$, the adversarial risk increases from $\Theta(\log\log(n))$ to $\Theta(\log(n))$. This curse of sample size alerts the potential damage of interpolation when a learning algorithm is under future $X$-attack. The underlying mechanism is intuitive: as the sample size increases, interpolating the data tends to produce a more spiky function, which becomes more susceptible to perturbation in inputs.

In contrast, higher dimensionality may sometimes mitigate the impact of interpolation on adversarial robustness. 
As shown in Figure~\ref{fig:high}, the critical value of $r$ separating converging from non-converging rates is of 
order $n^{-1/d}$. If the sample size is fixed and the dimension increases from $d$ to $k d$ for an integer $k>1$, 
then this critical value grows. This observation indicates that in high-dimensional settings, it is relatively 
easier to preserve the converging adversarial rates. The underlying intuition is that a higher-dimensional input 
space offers more room to disperse data points. As a result, the variability of interpolating estimators can be 
reduced. See also \cite{Wu2023law} for a related phenomenon regarding the lower bounds on the Lipschitz constants 
of interpolating functions.

\subsection{Implications for adversarial robustness in DNNs}

The minimax theory developed in Theorems~\ref{theo:low}--\ref{theo:high} provides a procedure-independent limit of 
the adversarial robustness of $\delta$-interpolating estimators. An important implication is: over-parameterized 
DNNs, which have been empirically observed and theoretically shown to interpolate the training data, are 
fundamentally incapable of achieving robustness against future $X$-attacks. 

\begin{corollary}
  Suppose $\hat{f}_{\mathrm{DNN}}$ is a DNN estimator that nearly interpolates the training data in the sense  
  \begin{equation}\label{eq:inter_cond}
    n^{-1}\sum_{i=1}^{n}[Y_i - \hat{f}_{\mathrm{DNN}}(X_i)]^2 \leq n^{-1}C_3^2\sigma^2, 
  \end{equation}
   which implies $\hat{f}_{\mathrm{DNN}} \in \mathcal{I}(C_3\sigma)$ where $C_3$ is the constant given in Theorem~\ref{theo:high}. Then, its worst case adversarial risk $\sup_{f^* \in \mathcal{H}(\beta, 
   L)}R_{r}(\hat{f}_{\mathrm{DNN}}, f^*)$ is lower bounded by the minimax rates established in 
   Theorem~\ref{theo:high}.
\end{corollary}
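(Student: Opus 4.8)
The plan is to reduce the statement to a one‑line consequence of Theorem~\ref{theo:high}: a fixed estimator that belongs to the interpolation class $\mathcal{I}(C_3\sigma)$ cannot, on its worst instance, do better than the minimax value over $\mathcal{I}(C_3\sigma)$, which Theorem~\ref{theo:high} bounds from below. So the whole argument amounts to (i) certifying the membership $\hat{f}_{\mathrm{DNN}} \in \mathcal{I}(C_3\sigma)$, and (ii) quoting the lower-bound half of Theorem~\ref{theo:high}.

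For step (i), I would multiply the near-interpolation hypothesis (\ref{eq:inter_cond}) through by $n$ to obtain $\sum_{i=1}^{n}[Y_i - \hat{f}_{\mathrm{DNN}}(X_i)]^2 \le C_3^2\sigma^2$. Since every summand is nonnegative, each one obeys $[Y_i - \hat{f}_{\mathrm{DNN}}(X_i)]^2 \le C_3^2\sigma^2$, hence $\max_{1\le i\le n}|\hat{f}_{\mathrm{DNN}}(X_i) - Y_i| \le C_3\sigma$. Because a DNN estimator is measurable in $x$ and in $Z_n$, this is precisely the defining condition in (\ref{eq:qinghe1}) with $\delta = C_3\sigma$, so $\hat{f}_{\mathrm{DNN}} \in \mathcal{I}(C_3\sigma)$.

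For step (ii), since $\hat{f}_{\mathrm{DNN}}$ is one particular element of $\mathcal{I}(C_3\sigma)$,
\[
  \sup_{f^* \in \mathcal{H}(\beta, L)} R_r(\hat{f}_{\mathrm{DNN}}, f^*) \;\ge\; \inf_{\hat{f} \in \mathcal{I}(C_3\sigma)}\sup_{f^* \in \mathcal{H}(\beta, L)} R_r(\hat{f}, f^*),
\]
and I would then invoke Theorem~\ref{theo:high} with $\delta = C_3\sigma$, which satisfies the hypothesis $0 \le \delta \le C_3\sigma$ imposed there. Reading off its three regimes yields the claimed conclusion: when $nr^d \to 0$ the worst-case adversarial risk of $\hat{f}_{\mathrm{DNN}}$ is $\gtrsim r^{2(1\wedge\beta)} + n^{-\frac{2\beta}{2\beta+d}} + nr^d$; when $nr^d \gtrsim 1$ it is $\Omega(1)$; and when $nr^d > C_4\log n$ it is $\Omega(\log(nr^d))$.

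There is no real obstacle here, as both steps are routine. The only points deserving a line of care are that Theorem~\ref{theo:high} is stated under the Gaussian noise model $\xi_i$ i.i.d.\ $N(0,\sigma^2)$, so the corollary silently inherits that assumption, and that the constant $C_3$ appearing in the near-interpolation condition (\ref{eq:inter_cond}) must be taken to be the very same $C_3$ as in Theorem~\ref{theo:high}, so that the interpolation radius $C_3\sigma$ for which the lower bound was proven matches the class containing $\hat{f}_{\mathrm{DNN}}$.
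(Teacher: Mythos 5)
Your proposal is correct and matches the paper's intended reasoning exactly: the paper treats the corollary as immediate, with the only substantive step being that the averaged condition (\ref{eq:inter_cond}) forces each individual residual to satisfy $|Y_i-\hat{f}_{\mathrm{DNN}}(X_i)|\le C_3\sigma$ (by nonnegativity of the summands), after which the bound follows because a particular member of $\mathcal{I}(C_3\sigma)$ cannot beat the minimax value over that class given in Theorem~\ref{theo:high}. Your added remarks about inheriting the Gaussian noise assumption and matching the constant $C_3$ are accurate and appropriate.
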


Condition~(\ref{eq:inter_cond}) can be satisfied for DNN estimators trained via gradient descent in a nonparametric regression setting similar to ours \citep[see, e.g., Theorem 1 of][]{Kohler2021}. A real data example will be provided in Section~\ref{sec:simu} to illustrate the impact of interpolation, together with effects of the input dimension.

It is worth mentioning that several kernel interpolating estimators and neural network estimators are proven to be inconsistent under the standard $L_2$-risk \citep[see, e.g.,][]{rakhlin2019consistency, Kohler2021, buchholz2022kernel, Li2023interpolation, Haas2023}. Although these results also imply an inconsistent lower bound for the adversarial $L_2$-risk, they cannot lead to the full spectrum of adversarial rates established in Theorems~\ref{theo:low}--\ref{theo:high} and, more importantly, cannot rule out the possibility of achieving adversarial robustness by improving network architecture. In fact, it has been shown that with suitable modifications to the activation function, interpolating neural networks can recover the optimal standard $L_2$-risk \citep[see Section 5 of][]{Haas2023}. In contrast, our theory provides a fundamental understanding that applies to any interpolating procedure under future $X$-attacks. It demonstrates that achieving optimal adversarial robustness is hopeless for any $\delta$-interpolating neural networks with bounded $\delta$, regardless of any improvement in network architecture.

\section{Numerical experiments}\label{sec:simu}

In this section, we conduct several numerical experiments to illustrate the theoretical results presented in 
Section~\ref{sec:main}. We first consider the univariate case ($d = 1$) in simulations, as it provides a basic 
setting to compare classical nonparametric methods. We then examine a real data example with multivariate 
features, focusing on the robustness of neural network.

\subsection{Synthetic simulation}

The regression function is specified under the following three scenarios:
\begin{description}
  \item[Case 1] $f^*(x) = x^3 - x$;
  \item[Case 2] $f^*(x) = x+ \cos(3x)$;
  \item[Case 3] $f^*(x) = \exp(-x^2) \sin(5x)$.
\end{description}
The input variable $X$ is uniformly distributed on $[-2, 2]$, and the random error term follows a Gaussian 
distribution $N(0, 0.5)$. For each case, we generate training datasets with sample sizes $n_{\text{train}} = 80, 
150, 300$.

We compare several nonparametric methods. Note that the true regression functions in all three cases belong to the 
H\"{o}lder class with arbitrary smoothness $\beta$. We consider the first method (LP) as the local polynomial 
regression estimator with a rectangular kernel and polynomial degree $\ell = 7$. The second method (IP$_1$) is the 
interpolating estimator defined in (\ref{eq:lpe_interpolated}) with $\tau=0$, $\delta = 0.75\sqrt{\log\log n}$, 
and $\hat{f}$ taken as the LP estimator. The choice of $\delta$ in IP$_1$ yields a moderately interpolating 
estimator. The third method (IP$_2$) is the same as IP$_1$ except with $\delta = 0.3$, resulting in a highly 
interpolating estimator based on the LP estimator. The fourth method (SI) is the local polynomial estimator with a 
singular kernel $K(u) = |u|^{-0.2}(1 - |u|)_+^2$, which has been theoretically shown to exactly interpolate the 
training data while simultaneously achieving minimax optimality in standard $L_2$-risk \citep{Chhor2024Benign}. 
For all the local polynomial estimators, bandwidth selection is performed using an independently generated 
validation set of size $n_{\text{vali}} = 100$. We consider a $1$-nearest neighbor regression estimator (1-N) as 
another classical interpolating method for comparison. 

Additionally, we include neural networks in our comparison. Specifically, we consider an overparameterized 
two-layer ReLU network with a skip connection, defined by the predictor $f_{\theta,a_0,b_0}:\mathbb{R} \to 
\mathbb{R}$, 
$$
f_{\theta, a_0, b_0}(x)=\sum_{j=1}^m a_j\left(w_j x+b_j\right)_{+}+a_0 x+b_0,
$$
where $\theta \in \mathbb{R}^{3m}$ denotes the parameters $\{a_j, w_j, b_j\}_{j=1}^m$. To allow for 
overparameterization and interpolation, we do not restrict the width $m$. The estimator is learned by minimizing 
the norm of the weights subject to interpolation constraints:
\begin{equation}\label{eq:MNN}
  \hat{f}_{\mathrm{MNN}}=\argmin _{f_{\theta, a_0, b_0}}\|\theta\|^2 \quad \text { s.t. } \quad \forall\, 1 \leq i 
  \leq n,\; f_{\theta, a_0, b_0}(X_i)=Y_i.
\end{equation}
This minimum-norm neural estimator (MNN) has been adopted in the literature to study the phenomenon of benign 
overfitting \citep[see, e.g.,][]{savarese2019infinite, Ergen2021, hanin2021ridgeless, boursier2023penalising, 
joshi2024noisy}.

To evaluate the performance of the six methods, we generate a testing dataset with $n_{\text{test}} = 100$ and 
compute the adversarial loss defined by
\begin{equation}\label{eq:xishu}
  n_{\text{test}}^{-1} \sum_{i=1}^{n_{\text{test}}} \max_{x' \in [-2 \vee (x_i - r), 2\wedge (x_i+r)]}[f^*(x_i) - 
  \hat{f}(x')]^2,
\end{equation}
where the future $X$-attack magnitude $r$ varies from $0$ to $0.1$. Since the inner maximum in (\ref{eq:xishu}) 
does not admit a closed-form solution in our setting, we approximate it by computing the maximum over a dense grid 
within the interval $[-2 \vee (x_i - r), 2 \wedge (x_i + r)]$. We repeat the above simulation procedure $R = 100$ 
times and report the median of (\ref{eq:xishu}) over the replications for each method. The results are displayed 
in Figure~\ref{fig:risk_simu}.

\begin{figure}[t]
\centering
\vspace{-0.35cm}	
\setlength{\abovecaptionskip}{2pt}

\subfigure[(a) Case 1]{
\includegraphics[width=1\linewidth]{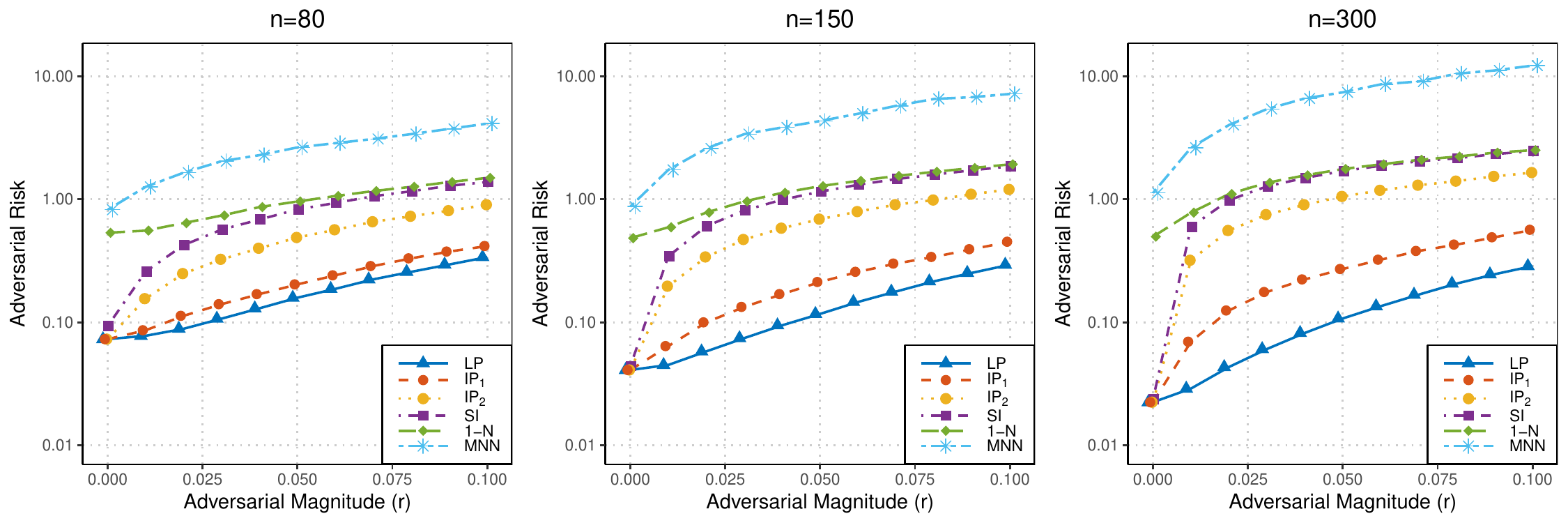}}

\subfigure[(b) Case 2]{
\includegraphics[width=1\linewidth]{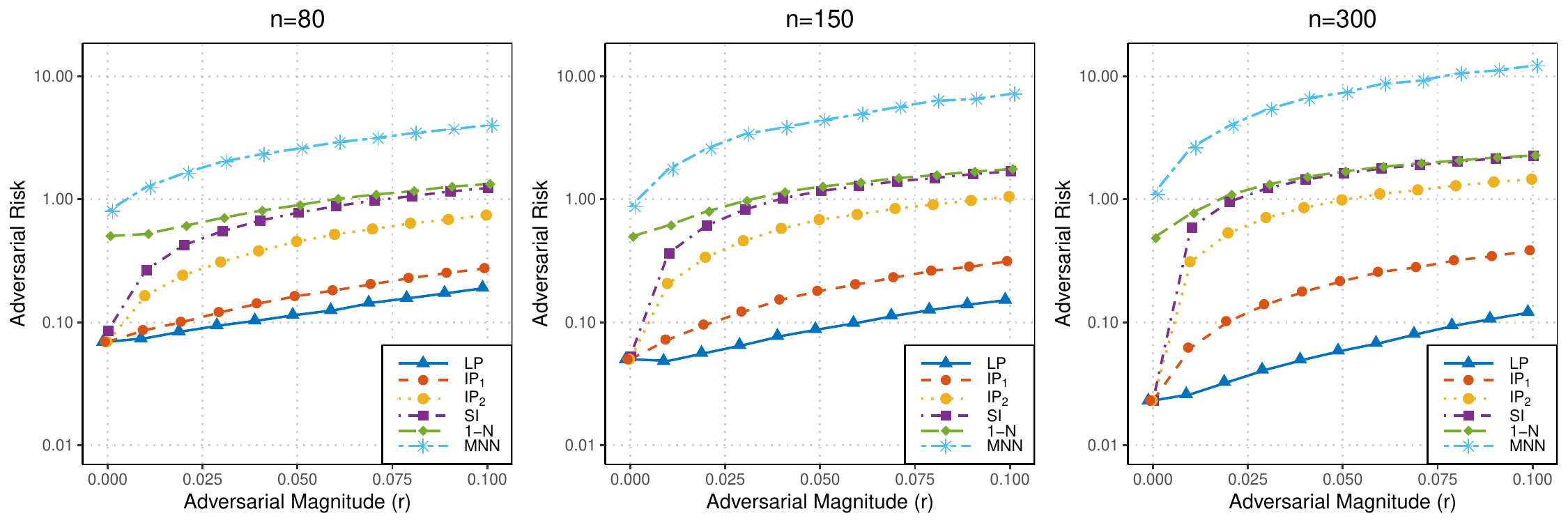}}

\subfigure[(c) Case 3]{
\includegraphics[width=1\linewidth]{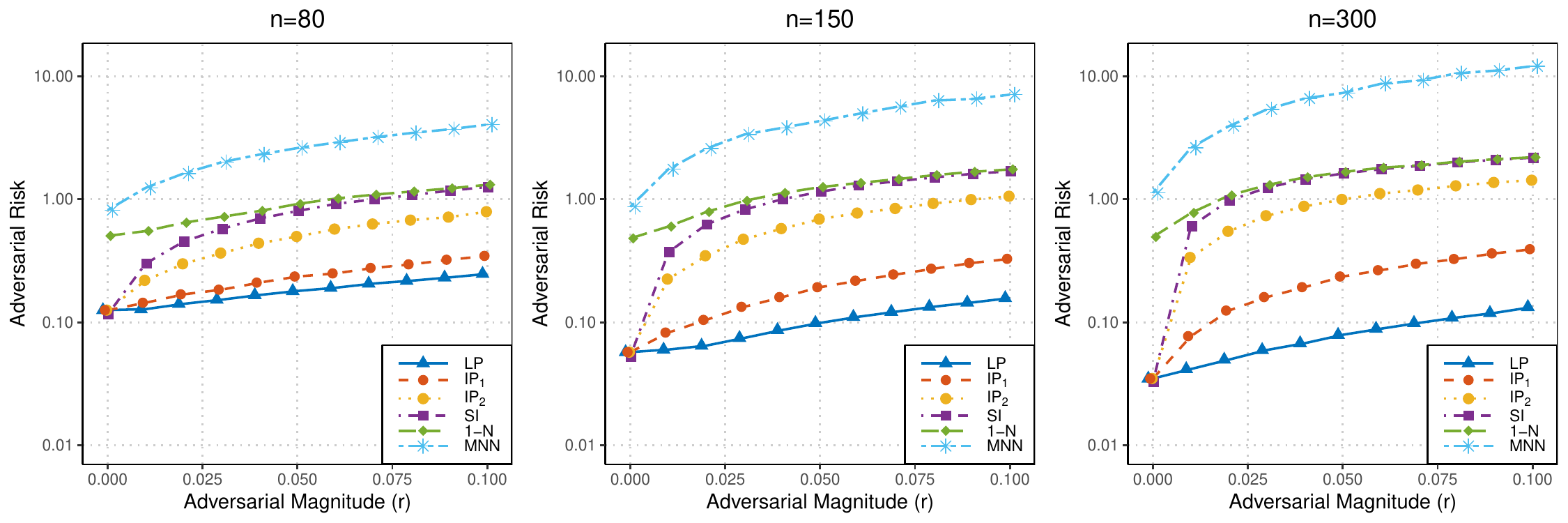}}

\caption{Adversarial risk of the five competing methods: results for Case 1 are shown in row (a), Case 2 in row 
(b), and Case 3 in row (c).}
\label{fig:risk_simu}
\end{figure}

From Figure~\ref{fig:risk_simu}, we observe that the classical local polynomial (LP) estimator and its mildly 
interpolating variant (IP$_1$) exhibit greater stability under future $X$-attacks compared to the other three 
highly interpolating estimators: IP$_2$, SI, ,1-N, and MNN. For example, in Case 1 with $n = 80$ and $r = 0$, the 
median adversarial losses (with standard error in parentheses) for LP, IP$_1$, IP$_2$, SI, ,1-N, and MNN are 0.072 
(0.021), 0.072 (0.021), 0.072 (0.021), 0.093 (0.010), 0.534 (0.009), and 0.823 (0.822), respectively. However, 
when the attack radius increases to $r = 0.05$, their corresponding adversarial losses rise to 0.158 (0.043), 
0.203 (0.043), 0.489 (0.043), 0.827 (0.017), 0.959 (0.014), and 2.639(2.155), respectively. These results 
highlight that while some interpolating estimators (e.g., SI) may perform reasonably well under the standard 
setting ($r = 0$), they become significantly more vulnerable as the strength of the future $X$-attack increases. 
This observation aligns with our theoretical findings: interpolating estimators are suboptimal in terms of 
adversarial robustness when the attack magnitude exceeds certain critical values. Additionally, the 1-N method 
performs poorly even in the absence of adversarial perturbations, which is consistent with existing theory that 
achieving optimality requires increasing the number of neighbors \citep[see, e.g.,][]{Gyorfi2002}. Moreover, the 
MNN estimator also exhibits a higher risk compared to the estimators with optimal rates (such as LP and SI) in the standard setting with $r=0$. This phenomenon has also been empirically observed and studied by 
\cite{Mallinar2022}, who defined the overfitting behavior of neural networks as \emph{tempered overfitting}, which is an overfitting that lies between benign and catastrophic. For further discussion on this phenomenon, see \cite{joshi2024noisy}. 

Another important observation concerns the effect of sample size on adversarial risk. For the non-interpolating 
estimator (LP), we observe that the adversarial risk consistently decreases as the sample size increases. For 
instance, in Case 2 with $r = 0.1$, the adversarial risk of the LP method decreases from 0.191 (0.023) to 0.121 
(0.005) as $n$ increases from 80 to 300, indicating that a larger sample size continues to enhance adversarial 
robustness of non-interpolating estimators. In contrast, the behavior of highly interpolating estimators differs 
significantly. In the same case, the adversarial loss of the SI method increases from 1.233 (0.036) to 2.246 
(0.026) as the sample size grows from 80 to 300. This phenomenon suggests that when the attack magnitude is 
strong, increasing the amount of data can actually harm the adversarial robustness of interpolating estimators. 
Based on the theoretical insight in Section~\ref{subsec:remark}, this counterintuitive phenomenon (i.e., the curse 
of sample size) is expected.

\subsection{Real data illustration}\label{sec:real_data}

To further illustrate the theoretical insights in Section~\ref{sec:main}, we conduct an empirical study on the 
Auto MPG dataset \citep{Quinlan1993} using a neural network and examine its robustness as the training error 
approaches zero. The Auto MPG is a benchmark dataset in nonparametric regression, which contains 392 observations 
with 7 continuous features and one categorical variable. The task is to predict miles per gallon (mpg) based on 
the remaining features. We randomly select $n_{\mathrm{train}} = 300$ observations for training and use the 
remaining $n_{\mathrm{test}} = 92$ for testing. All continuous predictors are standardized to $[0,1]$, and the 
categorical variable is encoded as dummy variables. This train-test split is repeated 100 times to evaluate the 
performance of estimators. 

We consider two settings to examine the effect of dimensionality $d$ on adversarial robustness. In Setting 1 
($d=2$), the two most relevant continuous predictors, weight and horsepower, are used to predict mpg. In Setting 2 
($d=7$), all predictors are included in the model. In both settings, we employ a feedforward neural network with 
three hidden layers of sizes 512, 256, and 128. The network is trained for up to 1000 epochs using gradient 
descent with a learning rate of 0.001. At each epoch, we record the training error, the standard prediction error, 
and the adversarial prediction error, defined as
\begin{equation*}
  n_{\mathrm{test}}^{-1}\sum_{i=1}^{n_{\mathrm{test}}}\max_{x'\in [0,1]^d \cap B_{\infty}(X_i,r)}[Y_i - 
  \hat{f}_{\mathrm{NN}}(x')]^2, 
\end{equation*}
of the neural network estimator $\hat{f}_{\mathrm{NN}}$. Figure~\ref{fig:real_1} reports the medians of these 
errors over the 100 random train-test splits.

\begin{figure}[!t]
  \centering
  \includegraphics[width=1\linewidth]{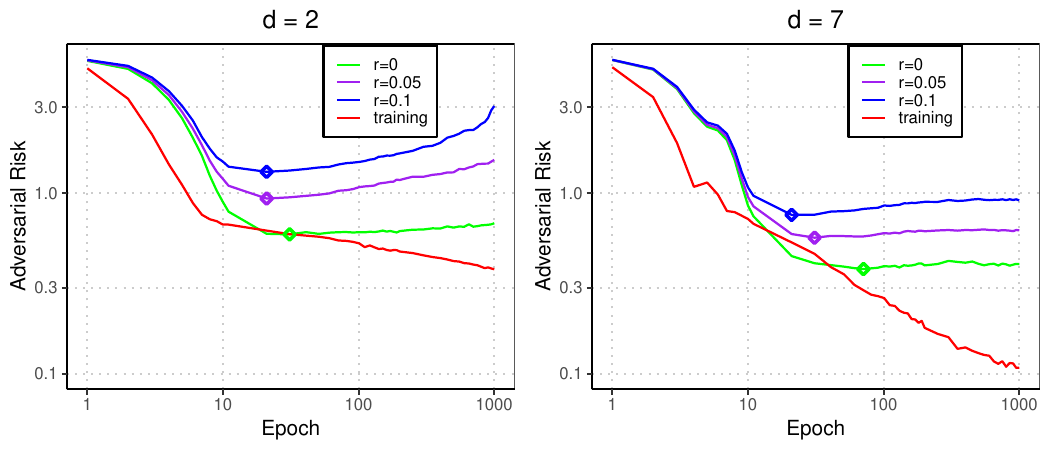}
  \caption{Standard and adversarial risks of the neural network across epochs. The $\diamond$ symbols indicate the 
  minimum value along each risk curve.} 
  \label{fig:real_1}
\end{figure}

From Figure~\ref{fig:real_1}, we observe that as the training error of the neural network gradually decreases, the 
standard prediction error shows at most a mild upward trend, whereas the adversarial risk increases much more 
noticeably. For example, in the case $d = 2$, from epoch 30 to 1000, the standard risk rises only slightly from 
0.595 (0.007) to 0.681 (0.253), corresponding to a $14\%$ increase. In comparison, the adversarial risk with $r = 
0.1$ increases from 1.313 (0.010) to 3.042 (1.213), more than doubling over this range of epochs. This phenomenon 
supports our theoretical understanding that overfitted models may compromise robustness under future $X$-attacks. 
In the case $d = 7$, we observe that both standard and adversarial risks are lower compared to $d = 2$, since more 
informative features are included. After comparing the relative magnitudes of standard and adversarial risks, we 
find that the inflation of adversarial risk is less pronounced when the training error is small compared to that 
in the $d = 2$ case. This phenomenon aligns with Section~\ref{subsec:remark}, which states that high 
dimensionality makes it easier for interpolators to maintain the standard minimax rate.

\begin{figure}[!t]
  \centering
  \includegraphics[width=1\linewidth]{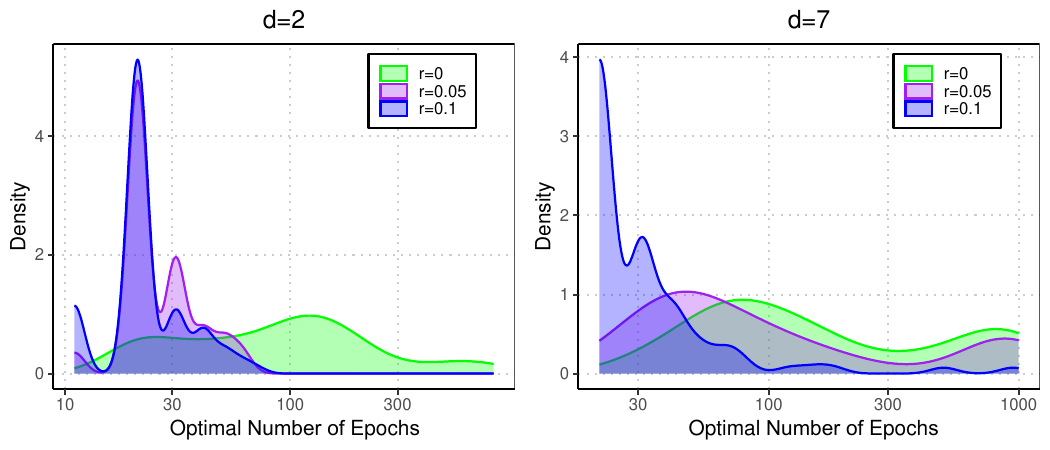}
  \caption{Density of the optimal number of epochs at which the minimum error is attained over 100 replications.} 
  \label{fig:real_2}
\end{figure}

Another important observation from Figure~\ref{fig:real_1} concerns the optimal number of epochs for attaining 
minimum risk. The median of the adversarial risk reaches its minimum earlier than that of the standard risk during 
neural network training. For example, in the case of $d=2$, the epochs at which the standard risk and adversarial 
risks attain their minima are 30, 20, and 20, respectively. We also plot the density of the optimal number of 
epochs for both standard and adversarial risks in Figure~\ref{fig:real_2}. The results show that the neural 
network tends to achieve optimal robustness when training is stopped considerably earlier than what is optimal for 
minimizing the standard risk.

\section{Discussion}\label{sec:disc}

Understanding why DNNs generalize well is a central problem in deep learning theory. As a key aspect of reliable generalization, adversarial robustness has received growing attention in recent years. Motivated by strong empirical and theoretical evidence that training over-parameterized DNNs often yields solutions that fit the training data exactly, we investigate the adversarial robustness of interpolating estimators and examine whether interpolation contributes to the fragility of DNNs under future adversarial $X$-attack or covariates measured with errors. Our minimax risk analysis reveals that highly interpolating estimators fail to attain the minimax optimal adversarial rate. In particular, the phase transition of interpolators departing from the standard nonparametric rate can occur at perturbation magnitudes much smaller than those of regular estimators.

An important direction for future research is to develop theoretically grounded methods for enhancing the robustness of interpolating DNNs. Existing studies have explored various regularization techniques, such as adversarial training \citep{Goodfellow2015, Madry2018Towards}, dropout \citep{Srivastava2014}, and early stopping \citep[see, e.g.,][and references therein]{Ji2021, kohler2025rate}, to mitigate overfitting in neural network training. However, the theoretical foundations for the robust generalization of these practices remain incomplete. Our numerical results also suggest that achieving adversarial robustness may require stopping gradient descent considerably earlier than the point that minimizes the standard risk. A theoretical understanding of the adversarial robustness properties of such regularization methods remains an important direction for future research.

\newpage
\appendix

\section*{Appendix}\label{appendix}
\renewcommand{\thesection}{A.\arabic{section}}
\numberwithin{equation}{section}

\section{Proof of Theorem~\ref{theo:lower}}

When deriving the minimax lower bounds, we focus on the setting in which $\xi_i$ are i.i.d. $N(0,\sigma^2)$. Since the class $\mathcal{I}(\delta)$ is included in the class of all measurable estimators, based on Theorem~4 in \cite{Peng2025aos}, we obtain 
\begin{equation}\label{eq:adversairla_op}
  \inf_{\hat{f}\in \mathcal{I}(\delta)}\sup_{f^* \in \mathcal{H}(\beta, L)}R_{r}(\hat{f}, f^*) \geq 
  \inf_{\hat{f}}\sup_{f^* \in \mathcal{H}(\beta, L)}R_{r}(\hat{f}, f^*) \gtrsim r^{2(1 \wedge \beta)} + 
  n^{-\frac{2\beta}{2\beta+d}}. 
\end{equation}

To establish the lower bound $\int_{\mathcal{X}}\mathbb{E}\max_{i \in \mathcal{S}_p(x,r)}(|\xi_{i}| - \delta)_+^2dx$ in Theorem~\ref{theo:lower}, we restrict the function class to $\{f_0 \}$, where $f_0\equiv0$. It is straightforward to verify that $\{f_0 \} \subset \mathcal{H}(\beta, L)$ for all $0 <\beta < \infty$. Then, the adversarial minimax risk is lower bounded as follows:
\begin{equation}\label{eq:goal2}
\begin{split}
\inf_{\hat{f}\in \mathcal{I}(\delta)}\sup_{f^* \in \mathcal{H}(\beta, L)}R_{r}(\hat{f}, f^*) & \geq 
\inf_{\hat{f}\in \mathcal{I}(\delta)}\sup_{f^* \in \{f_0 \}}R_{r}(\hat{f}, f^*) = \inf_{\hat{f}\in 
\mathcal{I}(\delta)}R_{r}(\hat{f}, f_0)  \\
    & = \inf_{\hat{f}\in \mathcal{I}(\delta)} \mathbb{E}\int_{\mathcal{X}}\sup_{x' \in \mathcal{X} \cap B_p(x,r)} \big[\hat{f}(x') \big]^2  \mathbb{P}_X(dx)\\
     & \geq \barbelow{\mu} \inf_{\hat{f}\in \mathcal{I}(\delta)} \mathbb{E}\int_{\mathcal{X}}\sup_{x' \in \mathcal{X} \cap B_p(x,r)} \big[\hat{f}(x')  \big]^2   dx,
\end{split}
\end{equation}
where the last inequality follows from $\mathcal{X} = [0,1]^d$ and Assumption~\ref{ass:bounded_density}.

Since $f_0\equiv0$, the interpolation constraint $\hat{f} \in \mathcal{I}(\delta)$ implies that 
\begin{equation}\label{eq:gogo_1}
  |\hat{f}(X_{i}) - \xi_{i}| \leq \delta, \quad i =1,\ldots, n,
\end{equation}
where $\xi_i$ i.i.d. $N(0, \sigma^2)$. Recall the index set $\mathcal{S}_p(x,r) = \{ i: X_i \in \mathcal{X} \cap 
B_p(x,r) \}$. Then, the last term in (\ref{eq:goal2}) is lower bounded by
\begin{equation}\label{eq:duzi1}
\begin{split}
   \inf_{\hat{f}\in \mathcal{I}(\delta)} \mathbb{E}\int_{\mathcal{X}}\sup_{x' \in \mathcal{X} \cap B_p(x,r)} 
   \big[\hat{f}(x') \big]^2   dx & \geq \inf_{\hat{f}\in \mathcal{I}(\delta)} \mathbb{E}\int_{\mathcal{X}}\max_{i \in \mathcal{S}_p(x,r)} 
   \big[\hat{f}(X_i) \big]^2   dx\\
   & \geq \mathbb{E}\int_{\mathcal{X}}\max_{i \in \mathcal{S}_p(x,r)} 
   (|\xi_{i}| - \delta)_+^2dx \\
    & = \int_{\mathcal{X}}\mathbb{E}\max_{i \in \mathcal{S}_p(x,r)} (|\xi_{i}| - \delta)_+^2dx,
\end{split}
\end{equation}
where the second inequality follows from (\ref{eq:gogo_1}), and the last equality follows from Fubini's theorem. Combining (\ref{eq:adversairla_op}) with (\ref{eq:duzi1}), we obtain the minimax lower bound in 
Theorem~\ref{theo:lower}.

\section{Proof of Theorem~\ref{theo:upper}}

When deriving the upper bound in Theorem~\ref{theo:upper}, we consider the setting that $\xi_i$ are i.i.d. sub-Gaussian random variables. We decompose the adversarial $L_2$-risk of $\hat{f}_{\delta,\tau}$ into the following two terms:
\begin{equation}\label{eq:goal5}
  \begin{split}
    R_{r}(\hat{f}_{\delta,\tau}, f^*)& = \mathbb{E}\sup_{X' \in \mathcal{X} \cap 
    B_p(X,r)}\left[\hat{f}_{\delta,\tau}(X') - f^*(X)\right]^2\\
       & \leq 2\mathbb{E}\sup_{X' \in \mathcal{X} \cap B_p(X,r)}\left[\hat{f}_{\delta,\tau}(X') - 
       \hat{f}(X')\right]^2 + 2\mathbb{E}\sup_{X' \in \mathcal{X} \cap B_p(X,r)}\left[ \hat{f}(X') - 
       f^*(X)\right]^2.
  \end{split}
\end{equation}

We first upper bound the second term in (\ref{eq:goal5}). Since the density of $X$ is upper bounded by $\bar{\mu}$, to control the second term in (\ref{eq:goal5}), it 
suffices to derive an upper bound on the pointwise risk for any $x \in \mathcal{X}$. Specifically, we obtain
\begin{equation}\label{eq:yuandan2}
\begin{split}
    & \mathbb{E}\sup_{x' \in \mathcal{X} \cap B_p(x,r)}\left[ \hat{f}(x') - f^*(x)\right]^2\\
     &\leq 2\mathbb{E}\sup_{x' \in \mathcal{X} \cap B_p(x,r)}\left[ \hat{f}(x') - f^*(x')\right]^2+ 
     2\mathbb{E}\sup_{x' \in \mathcal{X} \cap B_p(x,r)}\left[ f^*(x') - f^*(x) \right]^2\\
     & \lesssim r^{2\beta} + n^{-\frac{2\beta}{2\beta+d}}+r^{2(1\wedge \beta)} \lesssim r^{2(1\wedge \beta)} + 
     n^{-\frac{2\beta}{2\beta+d}},
\end{split}
\end{equation}
where the second inequality follows from the condition (\ref{eq:local_risk}) and Lemma B.1 in \cite{Peng2025aos}.

To upper bound the first term in \eqref{eq:goal5}, it suffices to upper bound the pointwise risk 
$\mathbb{E}\sup_{x' \in \mathcal{X} \cap B_p (x,r)}[ \hat{f}_{\delta,\tau}(x') - \hat{f}(x') ]^2$ for an arbitrary $x \in \mathcal{X}$. Recall the index sets $\mathcal{S}_p(x,r) = \{ i: X_i \in \mathcal{X} \cap B_p(x,r) \}$ and 
$\mathcal{S}_p(x,r+\tau) = \{ i: X_i \in \mathcal{X} \cap B_p(x,r+\tau) \}$. By the definition of the 
$\delta$-interpolating estimator in \eqref{eq:lpe_interpolated}, we obtain
    \begin{equation}\label{eq:tusi_1}
  \begin{split}
\mathbb{E}\sup_{ x' \in \mathcal{X} \cap B_p (x,r)}\left[ \hat{f}_{\delta,\tau}(x') - \hat{f}(x') \right]^2& = 
\mathbb{E}\sup_{\substack{x' \in \mathcal{X} \cap B_p (x,r)  \cap [ \cup_{i=1}^n B_p(X_i, \tau) ] }}\left[ 
\hat{f}_{\delta,\tau}(x') - \hat{f}(x') \right]^2 \\ 
 & \leq \mathbb{E}\sup_{\substack{x' \in \mathcal{X} \cap B_p (x,r) \cap [ \cup_{i \in \mathcal{S}_p(x,r + \tau)} 
 B_p(X_i, \tau) ] }}\left[ \hat{f}_{\delta,\tau}(x') - \hat{f}(x') \right]^2 \\ 
 & = \mathbb{E}\sup_{\substack{x' \in \mathcal{X} \cap  \{ \cup_{i \in \mathcal{S}_p(x,r + \tau)} [ B_p (x,r) \cap 
 B_p(X_i, \tau)] \} }}\left[ \hat{f}_{\delta,\tau}(x') - \hat{f}(x') \right]^2\\
 & \leq \mathbb{E} \max_{i \in \mathcal{S}_p(x,r+\tau)} \left[ \left| Y_i - \hat{f}(X_i)  \right| - \delta 
 \right]_+^2. 
  \end{split}
  \end{equation}
  The first step in (\ref{eq:tusi_1}) uses the fact that $\hat{f}_{\delta,\tau}(x') = \hat{f}(x')$ whenever 
  $x' \in \mathcal{X} \setminus \cup_{i = 1}^n B_p(X_i, \tau)$. The second step in (\ref{eq:tusi_1}) follows 
  from the inclusion 
  \begin{equation}\label{eq:include}
    B_p (x,r)  \cap [ \cup_{i=1}^n B_p(X_i, \tau) ] \subseteq \cup_{i \in \mathcal{S}_p(x,r + \tau)} B_p(X_i, \tau).
  \end{equation}
  To verify (\ref{eq:include}), take any $x'$ in the left-hand side of (\ref{eq:include}). Then there exists an index $i$ such that $\|x' - x\|_p \leq r$ and $\| x' - X_i\|_p \leq \tau$. The triangle inequality implies $\|X_i - x\|_p \leq r + \tau$, hence $i \in \mathcal{S}_p(x,r + \tau)$ and $\| x' - X_i\|_p \leq \tau$. The last step in \eqref{eq:tusi_1} is a direct consequence of the definition in \eqref{eq:lpe_interpolated} and the condition (\ref{eq:lianjie}). Indeed, for any  $x' \in \mathcal{X} \cap  \{ \cup_{i \in \mathcal{S}_p(x,r + \tau)} [ B_p (x,r) \cap 
 B_p(X_i, \tau)] \}$, we have
  \begin{equation*}\label{eq:kuainian}
  \begin{split}
     \left| \hat{f}_{\delta,\tau}(x') - \hat{f}(x') \right| & \leq \max_{i \in \mathcal{S}_p(x',\tau)} \left[ \left| Y_i 
     - \hat{f}(X_i) \right| - \delta \right]_+  \leq \max_{i \in \mathcal{S}_p(x,r+\tau)} \left[ \left| Y_i - \hat{f}(X_i) \right| - 
     \delta \right]_+.
  \end{split}
\end{equation*}
Therefore, the inequality (\ref{eq:tusi_1}) is proved. 

Continuing from \eqref{eq:tusi_1}, we obtain  
      \begin{equation}\label{eq:yuandan1}
  \begin{split}
&\mathbb{E}\sup_{ x' \in \mathcal{X} \cap B_p (x,r)}\left[ \hat{f}_{\delta,\tau}(x') - \hat{f}(x') \right]^2 \leq \mathbb{E} \max_{i \in \mathcal{S}_p(x,r+\tau)}\left[ \left| Y_i - \hat{f}(X_i)  \right| - \delta 
 \right]_+^2 \\
      &\leq \mathbb{E} \max_{i \in \mathcal{S}_p(x,r+\tau)}\left[ \left| Y_i - f^*(X_i) \right| + \left|f^*(X_i) 
      -\hat{f}(X_i)  \right| - \delta \right]_+^2 \\
       & \leq 2 \mathbb{E} \max_{i \in \mathcal{S}_p(x,r+\tau)}\left( \left| Y_i - f^*(X_i) \right|  - \delta 
       \right)_+^2 + 2\mathbb{E} \max_{i \in \mathcal{S}_p(x,r+\tau)}\left[f^*(X_i) -\hat{f}(X_i)  \right]^2 \\
       & \leq 2 \mathbb{E} \max_{i \in \mathcal{S}_p(x,r+\tau)}\left( \left| \xi_i \right|  - \delta \right)_+^2 + 
       2\mathbb{E}\sup_{x' \in \mathcal{X} \cap B_p (x,r+\tau)}\left[f^*(x') -\hat{f}(x')  \right]^2\\
       & \lesssim  \mathbb{E} \max_{i \in \mathcal{S}_p(x,r+\tau)}\left( \left| \xi_i \right|  - \delta 
       \right)_+^2+ (r+\tau)^{2(1\wedge \beta)} + n^{-\frac{2\beta}{2\beta+d}},  
  \end{split}
  \end{equation}
  where the last step follows from \eqref{eq:local_risk}. Substituting \eqref{eq:yuandan2} and \eqref{eq:yuandan1} into \eqref{eq:goal5} completes the proof.

\section{Proof of Theorem~\ref{theo:low}}\label{sec:proof_3}
 
\subsection{Notation and preliminaries}

Recall that there exist constants $c>0$ and $0<\gamma\leq 1$ satisfying  
\begin{equation}\label{eq:gogo_3}
  0 \leq nr^d \leq c n^{\gamma}.
\end{equation}
Define $\mathcal{S}_p(x,r) \triangleq \{ i: X_i \in \mathcal{X} \cap B_p(x,r) \}$, $n_x \triangleq 
\mathrm{Card}(\mathcal{S}_p(x,r))$, $\bar{r} \triangleq (cn^{\gamma - 1})^{1/d}$, $\mathcal{S}_p(x,\bar{r}) \triangleq \{ i: X_i \in \mathcal{X} \cap B_p(x,\bar{r}) \}$, and $\bar{n}_x \triangleq 
\mathrm{Card}(\mathcal{S}_p(x,\bar{r}))$. 

We first derive a uniform upper bound on $\bar{n}_x$ (i.e., the cardinality of $\mathcal{S}_p(x,\bar{r})$) for all $x\in\mathcal{X}$. Note that the support $\mathcal{X}=[0,1]^d$ satisfies the regularity condition in 
\cite{Audibert2007}: there exist constants $r_{\mu}>0$ and $0<c_{\mu}<1$ such that for any $0 \leq r \leq r_{\mu}$ and $x \in \mathcal{X}$,
\begin{equation*}
  \lambda\left\{\mathcal{X} \cap  B_p(x,r) \right\} \geq c_{\mu} v_pr^d,
\end{equation*}
where $v_p$ is the volume of the unit $\ell_p$-ball in $\mathbb{R}^d$. This yields
\begin{equation}\label{eq:nianqing1}
  \mathbb{P}[X \in \mathcal{X} \cap B_p(x,r)] \geq \barbelow{\mu}\lambda[\mathcal{X} \cap B_p(x,r)] \geq 
  \barbelow{\mu}c_{\mu} v_pr^d   \triangleq c_1 r^d,
\end{equation}
where $0<c_1<1$. Define $c_2 \triangleq 2^{d+1} \bar{\mu} v_p$ and the event 
\begin{equation}\label{eq:gogo_event_1}
  \mathcal{E}_1 \triangleq \left\{ \forall x \in \mathcal{X}, \bar{n}_x \leq c_2 n\bar{r}^d \right\}.  
\end{equation}
Then we have the following Lemma. 
\begin{lemma}\label{lem:basic1}
  Suppose Assumption~\ref{ass:bounded_density} holds. Then there must exist two positive constant $C$ and $c_3$ such that 
  \begin{equation}\label{eq:gogo_key_pro}
  \mathbb{P}\left( \mathcal{E}_1 \right) \geq 1 - C n^{1-\gamma} \exp\left( -c_3n^{\gamma} \right).   
\end{equation}
\end{lemma}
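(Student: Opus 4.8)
The plan is to prove Lemma~\ref{lem:basic1} by a union bound over a fine grid of centers in $\mathcal{X}$, combined with a Chernoff-type deviation bound for the binomial counts. The key point is that although $\bar{n}_x$ varies continuously with $x$, the count $\bar{n}_x = \mathrm{Card}(\mathcal{S}_p(x,\bar{r}))$ changes only when $x$ crosses a boundary of some ball $B_p(X_i,\bar{r})$, and for nearby centers the counts are comparable. Concretely, I would cover $\mathcal{X} = [0,1]^d$ by a grid $\mathcal{G}$ of points spaced at distance of order $\bar{r}$ (in $\ell_p$), so that $|\mathcal{G}| \lesssim \bar{r}^{-d} \asymp n^{1-\gamma}/c$; the first factor $n^{1-\gamma}$ in (\ref{eq:gogo_key_pro}) comes from this cardinality. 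For any $x \in \mathcal{X}$ there is a grid point $x_0 \in \mathcal{G}$ with $\|x - x_0\|_p \lesssim \bar{r}$, hence $B_p(x,\bar{r}) \subseteq B_p(x_0, C'\bar{r})$ for a suitable constant $C'$, so $\bar{n}_x \le \mathrm{Card}(\mathcal{S}_p(x_0, C'\bar{r}))$. Thus it suffices to control, uniformly over the finitely many grid points $x_0$, the count of data points in the slightly enlarged ball $B_p(x_0, C'\bar{r})$.

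For a fixed center $x_0$, $N_{x_0} \triangleq \mathrm{Card}\{i : X_i \in \mathcal{X} \cap B_p(x_0, C'\bar{r})\}$ is $\mathrm{Binomial}(n, q_{x_0})$ with $q_{x_0} \le \bar{\mu}\, v_p (C'\bar{r})^d$ by Assumption~\ref{ass:bounded_density}; choosing $C'$ so that $\bar{\mu} v_p (C')^d \le 2^{d+1}\bar{\mu} v_p / \text{(something)}$, I can arrange $n q_{x_0} \le \tfrac12 c_2 n \bar{r}^d$ with $c_2 = 2^{d+1}\bar{\mu} v_p$ (the slack between $C'$ and the constant $2^{d+1}$ in the definition of $c_2$ is exactly what absorbs the grid enlargement). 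A standard Chernoff bound for the upper tail of the binomial then gives $\mathbb{P}(N_{x_0} > c_2 n\bar{r}^d) \le \exp(-c\, n\bar{r}^d)$ for an absolute constant $c$; since $n\bar{r}^d = c n^{\gamma}$, this is $\exp(-c c\, n^{\gamma}) \le \exp(-c_3 n^{\gamma})$. Taking the union bound over the $\lesssim n^{1-\gamma}$ grid points yields $\mathbb{P}(\mathcal{E}_1^c) \le C n^{1-\gamma}\exp(-c_3 n^{\gamma})$, which is the claim.

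I expect the main obstacle to be bookkeeping the constants in the discretization step: one must choose the grid spacing and the enlargement constant $C'$ jointly so that (i) the enlarged balls still have binomial mean bounded by a fixed fraction of $c_2 n\bar{r}^d$, leaving enough room in the Chernoff exponent, and (ii) the grid cardinality is genuinely $O(n^{1-\gamma})$ rather than carrying an extra dimension-dependent polynomial factor that would spoil the stated bound. A secondary subtlety is that $\bar{r} = (c n^{\gamma-1})^{1/d}$ may or may not be below the regularity threshold $r_{\mu}$ from \cite{Audibert2007}; since $\gamma \le 1$ we have $\bar{r} \to 0$, so $\bar{r} \le r_{\mu}$ for $n$ large, and the finitely many small $n$ are absorbed into the constant $C$ (the bound is vacuous, i.e. $\ge 0$, once $Cn^{1-\gamma}\exp(-c_3 n^{\gamma}) \ge 1$). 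Everything else is a routine application of the Chernoff inequality and the union bound.
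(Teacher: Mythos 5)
Your proposal is correct and follows essentially the same route as the paper's proof: an $\bar r$-scale covering of $\mathcal{X}$ so that $B_p(x,\bar r)\subseteq B_p(z_i,2\bar r)$ for some covering point, a multiplicative Chernoff bound on the binomial count in each enlarged ball (whose mean is at most half of $c_2 n\bar r^d$ by the density upper bound), and a union bound over the $O(\bar r^{-d})=O(n^{1-\gamma})$ covering points. The constant bookkeeping you flag is handled in the paper exactly as you anticipate, with $C'=2$ and $c_2=2^{d+1}\bar\mu v_p$.
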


\begin{proof}[Proof of Lemma~\ref{lem:basic1}]

Note that for a given $x \in \mathcal{X}$, we have
\begin{equation}\label{eq:gogo_2}
\begin{split}
\mathbb{P}\left( \sum_{i=1}^{n}1_{\{ X_i \in \mathcal{X} \cap B_p(x,2\bar{r}) \}} > c_2 n\bar{r}^d\right) & \leq  
\mathbb{P}\left( \sum_{i=1}^{n}1_{\{ X_i \in \mathcal{X} \cap B_p(x,2\bar{r}) \}} > 2n\mathbb{P}[X_i \in 
\mathcal{X} \cap B_p(x,2\bar{r})] \right) \\
     & \leq \exp\left( -\frac{n\mathbb{P}[X_i \in \mathcal{X} \cap B_p(x,2\bar{r})]}{3} \right)\\
     & \leq  \exp\left( -\frac{c_12^d}{3}n\bar{r}^d \right) = \exp\left( -\frac{c_1c2^d}{3}n^{\gamma} \right) 
     \triangleq \exp\left( -c_3n^{\gamma} \right),
\end{split}
\end{equation}
where the first inequality follows from 
\begin{equation*}
  2\mathbb{P}[X_i \in \mathcal{X} \cap B_p(x,2\bar{r})] \leq 2\bar{\mu} v_p (2\bar{r})^d = 2^{d+1} \bar{\mu} v_p 
  \bar{r}^d = c_2 \bar{r}^d,
\end{equation*}
the second inequality is derived from the multiplicative Chernoff bound, the last inequality follows from 
(\ref{eq:nianqing1}), and the first equality follows from the definition of $\bar{r}$. 

Let $\{ z_1,\ldots, z_{M} \}$ be an $\bar{r}$-covering of $\mathcal{X}$ under the $\ell_p$-metric, so that $M \leq C\bar{r}^{-d}$ for some constant $C>0$. For any $x \in \mathcal{X}$, there exists a $z_i$ with $\| x - z_i \|_p \leq \bar{r}$. The triangle inequality gives $B_p(x , \bar{r}) \subseteq B_p(z_i , 2\bar{r})$; hence, 
\begin{equation*}
  \begin{split}
     \mathbb{P}\left( \mathcal{E}_1 \right) & = \mathbb{P}\left( \forall x \in \mathcal{X}, \sum_{i=1}^{n}1_{\{ 
     X_i \in \mathcal{X} \cap B_p(x,\bar{r}) \}} \leq c_2 n\bar{r}^d \right)\\
       & \geq \mathbb{P}\left( \forall k \in \{1,\ldots,M \}, \sum_{i=1}^{n}1_{\{ X_i \in \mathcal{X} \cap 
       B_p(z_k,2\bar{r}) \}} \leq c_2 n\bar{r}^d \right)\\
       & = 1 - \mathbb{P}\left( \exists k \in \{1,\ldots,M \}, \sum_{i=1}^{n}1_{\{ X_i \in \mathcal{X} \cap 
       B_p(z_k,2\bar{r}) \}} > c_2 n\bar{r}^d \right)\\
       & \geq 1 - M \mathbb{P}\left( \sum_{i=1}^{n}1_{\{ X_i \in \mathcal{X} \cap B_p(z_k,2\bar{r}) \}} > c_2 
       n\bar{r}^d \right)\\
       & \geq 1 - C\bar{r}^{-d} \exp\left( -c_3n^{\gamma} \right),
  \end{split}
\end{equation*}
where the first equality uses the definition of $\bar n_x$, the first inequality follows from
$B_p(x,\bar r)\subseteq B_p(z_i,2\bar r)$, and the final inequality uses (\ref{eq:gogo_2}). Since $\bar{r}^{-d} = c^{-1}n^{1-\gamma}$, the above bound simplifies to (\ref{eq:gogo_key_pro}) after redefining the constant $C>0$.

\end{proof}

\subsection{Finalizing the proof of Theorem~\ref{theo:low}}

Since $r \leq \bar{r}$ for any $r$ satisfying (\ref{eq:gogo_3}), we have 
$\mathcal{S}_p(x,r) \subseteq \mathcal{S}_p(x,\bar{r})$ for all $x \in \mathcal{X}$. Consequently, 
\begin{equation*}
 \int_{\mathcal{X}}\mathbb{E}\max_{i \in \mathcal{S}_p(x,r)} (|\xi_{i}| - \delta)_+^2dx  \leq 
 \int_{\mathcal{X}}\mathbb{E}\max_{i \in \mathcal{S}_p(x,\bar{r})} (|\xi_{i}| - \delta)_+^2dx
\end{equation*}
for every $r$ satisfying (\ref{eq:gogo_3}). Thus, the main task is to derive an upper bound for $\mathbb{E}\max_{i 
\in \mathcal{S}_p(x,\bar{r})} (|\xi_{i}| - \delta)_+^2$ that holds uniformly over $x\in\mathcal{X}$. For any $x \in \mathcal{X}$, we have 
\begin{subequations}
\begin{align}
   \mathbb{E}\max_{i \in \mathcal{S}_p(x,\bar{r})} (|\xi_{i}| - \delta)_+^2 &  =\mathbb{E}_{X^n}\mathbb{E}_{Y^n 
   \mid X^n}\max_{1 \leq i \leq \bar{n}_x} (|\xi_{i}| - \delta)_+^2 \nonumber \\
     & =  \mathbb{E}_{X^n}\mathbb{E}_{Y^n \mid X^n}\max_{1 \leq i \leq \bar{n}_x} (|\xi_{i}| - \delta)_+^2 1_{\{ 
   \mathcal{E}_1 \}} \label{eqn:line-1} \\
     & + \mathbb{E}_{X^n}\mathbb{E}_{Y^n \mid X^n}\max_{1 \leq i \leq \bar{n}_x} (|\xi_{i}| - 
   \delta)_+^2 1_{\{ \mathcal{E}_1^c \}}. \label{eqn:line-2}
\end{align}
\end{subequations}

\subsubsection{Bounding the term in (\ref{eqn:line-1})}\label{sec:proof_aaa}

For the term in (\ref{eqn:line-1}), we obtain 
 \begin{equation*}
 \begin{split}
      & \mathbb{E}_{X^n}\mathbb{E}_{Y^n \mid X^n}\max_{1 \leq i \leq \bar{n}_x} (|\xi_{i}| - \delta)_+^2 1_{\{ 
      \mathcal{E}_1 \}} \leq \mathbb{E}_{X^n}\mathbb{E}_{Y^n \mid X^n}\max_{1 \leq i \leq c_2 n\bar{r}^d} 
      (|\xi_{i}| - \delta)_+^2 \\
      & = \mathbb{E}\max_{1 \leq i \leq c_2 n\bar{r}^d} (|\xi_{i}| - \delta)_+^2 = \mathbb{E}\max_{1 \leq i \leq 
      c_4n^{\gamma}} (|\xi_{i}| - \delta)_+^2, 
 \end{split} 
 \end{equation*}
 where the inequality follows from the definition of $\mathcal{E}_1$ in (\ref{eq:gogo_event_1}), the first 
 equality removes the conditioning, and the last equality uses the definition of $\bar{r}$, with $c_4 \triangleq 
 cc_2$. 

 Define $t_0\triangleq(\sqrt{2\nu^2(\frac{2\beta}{2\beta+d}+\gamma) \log n}-\delta)_{+}^2$, where the positive 
 constant $\nu$ characterizes the sub-Gaussian tail behavior of $\xi_i$ and will be specified later. By integrating 
 the tail probability,
\begin{equation}\label{eq:yanse1}
\begin{split}
  & \mathbb{E}\max_{1 \leq i \leq c_4n^{\gamma}} (|\xi_i| - \delta)_+^2 = \int_{0}^{\infty} \mathbb{P} \left\{  
   \max_{1 \leq i \leq c_4n^{\gamma}} (|\xi_i| - \delta)_+^2>t \right\}dt \\
     & = \int_{0}^{t_0} \mathbb{P} \left\{  \max_{1 \leq i \leq c_4n^{\gamma}} (|\xi_i| - \delta)_+^2>t \right\}dt 
     + \int_{t_0}^{\infty} \mathbb{P} \left\{  \max_{1 \leq i \leq c_4n^{\gamma}} (|\xi_i| - \delta)_+^2>t 
     \right\}dt\\
     & \leq  \left[\sqrt{2\nu^2\left(\frac{2\beta}{2\beta+d}+\gamma\right) \log n}-\delta\right]_{+}^2 + 
     \int_{t_0}^{\infty} \mathbb{P} \left\{  \max_{1 \leq i \leq c_4n^{\gamma}} (|\xi_i| - \delta)_+^2>t 
     \right\}dt.
\end{split}
\end{equation}
To handle the second term in (\ref{eq:yanse1}), we use the sub-Gaussianity of $\xi_i$. By Theorem 2.6 of 
\cite{Wainwright_2019}, there exist a constant $C$ and a Gaussian random variable $Z\sim N(0,\nu^{2})$ such that 
$\mathbb{P}(|\xi_i|>s) \leq C\mathbb{P}(|Z|>s)$ for all $s\geq 0$. Applying the union bound, for any $t\ge t_0$ we obtain 
\begin{equation*}
  \begin{split}
     \mathbb{P} \left\{  \max_{1 \leq i \leq c_4n^{\gamma}} (|\xi_i| - \delta)_+^2>t \right\} &  =  
     c_4n^{\gamma}\mathbb{P} \left\{  \left| \xi_i \right| > \delta+\sqrt{t} \right\} \leq 
     c_4Cn^{\gamma}\mathbb{P} \left\{  \left| Z \right| > \delta+\sqrt{t} \right\}\\
     & = 2 c_4Cn^{\gamma} \tilde{\Phi}\left( \frac{\delta+\sqrt{t}}{\nu} \right) \lesssim n^{\gamma} \exp\left( 
     -\frac{(\delta+\sqrt{t})^2}{2\nu^2}  \right),
  \end{split}
\end{equation*}
where the last step is derived by combining the fact that 
$$
\frac{\delta+\sqrt{t}}{\nu} \geq \frac{\delta+\sqrt{t_0}}{\nu} = 
\frac{\delta+(\sqrt{2\nu^2(\frac{2\beta}{2\beta+d}+\gamma) \log n}-\delta)_{+}}{\nu} \geq 
\sqrt{2\left(\frac{2\beta}{2\beta+d}+\gamma\right) \log n} \geq C
$$
when $n>2$ and the bound for Mills ratio $\tilde{\Phi}(x) \leq C'\exp(-x^2/2)$ when $x > C$ for a constant $C'$. 
Therefore, the second term in (\ref{eq:yanse1}) can be upper bounded by 
\begin{equation}\label{eq:yanse2}
  \begin{split}
     \int_{t_0}^{\infty} \mathbb{P} \left\{  \max_{1 \leq i \leq c_4n^{\gamma}} (|\xi_i| - \delta)_+^2>t 
     \right\}dt & \lesssim n^{\gamma} \int_{t_0}^{\infty}\exp\left( -\frac{(\delta+\sqrt{t})^2}{2\nu^2}  \right)dt 
     \\
       & = 2n^{\gamma} \int_{(\sqrt{2\nu^2(\frac{2\beta}{2\beta+d}+\gamma) \log 
       n}-\delta)_{+}}^{\infty}t\exp\left( -\frac{(\delta+t)^2}{2\nu^2} \right)dt\\
       & = 2\nu n^{\gamma} \int_{\frac{(\sqrt{2\nu^2(\frac{2\beta}{2\beta+d}+\gamma) \log 
       n}-\delta)_{+}+\delta}{\nu}}^{\infty}(\nu t - \delta)\exp\left( -\frac{t^2}{2} \right)dt\\
       & \leq 2\nu^2 n^{\gamma} \int_{\frac{(\sqrt{2\nu^2(\frac{2\beta}{2\beta+d}+\gamma) \log 
       n}-\delta)_{+}+\delta}{\nu}}^{\infty}t \exp\left( -\frac{t^2}{2} \right)dt\\
       & = 2\nu^2 n^{\gamma} \exp\left[ -\frac{\left[ (\sqrt{2\nu^2(\frac{2\beta}{2\beta+d}+\gamma) \log 
       n}-\delta)_{+} +\delta \right]^2}{2\nu^2}  \right],\\
  \end{split}
\end{equation}
where the last steps follows from the fact that $\int_x^{\infty} v \exp\left(-v^2 / 2\right) d v=\exp(-x^2 / 2) \quad \text { for } x>0$.

Combining (\ref{eq:yanse1})--(\ref{eq:yanse2}), we have shown when $\delta > 
\sqrt{2\nu^2(\frac{2\beta}{2\beta+d}+\gamma) \log n}$,
\begin{equation*}
\begin{split}
  & \mathbb{E}\max_{1 \leq i \leq c_4n^{\gamma}} (|\xi_i| - \delta)_+^2\\
   &\lesssim \left[\sqrt{2\nu^2\left(\frac{2\beta}{2\beta+d}+\gamma\right) \log n}-\delta\right]_{+}^2+ n^{\gamma} 
   \exp\left[ -\frac{\left[ (\sqrt{2\nu^2(\frac{2\beta}{2\beta+d}+\gamma) \log n}-\delta)_{+} +\delta 
   \right]^2}{2\nu^2}  \right] \\
   & = n^{\gamma} \exp\left( -\frac{\delta^2}{2\nu^2} \right) \lesssim  
   n^{-(\frac{2\beta}{2\beta+d}+\gamma-\gamma)} = O\left( n^{-\frac{2\beta}{2\beta+d}} \right).
\end{split}
\end{equation*}

\subsubsection{Bounding the term in (\ref{eqn:line-2})}

We bound the second term in (\ref{eqn:line-2}) as follows:  
\begin{equation*}
\begin{split}
     & \mathbb{E}_{X^n}\mathbb{E}_{Y^n \mid X^n}\max_{1 \leq i \leq \bar{n}_x} (|\xi_{i}| - \delta)_+^2 1_{\{ 
     \mathcal{E}_1^c \}} \leq \mathbb{E}_{X^n}\mathbb{E}_{Y^n \mid X^n}\max_{1 \leq i \leq n} (|\xi_{i}| - 
     \delta)_+^2 1_{\{ \mathcal{E}_1^c \}} \\
     & = \mathbb{E}_{X^n}1_{\{ \mathcal{E}_1^c \}}\mathbb{E}_{Y^n \mid X^n}\max_{1 \leq i \leq n} (|\xi_{i}| - 
     \delta)_+^2 = \mathbb{P}(\mathcal{E}_1^c)\mathbb{E}\max_{1 \leq i \leq n} (|\xi_{i}| - \delta)_+^2\\
     & \leq \mathbb{P}(\mathcal{E}_1^c)\mathbb{E}\max_{1 \leq i \leq n}|\xi_{i}|^2. 
\end{split}
\end{equation*}
For sub-Gaussian errors, the standard bound $\mathbb{E}\max_{1 \leq i \leq n}|\xi_{i}|^2 \lesssim \log n$ 
holds. Combining this with the result in \eqref{eq:gogo_key_pro}, we obtain 
\begin{equation*}
  \mathbb{E}_{X^n}\mathbb{E}_{Y^n \mid X^n}\max_{1 \leq i \leq \bar{n}_x} (|\xi_{i}| - \delta)_+^2 1_{\{ 
  \mathcal{E}_1^c \}} \lesssim (\log n)n^{1-\gamma} \exp\left( -c_3n^{\gamma} \right)  = O\left( 
  n^{-\frac{2\beta}{2\beta+d}} \right). 
\end{equation*}
This completes the proof of Theorem~\ref{theo:low}.

\section{Proof of Theorem~\ref{theo:Moderate}}

Recall that $n_x = \mathrm{Card}(\mathcal{S}_p(x,r))$ denotes the number of observations in $\mathcal{S}_p(x,r)$. 
Our primary task is to obtain a lower bound for $\int_{\mathcal{X}}\mathbb{E}\max_{i \in \mathcal{S}_p(x,r)} 
(|\xi_{i}| - \delta)_+^2dx$. It suffices to derive a pointwise lower bound for $\mathbb{E}\max_{i \in 
\mathcal{S}_p(x,r)} (|\xi_{i}| - \delta)_+^2$ that holds for all $x \in \mathcal{X}$. For any $x \in \mathcal{X}$, we have 
\begin{equation}\label{eq:nianqing2}
\begin{split}
   \mathbb{E}\max_{i \in \mathcal{S}_p(x,r)} (|\xi_{i}| - \delta)_+^2 & = \mathbb{E}_{X^n}\mathbb{E}_{Y^n \mid 
   X^n}\max_{i \in \mathcal{S}_p(x,r)} (|\xi_{i}| - \delta)_+^2\\
     & = \mathbb{P}_{X^n}(n_x = 0) \times 0 + \mathbb{P}_{X^n}(n_x = 1) \mathbb{E}(|\xi_{i}| - \delta)_+^2 \\
     &\quad+ \sum_{k=2}^{n}\mathbb{P}_{X^n}(n_x = k)\mathbb{E}\max_{1 \leq i \leq k} (|\xi_{i}| - \delta)_+^2\\
     & \geq \mathbb{P}_{X^n}(n_x = 1) \mathbb{E}(|\xi_{i}| - \delta)_+^2.
\end{split}
\end{equation}

We first consider the case where $nr^d \to 0$. Based on (\ref{eq:nianqing1}), we know $\mathbb{P}[X \in \mathcal{X} \cap B_p(x,r)] \geq  c_1 r^d$ for any $x$. Consequently, when $nr^d \to 0$ and for any $x \in \mathcal{X}$,
\begin{equation}\label{eq:nianqing3}
  \mathbb{P}_{X^n}(n_x = 1) \geq n (c_1 r^d )(1-c_1 r^d)^{n-1} \sim c_1nr^d \exp[-c_1(n-1)r^d] \asymp nr^d.
\end{equation}
Using technique on the risk of soft-thresholding estimators at zero mean \citep[see, e.g.,][]{Donoho1994}, we obtain
\begin{equation}\label{eq:soft1}
\begin{split}
   \mathbb{E}(|\xi_{i}| - \delta)_+^2 & = 2 \int_{\delta}^{\infty}(y - \delta)^2 \frac{1}{\sqrt{2\pi 
   \sigma^2}}\exp\left( - \frac{y^2}{2\sigma^2} \right)dy\\
     & = 2\sigma^2 \int_{\delta/\sigma}^{\infty}(t - \delta/\sigma)^2 \phi(t)dt \\
     & = 2(\delta+\sigma)^2 \tilde{\Phi}(\delta/\sigma) - 2\sigma\delta\phi(\delta/\sigma), \\
\end{split}
\end{equation}
where $\phi(\cdot)$ and $\tilde{\Phi}$ are density function and tail probability of the standard Gaussian 
distribution. Recall the standard Mills-type bound that for any $x>0$, $\tilde{\Phi}(x)> \frac{x}{x^2+1}\phi(x)$. Applying this lower bound for $\tilde{\Phi}$, we obtain 
\begin{equation*}
  \begin{split}
       & 2(\delta+\sigma)^2 \tilde{\Phi}(\delta/\sigma) - 2\sigma\delta\phi(\delta/\sigma)  \geq 
       2(\delta+\sigma)^2 \frac{\delta \sigma}{\delta^2+ \sigma^2}\phi(\delta/\sigma)- 
       2\sigma\delta\phi(\delta/\sigma) = \frac{4\delta^2 \sigma^2}{\delta^2+ \sigma^2}\phi(\delta/\sigma). 
  \end{split}
\end{equation*}
Hence, as $\delta \to \infty$, $\frac{4\delta^2 \sigma^2}{\delta^2+ \sigma^2}$ remains bounded below by a positive 
constant. It follows that 
\begin{equation}\label{eq:gogo_zem}
  \mathbb{E}(|\xi_{i}| - \delta)_+^2 \gtrsim \exp\left( - \frac{\delta^2}{2\sigma^2} \right) \gtrsim \exp\left( - 
  \frac{C_2^2(\log n)^{2t}}{2\sigma^2} \right), 
\end{equation}
where the second inequality follows from the condition $\delta \leq C_2(\log n)^{t}$. Consequently, when $nr^d \to 
0$,
\begin{equation*}
  \mathbb{E}\max_{i \in \mathcal{S}_p(x,r)} (|\xi_{i}| - \delta)_+^2 \gtrsim nr^d\exp\left( - \frac{C_2^2(\log 
  n)^{2t}}{2\sigma^2} \right)
\end{equation*}
for any $x \in \mathcal{X}$. 

Next, we consider the case where $nr^d \gtrsim 1$ and $r \to 0$. For any $x \in \mathcal{X}$, we have
\begin{equation}\label{eq:nianqing5}
\begin{split}
   \mathbb{E}\max_{i \in \mathcal{S}_p(x,r)} (|\xi_{i}| - \delta)_+^2 &  = \mathbb{P}_{X^n}(n_x = 1) 
   \mathbb{E}(|\xi_{i}| - \delta)_+^2 + \sum_{k=2}^{n}\mathbb{P}_{X^n}(n_x = k)\mathbb{E}\max_{1 \leq i \leq k} 
   (|\xi_{i}| - \delta)_+^2\\
     & \geq \mathbb{P}_{X^n}(n_x = 1) \mathbb{E}(|\xi_{i}| - \delta)_+^2+ \sum_{k=2}^{n}\mathbb{P}_{X^n}(n_x = 
     k)\mathbb{E}(|\xi_{i}| - \delta)_+^2\\
     & = \mathbb{P}_{X^n}(n_x \geq 1) \mathbb{E}(|\xi_{i}| - \delta)_+^2.
\end{split}
\end{equation}
For the probability term, we obtain
\begin{equation}\label{eq:gogo_zem_1}
  \begin{split}
     \mathbb{P}_{X^n}(n_x \geq 1) & = 1 - \mathbb{P}_{X^n}(n_x =0)  = 1 - \left[ 1 - \mathbb{P}(X \in \mathcal{X} 
     \cap B_p(x,r)) \right]^n\\
       & \geq 1 - ( 1 - c_1 r^d )^n = 1- (1 - c_1 r^d)^{-\frac{1}{c_1r^d}(-c_1nr^d)} \sim 1 - \exp( -c_1nr^d ) 
       \gtrsim 1.
  \end{split}
\end{equation}
Combining (\ref{eq:gogo_zem}), (\ref{eq:nianqing5}), and (\ref{eq:gogo_zem_1}), we obtain, for any $x \in 
\mathcal{X}$, 
\begin{equation*}
  \mathbb{E}\max_{i \in \mathcal{S}_p(x,r)} (|\xi_{i}| - \delta)_+^2 \gtrsim \exp\left( - \frac{C_2^2(\log 
  n)^{2t}}{2\sigma^2} \right). 
\end{equation*}

In the case where $r$ does not converge, the lower bound in (\ref{eq:adversairla_op}) already leads to a 
non-converging lower bound. This completes the proof.

\section{Proof of Theorem~\ref{theo:high}}\label{sec:proof_5}

We proceed by considering three different scenarios of $r$.

\subsection{Low-magnitude attack}\label{sec:proof_yuandan}

In this scenario, we consider the regime $nr^d \to 0$. To establish the rate in this case, we construct matching lower and upper bounds for $\mathbb{E}\max_{i \in \mathcal{S}_p(x,r)} (|\xi_{i}| - \delta)_+^2$ uniformly over $x \in \mathcal{X}$.

\subsubsection{Lower bound}

Based on (\ref{eq:nianqing2}), we have for any $x \in \mathcal{X}$, 
\begin{equation}\label{eq:hanjia22}
\begin{split}
   \mathbb{E}\max_{i \in \mathcal{S}_p(x,r)} (|\xi_{i}| - \delta)_+^2 \geq \mathbb{P}_{X^n}(n_x = 1) 
   \mathbb{E}(|\xi_{i}| - \delta)_+^2.
\end{split}
\end{equation}
Based on Stein's identity for the soft-thresholding estimators \citep[see (2.74) in][]{johnstone2017}, we have
\begin{equation}\label{eq:wumain_1}
\begin{split}
   \mathbb{E}(|\xi_{i}| - \delta)_+^2 & = \mathbb{E}\left[ 1 - 2 \times 1_{\{ |\xi_{i}| \leq \delta \}} + 
   \min(\xi_{i}^2, \delta^2) \right] \\
     & \geq 1 - 2\mathbb{P}\left( |\xi_{i}| \leq \delta \right) = 1 - 2\mathbb{P}\left( \frac{|\xi_{i}|}{\sigma} 
     \leq \frac{\delta}{\sigma} \right)\\
     & = 1 - 2 \times \left[ 1 - 2\tilde{\Phi}\left( \frac{\delta}{\sigma} \right) \right] = 4\tilde{\Phi}\left( 
     \frac{\delta}{\sigma} \right) - 1 > c,
\end{split}
\end{equation}
where the last inequality follows from $0 \leq \delta < \sigma \tilde{\Phi}^{-1}(\frac{1+c}{4})\triangleq C_3 
\sigma$ for any $0 < c< 1$, and $\tilde{\Phi}$ is the tail probability of the standard Gaussian distribution. From (\ref{eq:nianqing3}), we see that for any $x \in \mathcal{X}$, $\mathbb{P}_{X^n}(n_x = 1) \gtrsim nr^d$. Combining this with (\ref{eq:hanjia22})--(\ref{eq:wumain_1}), we establish the lower bound
\begin{equation*}
   \mathbb{E}\max_{i \in \mathcal{S}_p(x,r)} (|\xi_{i}| - \delta)_+^2 \gtrsim nr^d
\end{equation*}
for all $x \in \mathcal{X}$.

\subsubsection{Upper bound}

Similar to (\ref{eq:nianqing2}), we have the following expression of $\mathbb{E}\max_{i \in \mathcal{S}_p(x,r)} (|\xi_{i}| - \delta)_+^2$ for all $x \in \mathcal{X}$: 
  \begin{equation}\label{eq:wumai_3}
    \mathbb{E}\max_{i \in \mathcal{S}_p(x,r)} (|\xi_{i}| - \delta)_+^2 = \mathbb{P}_{X^n}(n_x = 1) 
    \mathbb{E}(|\xi_{1}| - \delta)_+^2 + \sum_{k=2}^{n}\mathbb{P}_{X^n}(n_x = k)\mathbb{E}\max_{1 \leq i \leq k} 
    (|\xi_{i}| - \delta)_+^2.
  \end{equation}
Recall the results in (\ref{eq:soft1}), we obtain $
   \mathbb{E}(|\xi_{1}| - \delta)_+^2  = 2(\delta+\sigma)^2 \tilde{\Phi}(\delta/\sigma) - 
   2\sigma\delta\phi(\delta/\sigma)$. 
When $0 \leq \delta < \sigma \tilde{\Phi}^{-1}(\frac{1+c}{4})$, we have
\begin{equation*}
  2(\delta+\sigma)^2 \tilde{\Phi}(\delta/\sigma) - 2\sigma\delta\phi(\delta/\sigma) \lesssim 
  \tilde{\Phi}(\delta/\sigma) \leq \frac{1}{2}.
\end{equation*}
In addition, we need to establish an upper bound for $\mathbb{E}\max_{1 \leq i \leq k} (|\xi_{i}| - \delta)_+^2$ 
for any $k$, where $\xi_i$ i.i.d. $N(0,\sigma^2)$. Using the similar technique in Section~\ref{sec:proof_aaa}, we have
  \begin{equation}\label{eq:bushuo1}
  \begin{split}
     \mathbb{E}\max_{1 \leq i \leq k} (|\xi_i| - \delta)_+^2 & \lesssim  (\sqrt{2 \sigma^2\log (2 
     k)}-\delta)_{+}^2  + 2\sigma^2 k \exp\left[ -\frac{\left[ (\sqrt{2\sigma^2 \log(2k)} -\delta)_+ +\delta 
     \right]^2}{2\sigma^2}  \right] \\
       & \leq (\sqrt{2 \sigma^2\log (2 k)}-\delta)_{+}^2  + \sigma^2.
  \end{split}
  \end{equation}

Now we turn to an upper bound for the probability terms in (\ref{eq:wumai_3}). For any $x \in \mathcal{X}$, let $p = \mathbb{P}\left[ X \in \mathcal{X} \cap B_p(x,r) \right]$. By the upper bound on the density, we have $p \leq \bar{\mu} v_p r^d = c_3 r^d$ for any $x \in \mathcal{X}$. Therefore, we have 
  \begin{equation*}
    \begin{split}
       \mathbb{P}_{X^n}(n_x = 1) & \leq n (c_3 r^d) (1 - c_3 r^d)^{n-1}  \sim c_3 n r^d \exp\left( - c_3nr^d 
       \right) \lesssim nr^d
    \end{split}
  \end{equation*}
  when $nr^d \to 0$. Combining the above results, we have proved that
  \begin{equation}\label{eq:fansi_1}
    \mathbb{E}\max_{i \in \mathcal{S}_p(x,r)} (|\xi_{i}| - \delta)_+^2 \lesssim nr^d+ \sum_{k=2}^n\binom{n}{k} 
    p^k(1-p)^{n-k} \left[(\sqrt{2 \sigma^2\log (2 k)}-\delta)_{+}^2  + \sigma^2 \right].
  \end{equation}
  Next we bound the summation in (\ref{eq:fansi_1}). Using $\binom{n}{k} \leq \frac{n^k}{k!}$, we get 
$$
\binom{n}{k} p^k(1-p)^{n-k} \leq \frac{n^k p^k}{k!}(1-p)^{n-k} \leq \frac{(np)^k}{k!} \lesssim 
\frac{(nr^d)^k}{k!},  
$$
uniformly in $x$. Therefore, we have
\begin{equation*}
  \begin{split}
       \sum_{k=2}^n\binom{n}{k} p^k(1-p)^{n-k} \left[(\sqrt{2 \sigma^2\log (2 k)}-\delta)_{+}^2  + \sigma^2 
       \right]&  \lesssim \sum_{k=2}^n \frac{(nr^d)^k}{k!} \left[(\sqrt{2 \sigma^2\log (2 k)}-\delta)_{+}^2  + \sigma^2 
       \right] \\
       & \lesssim (nr^d)^2 \sum_{k=2}^{n} \frac{\left[(\sqrt{2 \sigma^2\log (2 k)}-\delta)_{+}^2  + \sigma^2 
       \right]}{k!}\\
       & \lesssim (nr^d)^2, 
  \end{split}
\end{equation*}
where the last step follows from the fact that $[(\sqrt{2 \sigma^2\log (2 k)}-\delta)_{+}^2  + \sigma^2 ]$ grows at most logarithmically in $k$. Putting the pieces together we obtain, uniformly in $x \in \mathcal{X}$,
  \begin{equation*}
    \mathbb{E}\max_{i \in \mathcal{S}_p(x,r)} (|\xi_{i}| - \delta)_+^2 \lesssim nr^d
  \end{equation*}
  when $nr^d \to 0$

\subsection{Moderate-magnitude attack}
Next, we consider the case where $1 \lesssim nr^d \lesssim \log n $. In this regime, $r$ still tends to zero. From 
(\ref{eq:nianqing5}), we have for any $x \in \mathcal{X}$, 
\begin{equation*}\label{eq:nianqing55}
\begin{split}
   \mathbb{E}\max_{i \in \mathcal{S}_p(x,r)} (|\xi_{i}| - \delta)_+^2 \geq  \mathbb{P}_{X^n}(n_x \geq 1) 
   \mathbb{E}(|\xi_{i}| - \delta)_+^2.
\end{split}
\end{equation*}
The closed-form expression for $\mathbb{E}(|\xi_{i}| - \delta)_+^2$ has been derived in \eqref{eq:soft1}. For the 
probability term, from (\ref{eq:gogo_zem_1}), we obtain $\mathbb{P}_{X^n}(n_x \geq 1)  \gtrsim 1$. Therefore, for 
all $x\in \mathcal{X}$, we establish the lower bound
\begin{equation*}
  \mathbb{E}\max_{i \in \mathcal{S}_p(x,r)} (|\xi_{i}| - \delta)_+^2 \gtrsim \left[ (\delta+\sigma)^2 
  \tilde{\Phi}(\delta/\sigma) - \sigma\delta\phi(\delta/\sigma) \right] > c,
\end{equation*}
where the last inequality follows from (\ref{eq:wumain_1}).

\subsection{High-magnitude attack}

In this subsection, we determine the order of the term $\int_{\mathcal{X}}\mathbb{E}\max_{i \in \mathcal{S}_p(x,r)} (|\xi_{i}| - \delta)_+^2dx$ in the regime $nr^d \geq C_4 \log n$ for some constant $C_4>0$. The approach is to derive uniform lower and upper bounds for $\mathbb{E}\max_{i \in \mathcal{S}_p(x,r)} (|\xi_{i}| - \delta)_+^2$ over all $x \in \mathcal{X}$, and to show that these bounds are of the same order. We begin by establishing the lower bound. 

\subsubsection{Lower bound}

We consider the case where $nr^d \geq \frac{4}{c_2}\log n $, where $c_2  \triangleq 
\frac{\barbelow{\mu}c_{\mu}v_p}{2^{d+1}}$. Define the event $X_1,\ldots, X_n$ that
\begin{equation*}
  \mathcal{E}_2 \triangleq \left\{ \forall x \in \mathcal{X}, n_x \geq c_2 nr^d \right\}.
\end{equation*}
Our first goal is to establish a lower bound for the probability of $\mathcal{E}_2$.

For a given $x \in \mathcal{X}$, we have
\begin{equation}\label{eq:nianqing6}
\begin{split}
\mathbb{P}\left( \sum_{i=1}^{n}1_{\{ X_i \in \mathcal{X} \cap B_p(x,\frac{r}{2}) \}} < c_2 nr^d\right) & \leq  
\mathbb{P}\left( \sum_{i=1}^{n}1_{\{ X_i \in \mathcal{X} \cap B_p(x,\frac{r}{2}) \}} < \frac{n\mathbb{P}[X_i \in 
\mathcal{X} \cap B_p(x,\frac{r}{2})]}{2} \right) \\
     & \leq \exp\left( -\frac{n\mathbb{P}[X_i \in \mathcal{X} \cap B_p(x,\frac{r}{2})]}{8} \right)\\
     & \leq \exp\left( -\frac{\barbelow{\mu}c_{\mu}v_p}{2^{d+3}}nr^d \right) = \exp\left( -\frac{c_2}{4}nr^d 
     \right),
\end{split}
\end{equation}
where the first and third inequalities follow from \eqref{eq:nianqing1}, and the second inequality is derived from the multiplicative Chernoff bound. Next, let $\{ z_1,\ldots, z_{M} \}$ be an $\frac{r}{2}$-covering set of $\mathcal{X}$ under the $\ell_p$-metric, 
where $M$ is upper bounded by $Cr^{-d}$ for some constant $C>0$. For any $x \in \mathcal{X}$, there exists a $z_i$ 
such that $\| x - z_i \|_p \leq \frac{r}{2}$. By the triangle inequality, we have $B_p(z_i , \frac{r}{2})  
\subseteq B_p(x , r)$. Therefore, we have
\begin{equation*}
  \begin{split}
     \mathbb{P}\left( \mathcal{E}_2 \right) & = \mathbb{P}\left( \forall x \in \mathcal{X}, \sum_{i=1}^{n}1_{\{ 
     X_i \in \mathcal{X} \cap B_p(x,r) \}} \geq c_2 nr^d \right)\\
       & = \mathbb{P}\left( \forall k \in \{1,\ldots,M \}, \sum_{i=1}^{n}1_{\{ X_i \in \mathcal{X} \cap 
       B_p(z_k,\frac{r}{2}) \}} \geq c_2 nr^d \right)\\
       & = 1 - \mathbb{P}\left( \exists k \in \{1,\ldots,M \}, \sum_{i=1}^{n}1_{\{ X_i \in \mathcal{X} \cap 
       B_p(z_k,\frac{r}{2}) \}} < c_2 nr^d \right)\\
       & \geq 1 - M \mathbb{P}\left( \sum_{i=1}^{n}1_{\{ X_i \in \mathcal{X} \cap B_p(z_k,\frac{r}{2}) \}} < c_2 
       nr^d \right)\\
       & \geq 1 - Cr^{-d} \exp\left( -\frac{c_2}{4}nr^d \right),
  \end{split}
\end{equation*}
where the last inequality follows from \eqref{eq:nianqing6}. When $nr^d > \frac{ 4}{c_2}\log n$, we obtain
\begin{equation*}
  Cr^{-d} \exp\left( -\frac{c_2}{4}nr^d \right) \leq \frac{C}{nr^d} \to 0.
\end{equation*}
Thus, we conclude that $\mathbb{P}\left( \mathcal{E}_2 \right) \to 1$.

We are now prepared to establish a uniform lower bound for $\mathbb{E}\max_{i \in \mathcal{S}_p(x,r)} (|\xi_{i}| - 
\delta)_+^2$ for all $x\in \mathcal{X}$. We have for all $x \in \mathcal{X}$,
\begin{equation*}
\begin{split}
   \mathbb{E}\max_{i \in \mathcal{S}_p(x,r)} (|\xi_{i}| - \delta)_+^2 & =\mathbb{E}_{X^n}\mathbb{E}_{Y^n \mid 
   X^n}\max_{1 \leq i \leq n_x} (|\xi_{i}| - \delta)_+^2 \geq \mathbb{E}_{X^n}\mathbb{E}_{Y^n \mid X^n}\max_{1 
   \leq i \leq n_x} (|\xi_{i}| - \delta)_+^21_{\mathcal{E}_2}.\\
   & \geq \mathbb{E}_{X^n}\mathbb{E}_{Y^n \mid X^n}\max_{1 \leq i \leq c_2nr^d} (|\xi_{i}| - 
   \delta)_+^21_{\mathcal{E}_2} = \mathbb{P}(\mathcal{E}_2)\mathbb{E}\max_{1 \leq i \leq c_2nr^d} (|\xi_{i}| - 
   \delta)_+^2\\
   & \gtrsim \mathbb{E}\max_{1 \leq i \leq c_2nr^d} (|\xi_{i}| - \delta)_+^2.
\end{split}
\end{equation*}
Using Markov's inequality, we obtain the following lower bound for any $t>0$:
\begin{equation}\label{eq:nianqing7}
\begin{split}
   \mathbb{E}\max_{1 \leq i \leq c_2nr^d} (|\xi_i| - \delta)_+^2 & \geq t \times \mathbb{P}\left\{ \max_{1 \leq 
   i \leq c_2nr^d} (|\xi_i| - \delta)_+^2 \geq t \right\} \\
     & = t \times \left(  1- \left(\mathbb{P}\left\{ (|\xi_i| - \delta)_+^2 \leq t \right\} \right)^{c_2nr^d} 
     \right)\\
     & = t \times \left\{  1- \left[1 - 2 \tilde{\Phi}\left(\frac{\sqrt{t}+ \delta}{\sigma}\right) \right]^{c_2nr^d} \right\},
\end{split}
\end{equation}
where the last step follows from
\begin{equation*}
  \mathbb{P}\left[ (|\xi_i| - \delta)_+^2 \leq t \right] = \mathbb{P}\left[ (|\xi_i| - \delta)_+ \leq \sqrt{t} 
  \right] = \mathbb{P}\left[ |\xi_i|  \leq \sqrt{t} + \delta\right] =  1 - 2 \tilde{\Phi}\left(\frac{\sqrt{t}+ 
  \delta}{\sigma}\right).
\end{equation*}

Since $nr^d \to \infty$ and $\delta$ is upper bounded by a constant, we see that the condition $\tilde{\Phi}( 
\delta/\sigma) \geq 1/(2c_2nr^d)$ must hold when $n$ is large enough. Suppose $t$ satisfies the 
condition
\begin{equation}\label{eq:nianqing8}
  \tilde{\Phi}\left(\frac{\sqrt{t} + \delta}{\sigma}\right) \geq \frac{1}{2c_2nr^d}.
\end{equation}
Then, from \eqref{eq:nianqing7}, we obtain
\begin{equation*}
  \begin{split}
       & \mathbb{E}\max_{1 \leq i \leq c_2nr^d} (|\xi_i| - \delta)_+^2  \geq t\left[ 1 - \left( 1 - 
       \frac{1}{c_2nr^d} \right)^{c_2nr^d} \right] \sim t \left[1 - \exp(-1) \right] \gtrsim t.
  \end{split}
\end{equation*}
Thus, to complete the proof, it remains to choose $t$ as large as possible while ensuring that condition 
\eqref{eq:nianqing8} holds. Based on Mills ratio, we have
\begin{equation*}
  \begin{split}
     \tilde{\Phi}(\lambda) & \geq \frac{1}{2\lambda}\phi(\lambda)  = \frac{\exp[(c- 
     \frac{1}{2})\lambda^2]}{2\sqrt{2\pi}\lambda} \exp(-c\lambda^2)  \gtrsim \exp(-c\lambda^2)
  \end{split}
\end{equation*}
for any $c> 1/2$ and $\lambda \to \infty$. Thus, we set $t = [( \sigma\sqrt{\frac{1}{c}\log (2c_2nr^d)} - \delta )_+]^2$. Since $\delta$ is upper bounded by a constant and $nr^d \to \infty$, we obtain the lower bound
$$
\mathbb{E}\max_{1 \leq i \leq c_2nr^d} (|\xi_i| - \delta)_+^2 \gtrsim t \gtrsim \log (nr^d),
$$
which completes the proof.

\subsubsection{Upper bound}

Using a similar technique to that in the proof of Lemma~\ref{lem:basic1}, when the constant $C_4$ is chosen sufficiently large, there exist two constants $c,\alpha>0$ such that 
\begin{equation}\label{eq:E3}
  \mathbb{P}\left\{  \forall x \in \mathcal{X}: n_x \leq c nr^d \right\} \geq 1 - n^{-\alpha}. 
\end{equation}
We denote the event in (\ref{eq:E3}) by $\mathcal{E}_3$. Then, for any $x \in \mathcal{X}$, we have 
\begin{subequations}
\begin{align}
   \mathbb{E}\max_{i \in \mathcal{S}_p(x,r)} (|\xi_{i}| - \delta)_+^2 &  =\mathbb{E}_{X^n}\mathbb{E}_{Y^n 
   \mid X^n}\max_{1 \leq i \leq n_x} (|\xi_{i}| - \delta)_+^2 \nonumber \\
     & =  \mathbb{E}_{X^n}\mathbb{E}_{Y^n \mid X^n}\max_{1 \leq i \leq n_x} (|\xi_{i}| - \delta)_+^2 1_{\{ 
   \mathcal{E}_3 \}} \label{eqn:line-3} \\
     & + \mathbb{E}_{X^n}\mathbb{E}_{Y^n \mid X^n}\max_{1 \leq i \leq n_x} (|\xi_{i}| - 
   \delta)_+^2 1_{\{ \mathcal{E}_3^c \}}. \label{eqn:line-4}
\end{align}
\end{subequations}
The term (\ref{eqn:line-3}) can be upper bounded as  
\begin{equation*}
  \begin{split}
     \mathbb{E}_{X^n}\mathbb{E}_{Y^n \mid X^n}\max_{1 \leq i \leq n_x} (|\xi_{i}| - \delta)_+^2 1_{\{ 
   \mathcal{E}_3 \}} & \leq \mathbb{E}\max_{1 \leq i \leq c nr^d} (|\xi_{i}| - \delta)_+^2  \lesssim \log (nr^d), 
  \end{split}
\end{equation*}
where the last inequality follows from the same reasoning as in (\ref{eq:bushuo1}). The term (\ref{eqn:line-4}) has the upper bound 
\begin{equation*}
  \mathbb{E}_{X^n}\mathbb{E}_{Y^n \mid X^n}\max_{1 \leq i \leq n_x} (|\xi_{i}| - 
   \delta)_+^2 1_{\{ \mathcal{E}_3^c \}} \leq \mathbb{P}(\mathcal{E}_3^c)\mathbb{E}\max_{1 \leq i \leq n}|\xi_{i}|^2 \lesssim (\log n) n^{-\alpha} \lesssim \log (nr^d),  
\end{equation*}
where the last inequality follows from the assumption $nr^d \geq C_4\log n $. This completes the proof of the upper bound. 

\section{Proof of Example~\ref{exp:shrink}}\label{sec:proof_exmaple}

We first establish the inequality 
\begin{equation}\label{eq:yuandana1}
  (r+\tau_n)^{2(1\wedge \beta)} + n^{-\frac{2\beta}{2\beta+d}} \lesssim r^{2(1\wedge \beta)} + 
  n^{-\frac{2\beta}{2\beta+d}} 
\end{equation}
under the condition $0 < \tau_n \lesssim n^{-\frac{\beta}{(2\beta+d)(1 \wedge \beta)} \vee 
\frac{4\beta+d}{d(2\beta+d)}}$. If $r \gtrsim \tau_n$, then $(r+\tau_n)^{2(1\wedge \beta)} \asymp r^{2(1\wedge 
\beta)}$, and the inequality follows immediately. It remains to consider the case $r \lesssim \tau_n$. We have 
\begin{equation*}
\begin{split}
   (r+\tau_n)^{2(1\wedge \beta)} + n^{-\frac{2\beta}{2\beta+d}} & \lesssim \tau_n^{2(1\wedge \beta)} + 
   n^{-\frac{2\beta}{2\beta+d}} \lesssim  n^{-\frac{2\beta}{2\beta+d}}  \\
     & \lesssim r^{2(1\wedge \beta)} + n^{-\frac{2\beta}{2\beta+d}}, 
\end{split}
\end{equation*}
where the second inequality uses $\tau_n \lesssim n^{-\frac{\beta}{(2\beta+d)(1 \wedge \beta)} }$ 

Next, we establish the inequality  
\begin{equation}\label{eq:yuandana2}
  \int_{\mathcal{X}}\mathbb{E}\max_{i \in \mathcal{S}_p(x,r + \tau_n)} (|\xi_{i}| - \delta)_+^2dx \lesssim 
  n^{-\frac{2\beta}{2\beta+d}} + \int_{\mathcal{X}}\mathbb{E}\max_{i \in \mathcal{S}_p(x,r)} (|\xi_{i}| - 
  \delta)_+^2dx. 
\end{equation}
When $r \gtrsim \tau_n$, we have 
\begin{equation*}
  \int_{\mathcal{X}}\mathbb{E}\max_{i \in \mathcal{S}_p(x,r + \tau_n)} (|\xi_{i}| - \delta)_+^2dx \lesssim 
  \int_{\mathcal{X}}\mathbb{E}\max_{i \in \mathcal{S}_p(x,2r)} (|\xi_{i}| - \delta)_+^2 dx \asymp 
  \int_{\mathcal{X}}\mathbb{E}\max_{i \in \mathcal{S}_p(x,r)} (|\xi_{i}| - \delta)_+^2dx, 
\end{equation*}
since enlarging the adversarial radius by a constant factor does not change the asymptotic order of the quantity. 
Consequently, it suffices to consider the case $r \lesssim \tau_n$. In this case, we have 
\begin{equation*}
  \int_{\mathcal{X}}\mathbb{E}\max_{i \in \mathcal{S}_p(x,r + \tau_n)} (|\xi_{i}| - \delta)_+^2dx \lesssim 
  \int_{\mathcal{X}}\mathbb{E}\max_{i \in \mathcal{S}_p(x,2\tau_n)} (|\xi_{i}| - \delta)_+^2 dx \lesssim 
  \int_{\mathcal{X}}\mathbb{E}\max_{i \in \mathcal{S}_p(x,2\tau_n)} \xi_{i}^2 dx. 
\end{equation*}
Note that $n\tau_n^d \lesssim n \times n^{- \frac{4\beta+d}{2\beta+d}} = n^{-\frac{2\beta}{2\beta+d}} \to 0$. 
Applying the results of Section~\ref{sec:proof_yuandan}, we conclude that  
\begin{equation*}
  \int_{\mathcal{X}}\mathbb{E}\max_{i \in \mathcal{S}_p(x,2\tau_n)} \xi_{i}^2 dx \lesssim n\tau_n^d \lesssim 
  n^{-\frac{2\beta}{2\beta+d}} . 
\end{equation*}
This proves \eqref{eq:yuandana2}.  

Finally, combining \eqref{eq:yuandana1} and \eqref{eq:yuandana2}, we conclude that the $\delta$-interpolating 
estimator $\hat{f}_{\delta,\tau_n}$ attains the minimax lower bound in \eqref{eq:lower_bound}.

\newpage

\bibliographystyle{apalike}
\bibliography{ITbibfile}

@inproceedings{Belkin2019,
  author       = {Mikhail Belkin and
                  Alexander Rakhlin and
                  Alexandre B. Tsybakov},
  title        = {Does data interpolation contradict statistical optimality?},
  booktitle    = {The 22nd International Conference on Artificial Intelligence and Statistics},
  pages        = {1611--1619},
  publisher    = {{PMLR}},
  year         = {2019},
  url          = {http://proceedings.mlr.press/v89/belkin19a.html},
  timestamp    = {Fri, 07 Jun 2019 09:03:47 +0200},
  biburl       = {https://dblp.org/rec/conf/aistats/BelkinRT19.bib},
  bibsource    = {dblp computer science bibliography, https://dblp.org}
}

@article{ibragimov1982bounds,
  title={Bounds for the risks of non-parametric regression estimates},
  author={Ibragimov, IA and Khas’ minskii, RZ},
  journal={Theory of Probability $\&$ Its Applications},
  volume={27},
  number={1},
  pages={84--99},
  year={1982},
  publisher={SIAM}
}

@article{Yang1999Information,
author = {Yuhong Yang and Andrew Barron},
title = {Information-theoretic determination of minimax rates of convergence},
volume = {27},
journal = {The Annals of Statistics},
number = {5},
publisher = {Institute of Mathematical Statistics},
pages = {1564--1599},
keywords = {Density estimation, Kullback-Leibler distance, Metric entropy, minimax risk},
year = {1999},
doi = {10.1214/aos/1017939142},
URL = {https://doi.org/10.1214/aos/1017939142}
}

@article{birge1986estimating,
  title={On estimating a density using Hellinger distance and some other strange facts},
  author={Birg{\'e}, Lucien},
  journal={Probability theory and related fields},
  volume={71},
  number={2},
  pages={271--291},
  year={1986},
  publisher={Springer}
}

@misc{Quinlan1993,
  author       = {Quinlan, R.},
  title        = {{Auto MPG}},
  year         = {1993},
  howpublished = {UCI Machine Learning Repository},
  note         = {{DOI}: https://doi.org/10.24432/C5859H}
}

@inproceedings{Mallinar2022,
 author = {Mallinar, Neil and Simon, James and Abedsoltan, Amirhesam and Pandit, Parthe and Belkin, Misha and Nakkiran, Preetum},
 booktitle = {Advances in Neural Information Processing Systems},
 pages = {1182--1195},
 publisher = {Curran Associates, Inc.},
 title = {Benign, Tempered, or Catastrophic: Toward a Refined Taxonomy of Overfitting},
 url = {https://proceedings.neurips.cc/paper_files/paper/2022/file/08342dc6ab69f23167b4123086ad4d38-Paper-Conference.pdf},
 volume = {35},
 year = {2022}
}

@article{kohler2025rate,
      title={On the rate of convergence of an over-parametrized deep neural network regression estimate learned by gradient descent},
      author={Michael Kohler},
      year={2025},
      journal={arXiv preprint arXiv:2504.03405},
}

@article{Mucke2025,
   author = {M\"{u}cke, Nicole and Steinwart, Ingo},
   title = {Empirical risk minimization in the interpolating regime with application to neural network learning},
   journal = {Machine Learning},
   volume = {114},
   number = {4},
   pages = {102},
   ISSN = {1573-0565},
   DOI = {10.1007/s10994-025-06738-9},
   url = {https://doi.org/10.1007/s10994-025-06738-9},
   year = {2025},
   type = {Journal Article}
}

@inproceedings{Haas2023,
 author = {Haas, Moritz and Holzm\"{u}ller, David and Luxburg, Ulrike and Steinwart, Ingo},
 booktitle = {Advances in Neural Information Processing Systems},
 pages = {20763--20826},
 title = {Mind the spikes: Benign overfitting of kernels and neural networks in fixed dimension},
 url = {https://proceedings.neurips.cc/paper_files/paper/2023/file/421f83663c02cdaec8c3c38337709989-Paper-Conference.pdf},
 volume = {36},
 year = {2023}
}

@InProceedings{Wang2019Language,
  title = 	 {Improving Neural Language Modeling via Adversarial Training},
  author =       {Wang, Dilin and Gong, Chengyue and Liu, Qiang},
  booktitle = 	 {Proceedings of the 36th International Conference on Machine Learning},
  pages = 	 {6555--6565},
  year = 	 {2019},
  volume = 	 {97},
  series = 	 {Proceedings of Machine Learning Research},
  month = 	 {09--15 Jun},
  publisher =    {PMLR},
  pdf = 	 {http://proceedings.mlr.press/v97/wang19f/wang19f.pdf},
  url = 	 {https://proceedings.mlr.press/v97/wang19f.html},
  abstract = 	 {Recently, substantial progress has been made in language modeling by using deep neural networks. However, in practice, large scale neural language models have been shown to be prone to overfitting. In this paper, we present a simple yet highly effective adversarial training mechanism for regularizing neural language models. The idea is to introduce adversarial noise to the output embedding layer while training the models. We show that the optimal adversarial noise yields a simple closed form solution, thus allowing us to develop a simple and time efficient algorithm. Theoretically, we show that our adversarial mechanism effectively encourages the diversity of the embedding vectors, helping to increase the robustness of models. Empirically, we show that our method improves on the single model state-of-the-art results for language modeling on Penn Treebank (PTB) and Wikitext-2, achieving test perplexity scores of 46.01 and 38.65, respectively. When applied to machine translation, our method improves over various transformer-based translation baselines in BLEU scores on the WMT14 English-German and IWSLT14 German-English tasks.}
}

@article{kohler2021rate,
  title={On the rate of convergence of fully connected deep neural network regression estimates},
  author={Kohler, Michael and Langer, Sophie},
  journal={The Annals of Statistics},
  volume={49},
  number={4},
  pages={2231--2249},
  year={2021},
  publisher={JSTOR}
}

@book{Le_Cambook,
title = {Asymptotic Methods in Statistical Decision Theory},
author = {Le Cam, Lucien},
year = {1986},
publisher = {Springer New York, NY}
}

@article{DEVROYE1998209,
title = {The {H}ilbert kernel regression estimate},
journal = {Journal of Multivariate Analysis},
volume = {65},
number = {2},
pages = {209--227},
year = {1998},
issn = {0047-259X},
doi = {https://doi.org/10.1006/jmva.1997.1725},
url = {https://www.sciencedirect.com/science/article/pii/S0047259X97917255},
author = {Luc Devroye and Laszlo Györfi and Adam Krzyżak},
keywords = {regression function estimation, kernel estimate, convergence, bandwidth selection, Nadaraya–Watson estimate, nonparametric estimation},
abstract = {Let (X, Y) be an Rd×R-valued regression pair, whereXhas a density andYis bounded. Ifni.i.d. samples are drawn from this distribution, the Nadaraya–Watson kernel regression estimate in Rdwith Hilbert kernelK(x)=1/‖x‖dis shown to converge weakly for all such regression pairs. We also show that strong convergence cannot be obtained. This is particularly interesting as this regression estimate does not have a smoothing parameter.}
}

@inproceedings{Belkin2018Overfitting,
  author       = {Mikhail Belkin and
                  Daniel J. Hsu and
                  Partha Mitra},
  title        = {Overfitting or perfect fitting? {R}isk bounds for classification and
                  regression rules that interpolate},
  booktitle    = {Advances in Neural Information Processing Systems 31},
  pages        = {2306--2317},
  year         = {2018},
  url          = {https://proceedings.neurips.cc/paper/2018/hash/e22312179bf43e61576081a2f250f845-Abstract.html},
  timestamp    = {Mon, 16 May 2022 15:41:51 +0200},
  biburl       = {https://dblp.org/rec/conf/nips/BelkinHM18.bib},
  bibsource    = {dblp computer science bibliography, https://dblp.org}
}

@article{Chhor2024Benign,
   author = {Chhor, Julien and Sigalla, Suzanne and Tsybakov, Alexandre B.},
   title = {Benign overfitting and adaptive nonparametric regression},
   journal = {Probability Theory and Related Fields},
   volume = {189},
   number = {3},
   pages = {949--980},
   ISSN = {1432-2064},
   DOI = {10.1007/s00440-024-01278-0},
   url = {https://doi.org/10.1007/s00440-024-01278-0},
   year = {2024},
   type = {Journal Article}
}

@article{Peng2025aos,
  title={Adversarial learning for nonparametric regression: {M}inimax rate and adaptive estimation},
  author={Peng, Jingfu and Yang, Yuhong},
  journal={arXiv preprint arXiv:2506.01267},
  year={2025}
}

@inproceedings{Sanyal2021,
  author       = {Amartya Sanyal and
                  Puneet K. Dokania and
                  Varun Kanade and
                  Philip H. S. Torr},
  title        = {How benign is benign overfitting?},
  booktitle    = {9th International Conference on Learning Representations},
  year         = {2021},
  url          = {https://openreview.net/forum?id=g-wu9TMPODo},
  timestamp    = {Wed, 23 Jun 2021 17:36:39 +0200},
  biburl       = {https://dblp.org/rec/conf/iclr/SanyalDKT21.bib},
  bibsource    = {dblp computer science bibliography, https://dblp.org}
}

@article{Srivastava2014,
  author  = {Nitish Srivastava and Geoffrey Hinton and Alex Krizhevsky and Ilya Sutskever and Ruslan Salakhutdinov},
  title   = {Dropout: {A} simple way to prevent neural networks from overfitting},
  journal = {Journal of Machine Learning Research},
  year    = {2014},
  volume  = {15},
  number  = {56},
  pages   = {1929--1958},
  url     = {http://jmlr.org/papers/v15/srivastava14a.html}
}

@article{Chaudhari_2019,
doi = {10.1088/1742-5468/ab39d9},
url = {https://dx.doi.org/10.1088/1742-5468/ab39d9},
year = {2019},
month = {dec},
publisher = {IOP Publishing and SISSA},
volume = {2019},
number = {12},
pages = {124018},
author = {Chaudhari, Pratik and Choromanska, Anna and Soatto, Stefano and LeCun, Yann and Baldassi, Carlo and Borgs, Christian and Chayes, Jennifer and Sagun, Levent and Zecchina, Riccardo},
title = {Entropy-SGD: biasing gradient descent into wide valleys*},
journal = {Journal of Statistical Mechanics: Theory and Experiment},
abstract = {This paper proposes a new optimization algorithm called Entropy-SGD for training deep neural networks that is motivated by the local geometry of the energy landscape. Local extrema with low generalization error have a large proportion of almost-zero eigenvalues in the Hessian with very few positive or negative eigenvalues. We leverage upon this observation to construct a local-entropy-based objective function that favors well-generalizable solutions lying in large flat regions of the energy landscape, while avoiding poorly-generalizable solutions located in the sharp valleys. Conceptually, our algorithm resembles two nested loops of SGD where we use Langevin dynamics in the inner loop to compute the gradient of the local entropy before each update of the weights. We show that the new objective has a smoother energy landscape and show improved generalization over SGD using uniform stability, under certain assumptions. Our experiments on convolutional and recurrent networks demonstrate that Entropy-SGD compares favorably to state-of-the-art techniques in terms of generalization error and training time.}
}

@InProceedings{Ma2018,
  title = 	 {The Power of Interpolation: Understanding the Effectiveness of {SGD} in Modern Over-parametrized Learning},
  author =       {Ma, Siyuan and Bassily, Raef and Belkin, Mikhail},
  booktitle = 	 {Proceedings of the 35th International Conference on Machine Learning},
  pages = 	 {3325--3334},
  year = 	 {2018},
  volume = 	 {80},
  series = 	 {Proceedings of Machine Learning Research},
  month = 	 {10--15 Jul},
  publisher =    {PMLR},
  pdf = 	 {http://proceedings.mlr.press/v80/ma18a/ma18a.pdf},
  url = 	 {https://proceedings.mlr.press/v80/ma18a.html},
  abstract = 	 {In this paper we aim to formally explain the phenomenon of fast convergence of Stochastic Gradient Descent (SGD) observed in modern machine learning. The key observation is that most modern learning architectures are over-parametrized and are trained to interpolate the data by driving the empirical loss (classification and regression) close to zero. While it is still unclear why these interpolated solutions perform well on test data, we show that these regimes allow for fast convergence of SGD, comparable in number of iterations to full gradient descent. For convex loss functions we obtain an exponential convergence bound for <em>mini-batch</em> SGD parallel to that for full gradient descent. We show that there is a critical batch size $m^*$ such that: (a) SGD iteration with mini-batch size $m\leq m^*$ is nearly equivalent to $m$ iterations of mini-batch size $1$ (<em>linear scaling regime</em>). (b) SGD iteration with mini-batch $m&gt; m^*$ is nearly equivalent to a full gradient descent iteration (<em>saturation regime</em>). Moreover, for the quadratic loss, we derive explicit expressions for the optimal mini-batch and step size and explicitly characterize the two regimes above. The critical mini-batch size can be viewed as the limit for effective mini-batch parallelization. It is also nearly independent of the data size, implying $O(n)$ acceleration over GD per unit of computation. We give experimental evidence on real data which closely follows our theoretical analyses. Finally, we show how our results fit in the recent developments in training deep neural networks and discuss connections to adaptive rates for SGD and variance reduction.}
}

@article{neyshabur2014search,
  title={In search of the real inductive bias: {On} the role of implicit regularization in deep learning},
  author={Neyshabur, Behnam and Tomioka, Ryota and Srebro, Nathan},
  journal={arXiv preprint arXiv:1412.6614},
  year={2014}
}

@inproceedings{savarese2019infinite,
  title={How do infinite width bounded norm networks look in function space?},
  author={Savarese, Pedro and Evron, Itay and Soudry, Daniel and Srebro, Nathan},
  booktitle={Conference on Learning Theory},
  pages={2667--2690},
  year={2019},
  organization={PMLR}
}

@article{Ergen2021,
  author  = {Tolga Ergen and Mert Pilanci},
  title   = {Convex Geometry and Duality of Over-parameterized Neural Networks},
  journal = {Journal of Machine Learning Research},
  year    = {2021},
  volume  = {22},
  number  = {212},
  pages   = {1--63},
  url     = {http://jmlr.org/papers/v22/20-1447.html}
}

@article{hanin2021ridgeless,
  title={Ridgeless Interpolation with Shallow ReLU Networks in $1 D $ is Nearest Neighbor Curvature Extrapolation and Provably Generalizes on Lipschitz Functions},
  author={Hanin, Boris},
  journal={arXiv preprint arXiv:2109.12960},
  year={2021}
}

@inproceedings{boursier2023penalising,
title={Penalising the biases in norm regularisation enforces sparsity},
author={Etienne Boursier and Nicolas Flammarion},
booktitle={Thirty-seventh Conference on Neural Information Processing Systems},
year={2023},
url={https://openreview.net/forum?id=JtIqG47DAQ}
}

@inproceedings{
joshi2024noisy,
title={Noisy Interpolation Learning with Shallow Univariate Re{LU} Networks},
author={Nirmit Joshi and Gal Vardi and Nathan Srebro},
booktitle={The Twelfth International Conference on Learning Representations},
year={2024},
url={https://openreview.net/forum?id=GTUoTJXPBf}
}

@inproceedings{Ji2021,
  author       = {Ziwei Ji and
                  Justin D. Li and
                  Matus Telgarsky},
  editor       = {Marc'Aurelio Ranzato and
                  Alina Beygelzimer and
                  Yann N. Dauphin and
                  Percy Liang and
                  Jennifer Wortman Vaughan},
  title        = {Early-stopped neural networks are consistent},
  booktitle    = {Advances in Neural Information Processing Systems 34: Annual Conference
                  on Neural Information Processing Systems 2021, NeurIPS 2021, December
                  6-14, 2021, virtual},
  pages        = {1805--1817},
  year         = {2021},
  url          = {https://proceedings.neurips.cc/paper/2021/hash/0e1ebad68af7f0ae4830b7ac92bc3c6f-Abstract.html},
  timestamp    = {Tue, 03 May 2022 16:20:46 +0200},
  biburl       = {https://dblp.org/rec/conf/nips/JiLT21.bib},
  bibsource    = {dblp computer science bibliography, https://dblp.org}
}

@InProceedings{Gupta2015,
  title = 	 {Deep learning with limited numerical precision},
  author = 	 {Gupta, Suyog and Agrawal, Ankur and Gopalakrishnan, Kailash and Narayanan, Pritish},
  booktitle = 	 {Proceedings of the 32nd International Conference on Machine Learning},
  pages = 	 {1737--1746},
  year = 	 {2015},
  editor = 	 {Bach, Francis and Blei, David},
  volume = 	 {37},
  series = 	 {Proceedings of Machine Learning Research},
  address = 	 {Lille, France},
  month = 	 {07--09 Jul},
  publisher =    {PMLR},
  pdf = 	 {http://proceedings.mlr.press/v37/gupta15.pdf},
  url = 	 {https://proceedings.mlr.press/v37/gupta15.html},
  abstract = 	 {Training of large-scale deep neural networks is often constrained by the available computational resources. We study the effect of limited precision data representation and computation on neural network training. Within the context of low-precision fixed-point computations, we observe the rounding scheme to play a crucial role in determining the network’s behavior during training. Our results show that deep networks can be trained using only 16-bit wide fixed-point number representation when using stochastic rounding, and incur little to no degradation in the classification accuracy. We also demonstrate an energy-efficient hardware accelerator that implements low-precision fixed-point arithmetic with stochastic rounding}
}

@inproceedings{Du2019,
  author       = {Simon S. Du and
                  Jason D. Lee and
                  Haochuan Li and
                  Liwei Wang and
                  Xiyu Zhai},
  editor       = {Kamalika Chaudhuri and
                  Ruslan Salakhutdinov},
  title        = {Gradient descent finds global minima of deep neural networks},
  booktitle    = {Proceedings of the 36th International Conference on Machine Learning},
  series       = {Proceedings of Machine Learning Research},
  volume       = {97},
  pages        = {1675--1685},
  publisher    = {{PMLR}},
  year         = {2019},
  url          = {http://proceedings.mlr.press/v97/du19c.html},
  timestamp    = {Tue, 11 Jun 2019 15:37:38 +0200},
  biburl       = {https://dblp.org/rec/conf/icml/DuLL0Z19.bib},
  bibsource    = {dblp computer science bibliography, https://dblp.org}
}

@INPROCEEDINGS{Kawaguchi2019,
  author={Kawaguchi, Kenji and Huang, Jiaoyang},
  booktitle={57th Annual Allerton Conference on Communication, Control, and Computing (Allerton)},
  title={Gradient Descent Finds Global Minima for Generalizable Deep Neural Networks of Practical Sizes},
  year={2019},
  volume={},
  number={},
  pages={92-99},
  keywords={Training;Gradient methods;Neural networks},
  doi={10.1109/ALLERTON.2019.8919696}}

@InProceedings{Allen-Zhu2019,
  title = 	 {A convergence theory for deep learning via over-parameterization},
  author =       {Allen-Zhu, Zeyuan and Li, Yuanzhi and Song, Zhao},
  booktitle = 	 {Proceedings of the 36th International Conference on Machine Learning},
  pages = 	 {242--252},
  year = 	 {2019},
  volume = 	 {97},
  series = 	 {Proceedings of Machine Learning Research},
  month = 	 {09--15 Jun},
  publisher =    {PMLR},
  pdf = 	 {http://proceedings.mlr.press/v97/allen-zhu19a/allen-zhu19a.pdf},
  url = 	 {https://proceedings.mlr.press/v97/allen-zhu19a.html},
  abstract = 	 {Deep neural networks (DNNs) have demonstrated dominating performance in many fields; since AlexNet, networks used in practice are going wider and deeper. On the theoretical side, a long line of works have been focusing on why we can train neural networks when there is only one hidden layer. The theory of multi-layer networks remains unsettled. In this work, we prove simple algorithms such as stochastic gradient descent (SGD) can find Global Minima on the training objective of DNNs in Polynomial Time. We only make two assumptions: the inputs do not degenerate and the network is over-parameterized. The latter means the number of hidden neurons is sufficiently large: polynomial in L, the number of DNN layers and in n, the number of training samples. As concrete examples, starting from randomly initialized weights, we show that SGD attains 100% training accuracy in classification tasks, or minimizes regression loss in linear convergence speed eps &nbsp; e^{-T}, with running time polynomial in n and L. Our theory applies to the widely-used but non-smooth ReLU activation, and to any smooth and possibly non-convex loss functions. In terms of network architectures, our theory at least applies to fully-connected neural networks, convolutional neural networks (CNN), and residual neural networks (ResNet).}
}

@InProceedings{Zhang2019,
  title = 	 {Theoretically principled trade-off between robustness and accuracy},
  author =       {Zhang, Hongyang and Yu, Yaodong and Jiao, Jiantao and Xing, Eric and Ghaoui, Laurent El and Jordan, Michael},
  booktitle = 	 {Proceedings of the 36th International Conference on Machine Learning},
  pages = 	 {7472--7482},
  year = 	 {2019},
  pdf = 	 {http://proceedings.mlr.press/v97/zhang19p/zhang19p.pdf},
  url = 	 {https://proceedings.mlr.press/v97/zhang19p.html},
  abstract = 	 {We identify a trade-off between robustness and accuracy that serves as a guiding principle in the design of defenses against adversarial examples. Although this problem has been widely studied empirically, much remains unknown concerning the theory underlying this trade-off. In this work, we decompose the prediction error for adversarial examples (robust error) as the sum of the natural (classification) error and boundary error, and provide a differentiable upper bound using the theory of classification-calibrated loss, which is shown to be the tightest possible upper bound uniform over all probability distributions and measurable predictors. Inspired by our theoretical analysis, we also design a new defense method, TRADES, to trade adversarial robustness off against accuracy. Our proposed algorithm performs well experimentally in real-world datasets. The methodology is the foundation of our entry to the NeurIPS 2018 Adversarial Vision Challenge in which we won the 1st place out of &nbsp;2,000 submissions, surpassing the runner-up approach by 11.41% in terms of mean L_2 perturbation distance.}
}

@inproceedings{Donhauser2021,
 author = {Donhauser, Konstantin and Tifrea, Alexandru and Aerni, Michael and Heckel, Reinhard and Yang, Fanny},
 booktitle = {Advances in Neural Information Processing Systems},
 pages = {23465--23477},
 title = {Interpolation can hurt robust generalization even when there is no noise},
 url = {https://proceedings.neurips.cc/paper_files/paper/2021/file/c4f2c88e16a579900657c18726641c81-Paper.pdf},
 volume = {34},
 year = {2021}
}

@book{Tsybakov2009,
author="Tsybakov, Alexandre B.",
Title="Introduction to Nonparametric Estimation",
year="2009",
publisher="Springer New York"
}

@book{Gyorfi2002,
Title="A Distribution-Free Theory of Nonparametric Regression",
year="2002",
publisher="Springer New York",
author={L\'{a}szl\'{o} Gy\"{o}rfi and Michael Kohler and Adam Krzy\.{z}ak and Harro Walk}
}

@article{Schmidt-Hieber2020,
author = {Johannes Schmidt-Hieber},
title = {Nonparametric regression using deep neural networks with {ReLU} activation function},
volume = {48},
journal = {The Annals of Statistics},
number = {4},
publisher = {Institute of Mathematical Statistics},
pages = {1875--1897},
keywords = {Additive models, minimax estimation risk, multilayer neural networks, Nonparametric regression, ReLU activation function, Wavelets},
year = {2020},
doi = {10.1214/19-AOS1875},
URL = {https://doi.org/10.1214/19-AOS1875}
}

@inproceedings{Goodfellow2015,title	= {Explaining and harnessing adversarial examples},author	= {Ian Goodfellow and Jonathon Shlens and Christian Szegedy},year	= {2015},URL	= {http://arxiv.org/abs/1412.6572},booktitle	= {International Conference on Learning Representations}}

@article{Zhang2016understanding,
      title={Understanding deep learning requires rethinking generalization},
      author={Chiyuan Zhang and Samy Bengio and Moritz Hardt and Benjamin Recht and Oriol Vinyals},
      year={2016},
      journal={arXiv preprint arXiv:1611.03530},
}

@article{Belkin2019Reconciling,
author = {Mikhail Belkin and Daniel Hsu  and Siyuan Ma  and Soumik Mandal },
title = {Reconciling modern machine-learning practice and the classical bias-variance trade-off},
journal = {Proceedings of the National Academy of Sciences},
volume = {116},
number = {32},
pages = {15849--15854},
year = {2019},
doi = {10.1073/pnas.1903070116},
URL = {https://www.pnas.org/doi/abs/10.1073/pnas.1903070116},
eprint = {https://www.pnas.org/doi/pdf/10.1073/pnas.1903070116},
abstract = {While breakthroughs in machine learning and artificial intelligence are changing society, our fundamental understanding has lagged behind. It is traditionally believed that fitting models to the training data exactly is to be avoided as it leads to poor performance on unseen data. However, powerful modern classifiers frequently have near-perfect fit in training, a disconnect that spurred recent intensive research and controversy on whether theory provides practical insights. In this work, we show how classical theory and modern practice can be reconciled within a single unified performance curve and propose a mechanism underlying its emergence. We believe this previously unknown pattern connecting the structure and performance of learning architectures will help shape design and understanding of learning algorithms. Breakthroughs in machine learning are rapidly changing science and society, yet our fundamental understanding of this technology has lagged far behind. Indeed, one of the central tenets of the field, the bias–variance trade-off, appears to be at odds with the observed behavior of methods used in modern machine-learning practice. The bias–variance trade-off implies that a model should balance underfitting and overfitting: Rich enough to express underlying structure in data and simple enough to avoid fitting spurious patterns. However, in modern practice, very rich models such as neural networks are trained to exactly fit (i.e., interpolate) the data. Classically, such models would be considered overfitted, and yet they often obtain high accuracy on test data. This apparent contradiction has raised questions about the mathematical foundations of machine learning and their relevance to practitioners. In this paper, we reconcile the classical understanding and the modern practice within a unified performance curve. This “double-descent” curve subsumes the textbook U-shaped bias–variance trade-off curve by showing how increasing model capacity beyond the point of interpolation results in improved performance. We provide evidence for the existence and ubiquity of double descent for a wide spectrum of models and datasets, and we posit a mechanism for its emergence. This connection between the performance and the structure of machine-learning models delineates the limits of classical analyses and has implications for both the theory and the practice of machine learning.}}

@InProceedings{Belkin2018To,
  title = 	 {To understand deep learning we need to understand kernel learning},
  author =       {Belkin, Mikhail and Ma, Siyuan and Mandal, Soumik},
  booktitle = 	 {Proceedings of the 35th International Conference on Machine Learning},
  pages = 	 {541--549},
  year = 	 {2018},
  volume = 	 {80},
  series = 	 {Proceedings of Machine Learning Research},
  month = 	 {10--15 Jul},
  publisher =    {PMLR},
  pdf = 	 {http://proceedings.mlr.press/v80/belkin18a/belkin18a.pdf},
  url = 	 {https://proceedings.mlr.press/v80/belkin18a.html},
  abstract = 	 {Generalization performance of classifiers in deep learning has recently become a subject of intense study. Deep models, which are typically heavily over-parametrized, tend to fit the training data exactly. Despite this “overfitting", they perform well on test data, a phenomenon not yet fully understood. The first point of our paper is that strong performance of overfitted classifiers is not a unique feature of deep learning. Using six real-world and two synthetic datasets, we establish experimentally that kernel machines trained to have zero classification error or near zero regression error (interpolation) perform very well on test data. We proceed to give a lower bound on the norm of zero loss solutions for smooth kernels, showing that they increase nearly exponentially with data size. None of the existing bounds produce non-trivial results for interpolating solutions. We also show experimentally that (non-smooth) Laplacian kernels easily fit random labels, a finding that parallels results recently reported for ReLU neural networks. In contrast, fitting noisy data requires many more epochs for smooth Gaussian kernels. Similar performance of overfitted Laplacian and Gaussian classifiers on test, suggests that generalization is tied to the properties of the kernel function rather than the optimization process. Some key phenomena of deep learning are manifested similarly in kernel methods in the modern “overfitted" regime. The combination of the experimental and theoretical results presented in this paper indicates a need for new theoretical ideas for understanding properties of classical kernel methods. We argue that progress on understanding deep learning will be difficult until more tractable “shallow” kernel methods are better understood.}
}

@article{Liang2020Just,
author = {Tengyuan Liang and Alexander Rakhlin},
title = {Just interpolate: Kernel {“Ridgeless”} regression can generalize},
volume = {48},
journal = {The Annals of Statistics},
number = {3},
publisher = {Institute of Mathematical Statistics},
pages = {1329 -- 1347},
keywords = {data-dependent bounds, high dimensionality, implicit regularization, kernel methods, Minimum-norm interpolation, reproducing kernel Hilbert spaces, spectral decay},
year = {2020},
doi = {10.1214/19-AOS1849},
URL = {https://doi.org/10.1214/19-AOS1849}
}

@article{CHEN2015447,
title = {Optimal uniform convergence rates and asymptotic normality for series estimators under weak dependence and weak conditions},
journal = {Journal of Econometrics},
volume = {188},
number = {2},
pages = {447--465},
year = {2015},
note = {Heterogeneity in Panel Data and in Nonparametric Analysis in honor of Professor Cheng Hsiao},
issn = {0304-4076},
doi = {https://doi.org/10.1016/j.jeconom.2015.03.010},
url = {https://www.sciencedirect.com/science/article/pii/S0304407615000792},
author = {Xiaohong Chen and Timothy M. Christensen},
keywords = {Nonparametric series regression, Optimal uniform convergence rates, Weak dependence, Random matrices, Splines, Wavelets, (Nonlinear) Irregular functionals, Sieve  statistics},
abstract = {We show that spline and wavelet series regression estimators for weakly dependent regressors attain the optimal uniform (i.e. sup-norm) convergence rate (n/logn)−p/(2p+d) of Stone (1982), where d is the number of regressors and p is the smoothness of the regression function. The optimal rate is achieved even for heavy-tailed martingale difference errors with finite (2+(d/p))th absolute moment for d/p<2. We also establish the asymptotic normality of t statistics for possibly nonlinear, irregular functionals of the conditional mean function under weak conditions. The results are proved by deriving a new exponential inequality for sums of weakly dependent random matrices, which is of independent interest.}
}

@InProceedings{Arnould2023,
  title = 	 {Is interpolation benign for random forest regression?},
  author =       {Arnould, Ludovic and Boyer, Claire and Scornet, Erwan},
  booktitle = 	 {Proceedings of The 26th International Conference on Artificial Intelligence and Statistics},
  pages = 	 {5493--5548},
  year = 	 {2023},
  volume = 	 {206},
  series = 	 {Proceedings of Machine Learning Research},
  month = 	 {25--27 Apr},
  publisher =    {PMLR},
  pdf = 	 {https://proceedings.mlr.press/v206/arnould23a/arnould23a.pdf},
  url = 	 {https://proceedings.mlr.press/v206/arnould23a.html},
  abstract = 	 {Statistical wisdom suggests that very complex models, interpolating training data, will be poor at predicting unseen examples. Yet, this aphorism has been recently challenged by the identification of benign overfitting regimes, specially studied in the case of parametric models: generalization capabilities may be preserved despite model high complexity. While it is widely known that fully-grown decision trees interpolate and, in turn, have bad predictive performances, the same behavior is yet to be analyzed for Random Forests (RF). In this paper, we study the trade-off between interpolation and consistency for several types of RF algorithms. Theoretically, we prove that interpolation regimes and consistency cannot be achieved simultaneously for several non-adaptive RF. Since adaptivity seems to be the cornerstone to bring together interpolation and consistency, we study interpolating Median RF which are proved to be consistent in the interpolating regime. This is the first result conciliating interpolation and consistency for RF, highlighting that the averaging effect introduced by feature randomization is a key mechanism, sufficient to ensure the consistency in the interpolation regime and beyond. Numerical experiments show that Breiman’s RF are consistent while exactly interpolating, when no bootstrap step is involved. We theoretically control the size of the interpolation area, which converges fast enough to zero, giving a necessary condition for exact interpolation and consistency to occur in conjunction.}
}

@article{BELLONI2015345,
title = {Some new asymptotic theory for least squares series: Pointwise and uniform results},
journal = {Journal of Econometrics},
volume = {186},
number = {2},
pages = {345--366},
year = {2015},
note = {High Dimensional Problems in Econometrics},
issn = {0304-4076},
doi = {https://doi.org/10.1016/j.jeconom.2015.02.014},
url = {https://www.sciencedirect.com/science/article/pii/S030440761500038X},
author = {Alexandre Belloni and Victor Chernozhukov and Denis Chetverikov and Kengo Kato},
keywords = {Least squares series, Strong approximations, Uniform confidence bands},
abstract = {In econometric applications it is common that the exact form of a conditional expectation is unknown and having flexible functional forms can lead to improvements over a pre-specified functional form, especially if they nest some successful parametric economically-motivated forms. Series method offers exactly that by approximating the unknown function based on k basis functions, where k is allowed to grow with the sample size n to balance the trade off between variance and bias. In this work we consider series estimators for the conditional mean in light of four new ingredients: (i) sharp LLNs for matrices derived from the non-commutative Khinchin inequalities, (ii) bounds on the Lebesgue factor that controls the ratio between the L∞ and L2-norms of approximation errors, (iii) maximal inequalities for processes whose entropy integrals diverge at some rate, and (iv) strong approximations to series-type processes. These technical tools allow us to contribute to the series literature, specifically the seminal work of Newey (1997), as follows. First, we weaken considerably the condition on the number k of approximating functions used in series estimation from the typical k2/n→0 to k/n→0, up to log factors, which was available only for spline series before. Second, under the same weak conditions we derive L2 rates and pointwise central limit theorems results when the approximation error vanishes. Under an incorrectly specified model, i.e. when the approximation error does not vanish, analogous results are also shown. Third, under stronger conditions we derive uniform rates and functional central limit theorems that hold if the approximation error vanishes or not. That is, we derive the strong approximation for the entire estimate of the nonparametric function. Finally and most importantly, from a point of view of practice, we derive uniform rates, Gaussian approximations, and uniform confidence bands for a wide collection of linear functionals of the conditional expectation function, for example, the function itself, the partial derivative function, the conditional average partial derivative function, and other similar quantities. All of these results are new.}
}

@InProceedings{Rakhlin2019Consistency,
  title = 	 {Consistency of Interpolation with Laplace Kernels is a High-Dimensional Phenomenon},
  author =       {Rakhlin, Alexander and Zhai, Xiyu},
  booktitle = 	 {Proceedings of the Thirty-Second Conference on Learning Theory},
  pages = 	 {2595--2623},
  year = 	 {2019},
  volume = 	 {99},
  series = 	 {Proceedings of Machine Learning Research},
  month = 	 {25--28 Jun},
  publisher =    {PMLR},
  pdf = 	 {http://proceedings.mlr.press/v99/rakhlin19a/rakhlin19a.pdf},
  url = 	 {https://proceedings.mlr.press/v99/rakhlin19a.html},
  abstract = 	 {  We show that minimum-norm interpolation in the Reproducing Kernel Hilbert Space corresponding to the Laplace kernel is not consistent if input dimension is constant. The lower bound holds for any choice of kernel bandwidth, even if selected based on data. The result supports the empirical observation that minimum-norm interpolation (that is, exact fit to training data) in RKHS generalizes well for some high-dimensional datasets, but not for low-dimensional ones.}
}

@inproceedings{Szegedy2014,
  author       = {Christian Szegedy and
                  Wojciech Zaremba and
                  Ilya Sutskever and
                  Joan Bruna and
                  Dumitru Erhan and
                  Ian J. Goodfellow and
                  Rob Fergus},
  title        = {Intriguing properties of neural networks},
  booktitle    = {2nd International Conference on Learning Representations},
  year         = {2014},
  url          = {http://arxiv.org/abs/1312.6199},
  timestamp    = {Thu, 25 Jul 2019 14:35:25 +0200},
  biburl       = {https://dblp.org/rec/journals/corr/SzegedyZSBEGF13.bib},
  bibsource    = {dblp computer science bibliography, https://dblp.org}
}

@ARTICLE{Muthukumar2020,
  author={Muthukumar, Vidya and Vodrahalli, Kailas and Subramanian, Vignesh and Sahai, Anant},
  journal={IEEE Journal on Selected Areas in Information Theory},
  title={Harmless Interpolation of Noisy Data in Regression},
  year={2020},
  volume={1},
  number={1},
  pages={67-83},
  keywords={Neural networks;Kernel;Interpolation;Linear regression;Training data;Optimization;Information theory;Statistical learning;supervised learning;interpolation;function approximation},
  doi={10.1109/JSAIT.2020.2984716}}

@article{Belkin2020Two,
author = {Belkin, Mikhail and Hsu, Daniel and Xu, Ji},
title = {Two Models of Double Descent for Weak Features},
journal = {SIAM Journal on Mathematics of Data Science},
volume = {2},
number = {4},
pages = {1167-1180},
year = {2020},
doi = {10.1137/20M1336072},

URL = {

        https://doi.org/10.1137/20M1336072



},
eprint = {

        https://doi.org/10.1137/20M1336072



}
,
    abstract = { The “double descent” risk curve was proposed to qualitatively describe the out-of-sample prediction accuracy of variably parameterized machine learning models. This article provides a precise mathematical analysis for the shape of this curve in two simple data models with the least squares/least norm predictor. Specifically, it is shown that the risk peaks when the number of features \$p\$ is close to the sample size \$n\$ but also that the risk sometimes decreases toward its minimum as \$p\$ increases beyond \$n\$. This behavior parallels some key patterns observed in large models, including modern neural networks, and is contrasted with that of “prescient” models that select features in an a priori optimal order. }
}

@article{Hastie2022Surprises,
author = {Trevor Hastie and Andrea Montanari and Saharon Rosset and Ryan J. Tibshirani},
title = {{Surprises in high-dimensional ridgeless least squares interpolation}},
volume = {50},
journal = {The Annals of Statistics},
number = {2},
publisher = {Institute of Mathematical Statistics},
pages = {949 -- 986},
keywords = {interpolation, overparametrization, Random matrix theory, regression, Ridge regression},
year = {2022},
doi = {10.1214/21-AOS2133},
URL = {https://doi.org/10.1214/21-AOS2133}
}

@article{
Bartlett2020Benign,
author = {Peter L. Bartlett  and Philip M. Long  and Gábor Lugosi  and Alexander Tsigler },
title = {Benign overfitting in linear regression},
journal = {Proceedings of the National Academy of Sciences},
volume = {117},
number = {48},
pages = {30063-30070},
year = {2020},
doi = {10.1073/pnas.1907378117},
URL = {https://www.pnas.org/doi/abs/10.1073/pnas.1907378117},
eprint = {https://www.pnas.org/doi/pdf/10.1073/pnas.1907378117},
abstract = {The phenomenon of benign overfitting is one of the key mysteries uncovered by deep learning methodology: deep neural networks seem to predict well, even with a perfect fit to noisy training data. Motivated by this phenomenon, we consider when a perfect fit to training data in linear regression is compatible with accurate prediction. We give a characterization of linear regression problems for which the minimum norm interpolating prediction rule has near-optimal prediction accuracy. The characterization is in terms of two notions of the effective rank of the data covariance. It shows that overparameterization is essential for benign overfitting in this setting: the number of directions in parameter space that are unimportant for prediction must significantly exceed the sample size. By studying examples of data covariance properties that this characterization shows are required for benign overfitting, we find an important role for finite-dimensional data: the accuracy of the minimum norm interpolating prediction rule approaches the best possible accuracy for a much narrower range of properties of the data distribution when the data lie in an infinite-dimensional space vs. when the data lie in a finite-dimensional space with dimension that grows faster than the sample size.}}

@article{Lecue2024geometrical,
   author = {Lecu\'{e}, Guillaume and Shang, Zong},
   title = {A geometrical viewpoint on the benign overfitting property of the minimum $\ell _2$-norm interpolant estimator and its universality},
   journal = {Probability Theory and Related Fields},
   ISSN = {1432-2064},
   DOI = {10.1007/s00440-024-01336-7},
   url = {https://doi.org/10.1007/s00440-024-01336-7},
   year = {2024},
   type = {Journal Article}
}

@article{Chinot2020robustness,
author = {Geoffrey Chinot and Matthieu Lerasle},
title = {{On the robustness of the minimim  interpolator}},
volume = {31},
journal = {Bernoulli},
number = {3},
publisher = {Bernoulli Society for Mathematical Statistics and Probability},
pages = {1693--1708},
keywords = {Interpolation problems, robustness, Statistical learning},
year = {2025},
doi = {10.3150/22-BEJ1473},
URL = {https://doi.org/10.3150/22-BEJ1473}
}

@article{Mei2022,
author = {Mei, Song and Montanari, Andrea},
title = {The Generalization Error of Random Features Regression: Precise Asymptotics and the Double Descent Curve},
journal = {Communications on Pure and Applied Mathematics},
volume = {75},
number = {4},
pages = {667-766},
doi = {https://doi.org/10.1002/cpa.22008},
url = {https://onlinelibrary.wiley.com/doi/abs/10.1002/cpa.22008},
eprint = {https://onlinelibrary.wiley.com/doi/pdf/10.1002/cpa.22008},
abstract = {Abstract Deep learning methods operate in regimes that defy the traditional statistical mindset. Neural network architectures often contain more parameters than training samples, and are so rich that they can interpolate the observed labels, even if the latter are replaced by pure noise. Despite their huge complexity, the same architectures achieve small generalization error on real data. This phenomenon has been rationalized in terms of a so-called ‘double descent’ curve. As the model complexity increases, the test error follows the usual U-shaped curve at the beginning, first decreasing and then peaking around the interpolation threshold (when the model achieves vanishing training error). However, it descends again as model complexity exceeds this threshold. The global minimum of the test error is found above the interpolation threshold, often in the extreme overparametrization regime in which the number of parameters is much larger than the number of samples. Far from being a peculiar property of deep neural networks, elements of this behavior have been demonstrated in much simpler settings, including linear regression with random covariates. In this paper we consider the problem of learning an unknown function over the -dimensional sphere , from i.i.d. samples , . We perform ridge regression on random features of the form , . This can be equivalently described as a two-layer neural network with random first-layer weights. We compute the precise asymptotics of the test error, in the limit with and fixed. This provides the first analytically tractable model that captures all the features of the double descent phenomenon without assuming ad hoc misspecification structures. In particular, above a critical value of the signal-to-noise ratio, minimum test error is achieved by extremely overparametrized interpolators, i.e., networks that have a number of parameters much larger than the sample size, and vanishing training error. © 2021 Wiley Periodicals LLC.},
year = {2022}
}

@article{Tsigler2023,
  author  = {Alexander Tsigler and Peter L. Bartlett},
  title   = {Benign overfitting in ridge regression},
  journal = {Journal of Machine Learning Research},
  year    = {2023},
  volume  = {24},
  number  = {123},
  pages   = {1--76},
  url     = {http://jmlr.org/papers/v24/22-1398.html}
}

@ARTICLE{Su2019,
  author={Su, Jiawei and Vargas, Danilo Vasconcellos and Sakurai, Kouichi},
  journal={IEEE Transactions on Evolutionary Computation},
  title={One Pixel Attack for Fooling Deep Neural Networks},
  year={2019},
  volume={23},
  number={5},
  pages={828-841},
  keywords={Perturbation methods;Neural networks;Robustness;Image color analysis;Image recognition;Additives;Convolutional neural network;differential evolution (DE);image recognition;information security},
  doi={10.1109/TEVC.2019.2890858}}

@InProceedings{Biggio2013,
author="Biggio, Battista
and Corona, Igino
and Maiorca, Davide
and Nelson, Blaine
and {\v{S}}rndi{\'{c}}, Nedim
and Laskov, Pavel
and Giacinto, Giorgio
and Roli, Fabio",
title="Evasion attacks against machine learning at test time",
booktitle="Machine Learning and Knowledge Discovery in Databases",
year="2013",
publisher="Springer Berlin Heidelberg",
pages="387--402",
abstract="In security-sensitive applications, the success of machine learning depends on a thorough vetting of their resistance to adversarial data. In one pertinent, well-motivated attack scenario, an adversary may attempt to evade a deployed system at test time by carefully manipulating attack samples. In this work, we present a simple but effective gradient-based approach that can be exploited to systematically assess the security of several, widely-used classification algorithms against evasion attacks. Following a recently proposed framework for security evaluation, we simulate attack scenarios that exhibit different risk levels for the classifier by increasing the attacker's knowledge of the system and her ability to manipulate attack samples. This gives the classifier designer a better picture of the classifier performance under evasion attacks, and allows him to perform a more informed model selection (or parameter setting). We evaluate our approach on the relevant security task of malware detection in PDF files, and show that such systems can be easily evaded. We also sketch some countermeasures suggested by our analysis.",
isbn="978-3-642-40994-3"
}

@inproceedings{Madry2018Towards,
  author       = {Aleksander Madry and
                  Aleksandar Makelov and
                  Ludwig Schmidt and
                  Dimitris Tsipras and
                  Adrian Vladu},
  title        = {Towards deep learning models resistant to adversarial attacks},
  booktitle    = {6th International Conference on Learning Representations},
  year         = {2018},
  url          = {https://openreview.net/forum?id=rJzIBfZAb},
  timestamp    = {Thu, 25 Jul 2019 14:25:44 +0200},
  biburl       = {https://dblp.org/rec/conf/iclr/MadryMSTV18.bib},
  bibsource    = {dblp computer science bibliography, https://dblp.org}
}

@inproceedings{Wong2018,
  author       = {Eric Wong and
                  J. Zico Kolter},
  title        = {Provable Defenses against Adversarial Examples via the Convex Outer
                  Adversarial Polytope},
  booktitle    = {Proceedings of the 35th International Conference on Machine Learning},
  series       = {Proceedings of Machine Learning Research},
  volume       = {80},
  pages        = {5283--5292},
  publisher    = {{PMLR}},
  year         = {2018},
  url          = {http://proceedings.mlr.press/v80/wong18a.html},
  timestamp    = {Wed, 16 Dec 2020 15:31:31 +0100},
  biburl       = {https://dblp.org/rec/conf/icml/WongK18.bib},
  bibsource    = {dblp computer science bibliography, https://dblp.org}
}

@article{Peng2024,
author = {Jingfu Peng and Yuhong Yang},
journal={arXiv preprint arXiv:2410.09402},
title={Minimax rates of convergence for nonparametric regression under adversarial attacks},
year = {2024}
}

@article{Kohler2021,
author = {Michael Kohler and Adam Krzyżak},
title = {Over-parametrized deep neural networks minimizing the empirical risk do not generalize well},
volume = {27},
journal = {Bernoulli},
number = {4},
publisher = {Bernoulli Society for Mathematical Statistics and Probability},
pages = {2564--2597},
keywords = {neural networks, Nonparametric regression, over-parametrization, rate of convergence},
year = {2021},
doi = {10.3150/21-BEJ1323},
URL = {https://doi.org/10.3150/21-BEJ1323}
}

@ARTICLE{Das2025direct,
  author={Das, Santanu and Batra, Jatin and Srivastava, Piyush},
  journal={IEEE Transactions on Information Theory},
  title={A direct proof of a unified law of robustness for Bregman divergence losses},
  year={2025},
  volume={},
  number={},
  pages={1-1},
  keywords={Robustness;Interpolation;Data models;Complexity theory;Training data;Training;Deep learning;Context modeling;Perturbation methods;Neural networks;Statistics;Learning;Robustness;Bregman divergences;Over-parameterization},
  doi={10.1109/TIT.2025.3567076}}

@InProceedings{Wu2023law,
  title = 	 {A law of robustness beyond isoperimetry},
  author =       {Wu, Yihan and Huang, Heng and Zhang, Hongyang},
  booktitle = 	 {Proceedings of the 40th International Conference on Machine Learning},
  pages = 	 {37439--37455},
  year = 	 {2023},
  volume = 	 {202},
  series = 	 {Proceedings of Machine Learning Research},
  month = 	 {23--29 Jul},
  publisher =    {PMLR},
  pdf = 	 {https://proceedings.mlr.press/v202/wu23g/wu23g.pdf},
  url = 	 {https://proceedings.mlr.press/v202/wu23g.html},
  abstract = 	 {We study the robust interpolation problem of arbitrary data distributions supported on a bounded space and propose a two-fold law of robustness. Robust interpolation refers to the problem of interpolating $n$ noisy training data points in $R^d$ by a Lipschitz function. Although this problem has been well understood when the samples are drawn from an isoperimetry distribution, much remains unknown concerning its performance under generic or even the worst-case distributions. We prove a Lipschitzness lower bound $\Omega(\sqrt{n/p})$ of the interpolating neural network with $p$ parameters on arbitrary data distributions. With this result, we validate the law of robustness conjecture in prior work by Bubeck, Li and Nagaraj on two-layer neural networks with polynomial weights. We then extend our result to arbitrary interpolating approximators and prove a Lipschitzness lower bound $\Omega(n^{1/d})$ for robust interpolation. Our results demonstrate a two-fold law of robustness: a) we show the potential benefit of overparametrization for smooth data interpolation when $n=poly(d)$, and b) we disprove the potential existence of an $O(1)$-Lipschitz robust interpolating function when $n=\exp(\omega(d))$.}
}

@inproceedings{Raghunathan2018Certified,
  author       = {Aditi Raghunathan and
                  Jacob Steinhardt and
                  Percy Liang},
  title        = {Certified Defenses against Adversarial Examples},
  booktitle    = {6th International Conference on Learning Representations},
  year         = {2018},
  url          = {https://openreview.net/forum?id=Bys4ob-Rb},
  timestamp    = {Thu, 25 Jul 2019 14:25:42 +0200},
  biburl       = {https://dblp.org/rec/conf/iclr/RaghunathanSL18.bib},
  bibsource    = {dblp computer science bibliography, https://dblp.org}
}

@inproceedings{Cohen2019Certified,
  author       = {Jeremy Cohen and
                  Elan Rosenfeld and
                  J. Zico Kolter},
  title        = {Certified Adversarial Robustness via Randomized Smoothing},
  booktitle    = {Proceedings of the 36th International Conference on Machine Learning},
  series       = {Proceedings of Machine Learning Research},
  volume       = {97},
  pages        = {1310--1320},
  publisher    = {{PMLR}},
  year         = {2019},
  url          = {http://proceedings.mlr.press/v97/cohen19c.html},
  timestamp    = {Mon, 07 Aug 2023 17:37:03 +0200},
  biburl       = {https://dblp.org/rec/conf/icml/CohenRK19.bib},
  bibsource    = {dblp computer science bibliography, https://dblp.org}
}

@inproceedings{Bubeck2011,
  author       = {S{\'{e}}bastien Bubeck and
                  Yuanzhi Li and
                  Dheeraj M. Nagaraj},
  title        = {A law of robustness for two-layers neural networks},
  booktitle    = {Conference on Learning Theory},
  series       = {Proceedings of Machine Learning Research},
  volume       = {134},
  pages        = {804--820},
  publisher    = {{PMLR}},
  year         = {2021},
  url          = {http://proceedings.mlr.press/v134/bubeck21a.html},
  timestamp    = {Wed, 25 Aug 2021 17:11:16 +0200},
  biburl       = {https://dblp.org/rec/conf/colt/BubeckLN21.bib},
  bibsource    = {dblp computer science bibliography, https://dblp.org}
}

@article{Bubeck2023,
author = {Bubeck, S\'{e}bastien and Sellke, Mark},
title = {A Universal Law of Robustness via Isoperimetry},
year = {2023},
issue_date = {April 2023},
publisher = {Association for Computing Machinery},
address = {New York, NY, USA},
volume = {70},
number = {2},
issn = {0004-5411},
url = {https://doi.org/10.1145/3578580},
doi = {10.1145/3578580},
abstract = {Classically, data interpolation with a parametrized model class is possible as long as the number of parameters is larger than the number of equations to be satisfied. A puzzling phenomenon in deep learning is that models are trained with many more parameters than what this classical theory would suggest. We propose a partial theoretical explanation for this phenomenon. We prove that for a broad class of data distributions and model classes, overparametrization is necessary if one wants to interpolate the data smoothly. Namely we show that smooth interpolation requires d times more parameters than mere interpolation, where d is the ambient data dimension. We prove this universal law of robustness for any smoothly parametrized function class with polynomial size weights, and any covariate distribution verifying isoperimetry (or a mixture thereof). In the case of two-layer neural networks and Gaussian covariates, this law was conjectured in prior work by Bubeck, Li, and Nagaraj. We also give an interpretation of our result as an improved generalization bound for model classes consisting of smooth functions.},
journal = {Journal of the ACM},
month = mar,
articleno = {10},
numpages = {18},
keywords = {neural networks, isoperimetry, Adversarial robustness}
}

@article{GAIFFAS2007782,
title = {Sharp estimation in sup norm with random design},
journal = {Statistics \& Probability Letters},
volume = {77},
number = {8},
pages = {782-794},
year = {2007},
issn = {0167-7152},
doi = {https://doi.org/10.1016/j.spl.2006.11.017},
url = {https://www.sciencedirect.com/science/article/pii/S016771520700003X},
author = {St\'{e}phane Ga\"{\i}ffas},
keywords = {Random design, Non-parametric regression, Sharp estimation, Inhomogeneous data},
abstract = {In this paper, we study the estimation of a function based on noisy inhomogeneous data (the amount of data can vary on the estimation domain). We consider the model of regression with random design, where the design density is unknown. We construct an asymptotically sharp estimator which converges, for sup norm error loss, with a spatially dependent normalisation which is sensitive to the variations in the local amount of data. This estimator combines both kernel and local polynomial methods, and it does not depend within its construction on the design density. Then, we prove that the normalisation is optimal in an appropriate sense.}
}

@article{BERTIN2004225,
title = {Minimax exact constant in sup-norm for nonparametric regression with random design},
journal = {Journal of Statistical Planning and Inference},
volume = {123},
number = {2},
pages = {225-242},
year = {2004},
issn = {0378-3758},
doi = {https://doi.org/10.1016/S0378-3758(03)00154-X},
url = {https://www.sciencedirect.com/science/article/pii/S037837580300154X},
author = {Karine Bertin},
keywords = {Nonparametric regression, Minimax risk, Minimax exact constant, Uniform norm},
abstract = {We consider the nonparametric regression model with random design. We study the estimation of a regression function f in the uniform norm assuming that f belongs to a Hölder class. We determine the minimax exact constant and an asymptotically exact estimator. They depend on the minimum value of the design density.}
}

@article{zou2023universal,
  title={Universal and transferable adversarial attacks on aligned language models},
  author={Zou, Andy and Wang, Zifan and Carlini, Nicholas and Nasr, Milad and Kolter, J Zico and Fredrikson, Matt},
  journal={arXiv preprint arXiv:2307.15043},
  year={2023}
}

@InProceedings{Attias2023Real-Valued,
  title = 	 {Adversarially Robust {PAC} Learnability of Real-Valued Functions},
  author =       {Attias, Idan and Hanneke, Steve},
  booktitle = 	 {Proceedings of the 40th International Conference on Machine Learning},
  pages = 	 {1172--1199},
  year = 	 {2023},
  volume = 	 {202},
  series = 	 {Proceedings of Machine Learning Research},
  month = 	 {23--29 Jul},
  publisher =    {PMLR},
  pdf = 	 {https://proceedings.mlr.press/v202/attias23a/attias23a.pdf},
  url = 	 {https://proceedings.mlr.press/v202/attias23a.html},
  abstract = 	 {We study robustness to test-time adversarial attacks in the regression setting with $\ell_p$ losses and arbitrary perturbation sets. We address the question of which function classes are PAC learnable in this setting. We show that classes of finite fat-shattering dimension are learnable in both the realizable and agnostic settings. Moreover, for convex function classes, they are even properly learnable. In contrast, some non-convex function classes provably require improper learning algorithms. Our main technique is based on a construction of an adversarially robust sample compression scheme of a size determined by the fat-shattering dimension. Along the way, we introduce a novel agnostic sample compression scheme for real-valued functions, which may be of independent interest.}
}

@article{Cheng2024ridge,
author = {Chen Cheng and Andrea Montanari},
title = {Dimension free ridge regression},
volume = {52},
journal = {The Annals of Statistics},
number = {6},
publisher = {Institute of Mathematical Statistics},
pages = {2879--2912},
keywords = {Hilbert space, Random matrix, Ridge regression},
year = {2024},
doi = {10.1214/24-AOS2449},
URL = {https://doi.org/10.1214/24-AOS2449}
}

@inproceedings{buchholz2022kernel,
  title={Kernel interpolation in Sobolev spaces is not consistent in low dimensions},
  author={Buchholz, Simon},
  booktitle={Conference on Learning Theory},
  pages={3410--3440},
  year={2022},
  organization={PMLR}
}

@incollection{yu1997assouad,
  title={Assouad, {Fano}, and {Le Cam}},
  author={Yu, Bin},
  booktitle={Festschrift for Lucien Le Cam: research papers in probability and statistics},
  pages={423--435},
  year={1997},
  publisher={Springer}
}

@article{Li2023interpolation,
    author = {Li, Yicheng and Zhang, Haobo and Lin, Qian},
    title = {Kernel interpolation generalizes poorly},
    journal = {Biometrika},
    volume = {111},
    number = {2},
    pages = {715--722},
    year = {2023},
    month = {08},
    abstract = {One of the most interesting problems in the recent renaissance of the studies in kernel regression might be whether kernel interpolation can generalize well, since it may help us understand the ‘benign overfitting phenomenon’ reported in the literature on deep networks. In this paper, under mild conditions, we show that, for any ε\&gt;0, the generalization error of kernel interpolation is lower bounded by Ω(n−ε). In other words, the kernel interpolation generalizes poorly for a large class of kernels. As a direct corollary, we can show that overfitted wide neural networks defined on the sphere generalize poorly.},
    issn = {1464-3510},
    doi = {10.1093/biomet/asad048},
    url = {https://doi.org/10.1093/biomet/asad048},
    eprint = {https://academic.oup.com/biomet/article-pdf/111/2/715/57467427/asad048.pdf},
}

@book{Wainwright_2019, 
place={Cambridge}, 
series={Cambridge Series in Statistical and Probabilistic Mathematics}, 
title={High-Dimensional Statistics: A Non-Asymptotic Viewpoint}, 
publisher={Cambridge University Press}, 
author={Wainwright, Martin J.}, 
year={2019}, 
collection={Cambridge Series in Statistical and Probabilistic Mathematics}
}

@INPROCEEDINGS{Eykholt2018,
  author={Eykholt, Kevin and Evtimov, Ivan and Fernandes, Earlence and Li, Bo and Rahmati, Amir and Xiao, Chaowei and Prakash, Atul and Kohno, Tadayoshi and Song, Dawn},
  booktitle={2018 IEEE/CVF Conference on Computer Vision and Pattern Recognition},
  title={Robust Physical-World Attacks on Deep Learning Visual Classification},
  year={2018},
  volume={},
  number={},
  pages={1625-1634},
  keywords={Perturbation methods;Roads;Cameras;Visualization;Pipelines;Autonomous vehicles;Detectors},
  doi={10.1109/CVPR.2018.00175}}

@article{Donoho1994,
    author = {Donoho, David L and Johnstone, Iain M},
    title = {Ideal spatial adaptation by wavelet shrinkage},
    journal = {Biometrika},
    volume = {81},
    number = {3},
    pages = {425--455},
    year = {1994},
    month = {09},
    abstract = {With ideal spatial adaptation, an oracle furnishes information about how best to adapt a spatially variable estimator, whether piecewise constant, piecewise polynomial, variable knot spline, or variable bandwidth kernel, to the unknown function. Estimation with the aid of an oracle offers dramatic advantages over traditional linear estimation by nonadaptive kernels; however, it is a priori unclear whether such performance can be obtained by a procedure relying on the data alone. We describe a new principle for spatially-adaptive estimation: selective wavelet reconstruction. We show that variable-knot spline fits and piecewise-polynomial fits, when equipped with an oracle to select the knots, are not dramatically more powerful than selective wavelet reconstruction with an oracle. We develop a practical spatially adaptive method, Risk Shrink, which works by shrinkage of empirical wavelet coefficients. RiskShrink mimics the performance of an oracle for selective wavelet reconstruction as well as it is possible to do so. A new inequality in multivariate normal decision theory which we call the oracle inequality shows that attained performance differs from ideal performance by at most a factor of approximately 2 log n, where n is the sample size. Moreover no estimator can give a better guarantee than this. Within the class of spatially adaptive procedures, RiskShrink is essentially optimal. Relying only on the data, it comes within a factor log2n of the performance of piecewise polynomial and variableknot spline methods equipped with an oracle. In contrast, it is unknown how or if piecewise polynomial methods could be made to function this well when denied access to an oracle and forced to rely on data alone.},
    issn = {0006-3444},
    doi = {10.1093/biomet/81.3.425},
    url = {https://doi.org/10.1093/biomet/81.3.425},
    eprint = {https://academic.oup.com/biomet/article-pdf/81/3/425/26079146/81.3.425.pdf},
}

@InProceedings{Papangelou2019,
author="Papangelou, Konstantinos
and Sechidis, Konstantinos
and Weatherall, James
and Brown, Gavin",
title="Toward an Understanding of Adversarial Examples in Clinical Trials",
booktitle="Machine Learning and Knowledge Discovery in Databases",
year="2019",
publisher="Springer International Publishing",
address="Cham",
pages="35--51",
abstract="Deep learning systems can be fooled by small, worst-case perturbations of their inputs, known as adversarial examples. This has been almost exclusively studied in supervised learning, on vision tasks. However, adversarial examples in counterfactual modelling, which sits outside the traditional supervised scenario, is an overlooked challenge. We introduce the concept of adversarial patients, in the context of counterfactual models for clinical trials---this turns out to introduce several new dimensions to the literature. We describe how there exist multiple types of adversarial example---and demonstrate different consequences, e.g. ethical, when they arise. The study of adversarial examples in this area is rich in challenges for accountability and trustworthiness in ML--we highlight future directions that may be of interest to the community.",
isbn="978-3-030-10925-7"
}

@InProceedings{Javanmard2020,
  title = 	 {Precise Tradeoffs in Adversarial Training for Linear Regression},
  author =       {Javanmard, Adel and Soltanolkotabi, Mahdi and Hassani, Hamed},
  booktitle = 	 {Proceedings of Thirty Third Conference on Learning Theory},
  pages = 	 {2034--2078},
  year = 	 {2020},
  volume = 	 {125},
  series = 	 {Proceedings of Machine Learning Research},
  month = 	 {09--12 Jul},
  publisher =    {PMLR},
  pdf = 	 {http://proceedings.mlr.press/v125/javanmard20a/javanmard20a.pdf},
  url = 	 {https://proceedings.mlr.press/v125/javanmard20a.html},
  abstract = 	 { Despite breakthrough performance, modern learning models are known to be highly vulnerable to small adversarial perturbations in their inputs. While a wide variety of recent \emph{adversarial training} methods have been effective at improving robustness to perturbed inputs (robust accuracy), often this benefit is accompanied by a decrease in accuracy on benign inputs (standard accuracy), leading to a tradeoff between often competing objectives. Complicating matters further, recent empirical evidence suggest that a variety of other factors (size and quality of training data, model size, etc.) affect this tradeoff in somewhat surprising ways. In this paper we provide a precise and comprehensive understanding of the role of adversarial training in the context of linear regression with Gaussian features. In particular, we characterize the fundamental tradeoff between the accuracies achievable by any algorithm regardless of computational power or size of the training data. Furthermore, we precisely characterize the standard/robust accuracy and the corresponding tradeoff achieved by a contemporary mini-max adversarial training approach in a high-dimensional regime where the number of data points and the parameters of the model grow in proportion to each other. Our theory for adversarial training algorithms also facilitates the rigorous study of how a variety of factors (size and quality of training data, model overparametrization etc.) affect the tradeoff between these two competing accuracies. }
}

@article{Audibert2007,
author = {Jean-Yves Audibert and Alexandre B. Tsybakov},
title = {Fast learning rates for plug-in classifiers},
volume = {35},
journal = {The Annals of Statistics},
number = {2},
publisher = {Institute of Mathematical Statistics},
pages = {608--633},
keywords = {‎classification‎, excess risk, fast rates of convergence, minimax lower bounds, plug-in classifiers, Statistical learning},
year = {2007},
doi = {10.1214/009053606000001217},
URL = {https://doi.org/10.1214/009053606000001217}
}

@article{Stone1982,
author = {Charles J. Stone},
title = {Optimal global rates of convergence for nonparametric regression},
volume = {10},
journal = {The Annals of Statistics},
number = {4},
publisher = {Institute of Mathematical Statistics},
pages = {1040--1053},
keywords = {Nonparametric regression, Optimal rate of convergence},
year = {1982},
doi = {10.1214/aos/1176345969},
URL = {https://doi.org/10.1214/aos/1176345969}
}

@InProceedings{Xing2021,
  title = 	 {Adversarially robust estimate and risk analysis in linear regression},
  author =       {Xing, Yue and Zhang, Ruizhi and Cheng, Guang},
  booktitle = 	 {Proceedings of The 24th International Conference on Artificial Intelligence and Statistics},
  pages = 	 {514--522},
  year = 	 {2021},
  pdf = 	 {http://proceedings.mlr.press/v130/xing21c/xing21c.pdf},
  url = 	 {https://proceedings.mlr.press/v130/xing21c.html},
  abstract = 	 { Adversarial robust learning aims to design algorithms that are robust to small adversarial perturbations on input variables. Beyond the existing studies on the predictive performance to adversarial samples, our goal is to understand statistical properties of adversarial robust estimates and analyze adversarial risk in the setup of linear regression models. By discovering the statistical minimax rate of convergence of adversarial robust estimators, we emphasize the importance of incorporating model information, e.g., sparsity, in adversarial robust learning. Further, we reveal an explicit connection of adversarial and standard estimates, and propose a straightforward two-stage adversarial training framework, which facilitates to utilize model structure information to improve adversarial robustness. In theory, the consistency of the adversarial robust estimator is proven and its Bahadur representation is also developed for the statistical inference purpose. The proposed estimator converges in a sharp rate under either low-dimensional or sparse scenario. Moreover, our theory confirms two phenomena in adversarial robust learning: adversarial robustness hurts generalization, and unlabeled data help improve the generalization. In the end, we conduct numerical simulations to verify our theory. }
}

@InProceedings{Scetbon2023,
  title = 	 {Robust linear regression: {G}radient-descent, early-stopping, and beyond},
  author =       {Scetbon, Meyer and Dohmatob, Elvis},
  booktitle = 	 {Proceedings of The 26th International Conference on Artificial Intelligence and Statistics},
  pages = 	 {11583--11607},
  year = 	 {2023},
  pdf = 	 {https://proceedings.mlr.press/v206/scetbon23a/scetbon23a.pdf},
  url = 	 {https://proceedings.mlr.press/v206/scetbon23a.html},
  abstract = 	 {In this work we study the robustness to adversarial attacks, of early-stopping strategies on gradient-descent (GD) methods for linear regression. More precisely, we show that early-stopped GD is optimally robust (up to an absolute constant) against Euclidean-norm adversarial attacks. However, we show that this strategy can be arbitrarily sub-optimal in the case of general Mahalanobis attacks. This observation is compatible with recent findings in the case of classification Vardi et al. (2022) that show that GD provably converges to non-robust models. To alleviate this issue, we propose to apply instead a GD scheme on a transformation of the data adapted to the attack. This data transformation amounts to apply feature-depending learning rates and we show that this modified GD is able to handle any Mahalanobis attack, as well as more general attacks under some conditions. Unfortunately, choosing such adapted transformations can be hard for general attacks. To the rescue, we design a simple and tractable estimator whose adversarial risk is optimal up to within a multiplicative constant of 1.1124 in the population regime, and works for any norm.}
}

@article{Hao2024surprising,
  title={The surprising harmfulness of benign overfitting for adversarial robustness},
  author={Hao, Yifan and Zhang, Tong},
  journal={arXiv preprint arXiv:2401.12236},
  year={2024}
}

@book{johnstone2017,
  title={{Gaussian Estimation: Sequence and Wavelet Models}},
  author={Johnstone, Iain M},
  publisher={Unpublished manuscript},
  year={2017}
}

@article{Xing2022Benefit,
author = {Xing, Yue and Song, Qifan and Cheng, Guang},
title = {Benefit of interpolation in nearest neighbor algorithms},
journal = {SIAM Journal on Mathematics of Data Science},
volume = {4},
number = {2},
pages = {935--956},
year = {2022},
doi = {10.1137/21M1437457},

URL = {

        https://doi.org/10.1137/21M1437457



},
eprint = {

        https://doi.org/10.1137/21M1437457



}
,
    abstract = { In some studies (e.g., [C. Zhang et al. in Proceedings of the 5th International Conference on Learning Representations, OpenReview.net, 2017]) of deep learning, it is observed that overparametrized deep neural networks achieve a small testing error even when the training error is almost zero. Despite numerous works toward understanding this so-called double-descent phenomenon (e.g., [M. Belkin et al., Proc. Natl. Acad. Sci. USA, 116 (2019), pp. 15849--15854; M. Belkin, D. Hsu, and J. Xu, SIAM J. Math. Data Sci., 2 (2020), pp. 1167--1180]), in this paper, we turn to another way to enforce zero training error (without overparametrization) through a data interpolation mechanism. Specifically, we consider a class of interpolated weighting schemes in the nearest neighbors (NN) algorithms. By carefully characterizing the multiplicative constant in the statistical risk, we reveal a U-shaped performance curve for the level of data interpolation in both classification and regression setups. This sharpens the existing result [M. Belkin, A. Rakhlin, and A. B. Tsybakov, in Proceedings of Machine Learning Research 89, PMLR, 2019, pp. 1611--1619] that zero training error does not necessarily jeopardize predictive performances and claims a counterintuitive result that a mild degree of data interpolation actually strictly improves the prediction performance and statistical stability over those of the (uninterpolated) \$k\$-NN algorithm. In the end, the universality of our results, such as change of distance measure and corrupted testing data, will also be discussed. }
}

@article{Hassani2024,
author = {Hamed Hassani and Adel Javanmard},
title = {The curse of overparametrization in adversarial training: {P}recise analysis of robust generalization for random features regression},
volume = {52},
journal = {The Annals of Statistics},
number = {2},
publisher = {Institute of Mathematical Statistics},
pages = {441--465},
keywords = {adversarial training, Gaussian equivalence property, Precise high-dimensional asymptotics, random features models},
year = {2024},
doi = {10.1214/24-AOS2353},
URL = {https://doi.org/10.1214/24-AOS2353}
}

\end{sloppypar}
\end{document}